\documentclass{article}

\newif\ifpaper 

\usepackage[margin=1in]{geometry}

%\paperfalse
\papertrue

%\usepackage{aistats2019}
% If your paper is accepted, change the options for the package
% aistats2019 as follows:
%
%\usepackage[accepted]{aistats2019}
%
% This option will print headings for the title of your paper and
% headings for the authors names, plus a copyright note at the end of
% the first column of the first page.

% If you set papersize explicitly, activate the following three lines:
%\special{papersize = 8.5in, 11in}
%\setlength{\pdfpageheight}{11in}
%\setlength{\pdfpagewidth}{8.5in}

% If you use natbib package, activate the following three lines:
\usepackage[round]{natbib}

\setlength{\parskip}{3pt}%
\setlength{\parindent}{0pt}%

% If you use BibTeX in apalike style, activate the following line:
%\bibliographystyle{apalike}

%%%%%%%%%%%%%%%%%%%%%%%%%%%%%%% Custom package
\usepackage{amsmath}%nosumlimits
\usepackage{amsthm}
\usepackage{amssymb}
\usepackage{bbm}
\usepackage{paralist}
\usepackage{enumitem}
\usepackage{algorithm}
\usepackage{algorithmic}
\usepackage{xcolor}
\usepackage{graphicx}
\usepackage{epstopdf}
\usepackage{subfig}
\usepackage{multirow}
\usepackage{paralist}
\usepackage{cite}
\usepackage[pagebackref=false,breaklinks=true,colorlinks,bookmarks=false,citecolor=blue]{hyperref}

%%%%%%%%%%%%%%%%%%%%%%%%%%%%%%% Custom functions
\newtheorem{theorem}{Theorem}

\newtheorem{lemma}{Lemma}
\newtheorem{proposition}{Proposition}

\DeclareMathOperator{\EXP}{\mathbb{E}}
\renewcommand{\Pr}{\mathbb{P}}
\renewcommand{\cite}{\citep}

\renewcommand{\mid}{\, | \, }
\newcommand{\longmid}{\, \middle| \, }
\newcommand{\note}[1]{[\textcolor{red}{\textit{#1}}]}
\newcommand{\MF}{\bf \sf}

\newcommand{\bayIter}{\text{BI}}
\newcommand{\nonIter}{\text{NBI}}
\newcommand{\fullBayIter}{the iterative algorithm}

\usepackage{cleveref}
\crefname{equation}{}{}
\crefrangeformat{equation}{\textnormal{(#3#1#4)--(#5#2#6)}}
\crefname{figure}{}{}
\crefrangeformat{figure}{\textnormal{(#3#1#4)-#5#2#6}}

\allowdisplaybreaks

\title{Iterative Bayesian Learning for Crowdsourced Regression} %Object Detection}

\author{
Jungseul Ok\thanks{
J. Ok and S. Oh are with
the Coordinated Science Lab at the University of Illinois at Urbana-Champaign, Illinois, USA  (e-mail: \texttt{\{ockjs, swoh\}@illinois.edu}).
%Sewoong Oh is with 
%the Department of Industrial and Enterprise Systems Engineering at
%University of Illinois at Urbana-Champaign, Illinois, USA
%(e-mail: swoh@illinois.edu)
},~~Sewoong Oh\footnotemark[1],~~Yunhun Jang
\thanks{
Y. Jang, J. Shin, and Y. Yi are with
the Department of Electrical Engineering at Korea Advanced Institute of Science and Technology,
Daejeon, South Korea (e-mail: \texttt{\{cirdan, jinwoos, yiyung\}@kaist.ac.kr})
}
,~~Jinwoo Shin\footnotemark[2],~~and~~Yung Yi\footnotemark[2]
}

\begin{document}

% If your paper is accepted and the title of your paper is very long,
% the style will print as headings an error message. Use the following
% command to supply a shorter title of your paper so that it can be
% used as headings.
%
%\runningtitle{I use this title instead because the last one was very long}

% If your paper is accepted and the number of authors is large, the
% style will print as headings an error message. Use the following
% command to supply a shorter version of the authors names so that
% they can be used as headings (for example, use only the surnames)
%
%\runningauthor{Surname 1, Surname 2, Surname 3, ...., Surname n}

\maketitle

% !TEX root =  main.tex

\begin{abstract} 

  Crowdsourcing platforms emerged as popular venues for 
  purchasing human intelligence at low cost for large volume of tasks. 
  As many low-paid workers are prone to give noisy answers, 
  a common practice is to add redundancy by assigning multiple workers to 
  each task and then simply average out these answers.
  % using simple techniques 
  %such as averaging.  
  However, to fully harness the wisdom of the crowd, 
  one needs to learn the heterogeneous quality of each worker. 
  We resolve this fundamental challenge in 
  crowdsourced regression tasks, i.e., %in particular, when
the answer takes continuous labels,
where identifying good or bad workers becomes much more non-trivial compared
to a classification setting of discrete labels.
 %, i.e., for regression tasks.
%such as position of a target object in an image. 
  In particular, %aggregate the collected answers from the crowd, 
  we introduce a Bayesian iterative scheme and
  show that it provably achieves the optimal mean squared error. 
  Our evaluations on synthetic and real-world datasets support our theoretical results and
  show the superiority of the proposed scheme.

\end{abstract}

%%% Local Variables:
%%% mode: latex
%%% TeX-master: "main"
%%% End:

% !TEX root =  main.tex

%\vspace{-0.2cm}
\section{Introduction}

%\note{Crowdsourcing is popular. But, can be noisy. Redundancy.}
Crowdsourcing systems provide a labor market where 
numerous pieces of classification and regression tasks 
are electronically distributed to a crowd of workers, 
who are willing to solve such  human intelligence tasks at a low cost. 
%To a data analyst, 
%such systems provide unprecedented accesses to training datasets 
%at a scale and budget that was not previously feasible.
%Thus obtained training datasets can then be seamlessly integrated into  
%downstream machine learning tasks together with the state-of-the-art classification and regression methods. 
However, because the pay is low and the tasks are tedious, error is common even among those who are willing. 
This is further complicated by abundant spammers trying to make easy money with little effort. 
%\note{In aggregation, estimation of reliability is the key.}
To cope with such noise in the collected data, adding redundancy is a common and powerful strategy widely used in real-world crowdsourcing. 
Each task is assigned to multiple workers and these responses are aggregated by inference algorithms
such as averaging (for real-valued answers) or majority voting (for categorial answers). 
As workers' qualities are heterogeneous, 
such simple approaches can be significantly improved upon by 
%giving more weight to 
re-weighting the answers from reliable workers. %, the weighted averaging or majority voting can significantly improve upon the simple algorithm
Here, the fundamental challenge is identifying such workers, which requires estimating ground truth answers and vice-versa.
Our focus is solving this inference problem, 
when neither true answers nor worker reliabilities are known. 
% such fully {\em unsupervised} inference problems.

For a simpler problem of  {\em classification tasks}, 
where each task asks a worker to choose one label from a discrete set, 
significant advances have been made in the past decade 
\cite{kos2011,liu2012,KO16,shah2016permutation,ZLPCS15,zhang2014} 
based on the model proposed in the seminal work of \cite{dawid1979}. 
Deep theoretical understanding of the model under a 
simple but canonical case of binary classification has led to 
the design of powerful inference algorithms, which significantly improve upon the common practice of majority voting on  
real-world datasets. 
However, neither the model nor the algorithms generalize to 
{\em regression tasks}, where each task asks for a continuous valued assessment, %of infinite possibilities, 
and possibly in multiple dimensions.
%It has remained a mystery if 
%the common practice of averaging can be provably improved upon in theory 
%and can be outperformed in real-world datasets. 
	Despite of the significance 
of the crowdsourced regression evidenced by the empirical studies \cite{pascal15,su2012crowdsourcing,deng2009imagenet,de2014crowdgrader,piech2013tuned},
%and such advance of theoretical understanding on the unsupervised classification,
the theoretical understanding of the crowdsourced regression has remained limited.

To bridge this gap,  
we take a principled approach on this crowdsourced regression problem to theoretically investigate the tradeoff involved. % in the {\em unsupervised} setting.
%\note{I'm not sure this positioning is the best...}
%while on crowdsourced regression, the most existing work is empirical study and theoretical understanding is limited to the simple supervised setting when some answers are revealed in advance.
More precisely, we ask the fundamental question of 
how to achieve the best accuracy given a budget constraint, or 
equivalently how to achieve a target accuracy with minimum budget. 
As in typical crowdsourcing systems, we assume we pay a fixed amount for  each response, 
and thus the budget per task is proportional to the redundancy: how many answers we collect for each task.

\noindent {\bf Contribution.}
%We propose a probabilistic model for crowdsourced regression, where each worker 
%provides a noisy observation on the (continuous valued) ground truths with his/her own noise level.
Inspired by the simplicity of the model in \cite{dawid1979} for crowdsourced classification, 
we propose a simple, yet effective model for crowdsourced regression. 
We introduce a Bayesian Iterative algorithm ({\bayIter}) to solve the inference problem efficiently. 
We provide an upper bound on the error achieved by the proposed {\bayIter}
(Theorem~\ref{thm:quantify}) 
that captures $(i)$ the fundamental tradeoff between redundancy 
%(as measured by how many answers we collect per task) 
and the accuracy, %(as measured by the mean squared error). 
and $(ii)$ the performance loss due to the difficulty in estimating workers' reliability. 
%To overcome these challenges, we first propose a simple, but canonical probabilistic model for crowdsourced regression,
%in which we simplify the workers' noise level into several classes, while workers provides continuous valued answers.
%%We formulate the inference problem to minimize mean squared error (MSE).
%Despite of the simplified noise model, 
%As  the optimal inference algorithm is computationally intractable, 
Further, we prove that it is information theoretically impossible for 
any other algorithm to improve upon {\bayIter}. 
This is achieved by coupling the proposed inference algorithm with 
a carefully constructed oracle estimator, and 
showing that there is no gap in the performance between those two algorithms (Theorem~\ref{thm:optimality}).  
Such strong guarantees are only known for a few other cases % in similar probabilistic inference problems, 
even under more strict assumptions (which we discuss later). 
%while our simulation suggests the optimality of {\bayIter} without the assumptions.
Finally, in numerical evaluation, we confirm our theoretical findings on synthetic and real-world datasets.

\noindent {\bf Related work.} 
Crowdsourcing systems are widely used in practice for a variety of real-world tasks 
%ranging from classification and 
%regression \cite{MKM12} to 
%more complex tasks 
such as 
protein folding \cite{PQIB13},
searching videos \cite{BBMK11, DGK13}, 
ranking \cite{LSD12}, peer assessment \cite{piech2013tuned, goldin2011peering} and
natural language processing \cite{WFY12}. 
However, recent theoretical advances have been focused on  
crowdsourced classification tasks to 
$(a)$ design  algorithms for aggregating answers from multiple workers on the same task;
$(b)$ analyze the performance achieved by such algorithms; 
and $(c)$ identify and compare against the fundamental limit 
\cite{kos2011,KOS13SIGMETRICS,GKM11,zhang2014,ok2016icml,ZPBM12,DDKR13,liu2012,kos2014}. %ZPBM12,DDKR13, ,liu2012,kos2014
In this paper, we theoretically  investigate these fundamental questions for {\em crowdsourced regression}. 

%\note{Please make the following paragraph compact and beautiful :)}
% It is worth to note previous approaches to address real-valued answers from crowdsourcing system.

There has been several novel algorithms recently proposed for the crowdsourced regression. 
\citet{RYZ10} proposed a probabilistic model and a corresponding maximum likelihood estimator, 
but no supporting theoretical or empirical analysis is provided (as the estimator is intractable).
\citet{ZLPCS15} propose a heuristic of 
quantizing the continuous valued answers and 
reducing it to discrete models, i.e.~crowdsourced classification. 
On top of being sensitive to hyperparameter choices such as the quantization level, 
treating the answers as categories loses the fundamental aspect that the answers are given in a metric space where distances are well-defined. 

A related work is \cite{liu2013scoring}, 
in which the authors provide a theoretical understanding in a {\em semi-supervised} setting. 
All workers are first asked golden questions with known answers, 
which is used to estimate all unknown parameters of the workers. 
Then, they are assigned to tasks with unknown answers, and
their responses are aggregated using the estimated parameters. 
As this two phase approach completely de-couples 
the uncertainty in worker parameters and task answers, 
 the analysis is extremely simple and 
%the analysis 
is not applicable to our {\em unsupervised} setting. % where golden questions are not used.

Finally, we remark that the proposed algorithm {\bayIter} 
is a variant of the popular Belief propagation (BP).
%Technically, 
%we extend the analysis of 
%Belief propagation (BP), which  
%is a widely used heuristic for solving inference problems on probabilistic graphical models. 
%such as our crowdsourcing regression model. 
Although BP enjoys numerous empirical successes in various fields
%such as bioinformatics, information retrieval, speech processing, image processing and communications 
\cite{jordan2004graphical}, 
its theoretical analysis has been limited to  a few
instances including community detection \cite{mossel2014}
%crowdsourced classification \cite{ok2016icml},
 and error
correcting codes \cite{kudekar2013spatially}.
%and combinatorial
%optimization \cite{park2015max}.
%Basically, our analysis starts with the locally tree like structure of spar
In particular, those analyses showing the optimality of loopy BP
\cite{mossel2014, ok2016icml} are limited to cases where the corresponding factor graph has only factor degree two.
%\note{Need to check if \cite{kudekar2013spatially} is also the case with factor degree two.}
Our main result (Theorem~\ref{thm:optimality}) extends the horizon of such 
cases where {\bayIter} provably finds the optimal inference %on factor graph with 
under an arbitrary factor degree while the regression problem is more challenging to analyze than 
the discrete models studied in \cite{mossel2014, ok2016icml} as the regression error is unbounded.

%follows the proof strategy initially introduced in  \cite{mossel2014} where
% BP is shown to be optimal for community detection, and generalized in \cite{ok2016icml} for crowdsourced classification problems.

%{\bf[Jungseul should fill in with  technical related work for Theorem 1, but
%I'm not sure there is a related work on this. Let me check....]}. 

%The optimal estimator is computationally intractable, similar to the crowdsourced classification problem \cite{liu2012, ok2016icml}. 
%However, the marginalization is more challenging than the one in the classification problem
%since it is often intractable to just run an algorithm for approximating 
%the marginal probability of continuous variables, e.g., 
%  Belief Propagation is the popular message passing algorithm to approximate the marginalization
%  but the messages corresponds to functions of continuous variables 
%  so that they are expensive to compute and transmit.
%However, our model discretizes the worker's noise level
%and this special structure allows us to
% bypass such challenges by transforming the regression problem into
%the marginalization for discrete noise levels instead of continuous ground truths.

%%% Local Variables:
%%% mode: latex
%%% TeX-master: "main"
%%% End:

% !TEX root =  main.tex
%\vspace{-0.05in}
\section{Problem Formulation}

\subsection{Crowdsourced Regression Model}
The task requester has a set of $n$ regression tasks, denoted by $V=\{1,\,\ldots,n\}$,
where task $i \in V$ is associated with the true position $\mu_i \in \mathbb{R}^d$.
%As a running example, consider object detection 
% where a worker is asked to locate the position 
%of an object of interest. , e.g., the center of a galaxy or the center of a marker in a Cryo-EM image. 
%For intuitive explanation and without loss of generality,
% e.g., 
%we assume that each task
%$i\in V$ is an image with an object at position
%$\mu_i \in \mathbb{R}^d$. 
%We also assume that position $\mu_i$'s are
%arbitrarily given but unknown to the learner. 
To estimate these unknown true positions, we assign the
tasks to a set of $m$ workers, denoted by $W=\{1,\ldots,m\}$ according to a bipartite
graph $G = (V, W, E)$, where edge $(i, u) \in E$ indicates that task
$i$ is assigned to worker $u$.  We also
let
$N_u : = \{i \in V : (i, u) \in E\}$ and
$M_i : = \{u \in W : (i, u) \in E\}$
denote the set of tasks assigned
to worker $u$ and the set of workers to whom task $i$ is assigned, respectively.

When task $i$ is assigned to worker $u$, she provides  
her estimation/guess $A_{iu} \in \mathbb{R}^d$ for the true location $\mu_i$. 
Each worker $u$ 
is parameterized by her noise level $\sigma^2_u $, 
such that the response $A_{iu}$ suffers from 
an additive spherical Gaussian noise with variance $\sigma_u^2$. 
Precisely, conditioned on $\mu_i$ and $\sigma_u^2$, 
 $A_{iu}$ is independently distributed with Gaussian pdf 
$ f_{A_{iu}}(x \mid \mu_i, \sigma^2_u ) = 
\phi (x \mid \mu_i, \sigma^2_u ) := \exp (-{\|x - \mu_i\|^2_2}/(2{\sigma^2_u}) )/{\sqrt{(2\pi \sigma^2_u)^d}}.
$
%\begin{align*}
%	\Pr[a\leq A_{iu}\leq b | \mu_i, \sigma^2_u] \int^a_b
%	\;\; = \;\;  f_{\mathcal{N}} (x \mid \mu_i, \sigma^2_u )~dx,
%\end{align*}
%where $\mathcal{N} (x \mid \mu_i, \sigma^2_u )$ is the  density function with mean $\mu_i$ and variance $\sigma_u^2$, i.e.,
% We assume that each worker $u$'s variance $\sigma^2_u $ is
% independently drawn from the discrete uniform distribution with
% support
% $\mathcal{S}(\varepsilon) = \{\sigma^2_1, ..., \sigma^2_k\} \subset
% (0, \infty)$
% where $\sigma^2_1 < ... < \sigma^2_k$ and
% $\sigma^2_{k'+1}-\sigma^2_{k'} \ge \varepsilon$ for each $k' \le k$.

We assume that each worker $u$'s variance $\sigma^2_u $ is
independently drawn from  a finite set 
%the discrete uniform distribution with finite
%support
$\mathcal{S}= \{\sigma^2_1, ..., \sigma^2_{S}\}$ 
uniformly at random. %For the mathematical rigorousness,
 We further assume that the true position $\mu_i$ is independently drawn 
 from a Gaussian prior distribution $\phi(x \mid \nu_{i}, \tau^2)$ for given mean $\nu_i \in \mathbb{R}^d$ 
 and variance $\tau^2 \in (0, \infty)$, which can be interpreted as a side information on true positions. 
Note that we just take the Gaussian prior for the simple expression and our analysis can be generalized to other distributions, e.g., a uniform distribution on a Euclidean ball.
Our analysis is valid for arbitrarily large $\tau$, i.e., no prior information,
and our numerical experiments assume no knowledge of the prior distribution by taking $\tau \to \infty$.
% Even though this results in mismatched inference algorithm, 
% it achieves impressive results in all numerical experiments 
% and bypasses the knowledge of prior distribution. 
%A natural extension of this model is to assume a bias for each of the workers, 
% which we discuss in 
%Section \ref{sec:conclusion}. 
Theoretical understanding of such a simple but canonical model 
allows us to characterize the tradeoffs involved and 
provides guidelines for designing practical algorithms.
\subsection{Optimal but Intractable Algorithm}

%\subsection{Optimal Algorithm of Intractable Complexity}
\label{sec:MMSE-problem}
% Object Detection Problem}

Under the crowdsourcing model, our goal is to design 
an efficient estimator $\hat{\mu}(A) \in \mathbb{R}^{d \times V}$ of the unobserved
true position $\mu$ from the noisy answers $A := \{A_{iu} : (i,u) \in E\}$
reported by workers. 
In particular, we are interested in minimizing the
average of {\it (expected) mean squared error (MSE)}, i.e.,
\begin{align}   \label{eq:optimization}
\underset{\hat{\mu}: \text{estimator}}{\text{minimize}} 
\quad \frac{1}{n} \sum_{i \in V} \EXP [\text{MSE}(\hat{\mu}_i(A))]
\end{align}
where we define $
\text{MSE} (\hat{\mu}_i(A)) := 
 \EXP  [ \left\|\hat{\mu}_i(A) - \mu_i \right\|^2_2 | A ] %\footnote{$\| \cdot \|_2$ is the Euclidean norm. }
 $  as the MSE conditioned on $A$.
%\begin{align*}
%\end{align*}
%where %the expectation is taken over the probability measure given $A$and 
%Let $\hat{\mu}^*_i (A) := \underset{\hat{\mu}_i}{\arg \min} \EXP  \left[ \left\|\hat{\mu}_i(A) - \mu_i \right\|^2_2 \longmid A \right]$.
%Let $\hat{\mu}^* (A)$ denote the optimal estimator in \eqref{eq:optimization}.
%Then, 
%By rewriting MSE with
Using the equality 
$\left(\hat{\mu}_i(A) - \mu_i \right) = \left(\hat{\mu}_i(A) - \EXP[\mu_i \mid A] \right) +
\left(\EXP[\mu_i \mid A] - \mu_i \right),$
it is straightforward to check that for each $i \in V$, MSE is minimized at % the optimal estimator
the {\it Bayesian} estimator
$\hat{\mu}^*_i (A)  := \EXP [\mu_i \mid A]$, which is
%$\hat{\mu}^*_i (A)$ of \eqref{eq:optimization} is the conditional
%expectation of $\mu_i$ given $A$, i.e., % $\hat{\mu}^*_i (A)  = \EXP [\mu_i \mid A].$
\begin{align} 
\hat{\mu}^*_i (A) 
 = \sum_{\sigma^2_{M_i} \in \mathcal{S}^{M_i}} 
 \bar{\mu}_i\left(A_i, \sigma^2_{M_i} \right)  \Pr [\sigma^2_{M_i} \mid A]  \label{eq:MAP}
% \EXP [\mu_i \mid A_i, \sigma^2_{M_i}]  \cdot \Pr [\sigma^2_{M_i} \mid A]  \label{eq:MAP}
\end{align}
where we let $A_i := \{A_{iu} : u \in M_i\}$ and $ \bar{\mu}_i (A_i, \sigma^2_{M_i} ):=
\EXP [\mu_i \mid A_i, \sigma^2_{M_i}] 
= \bar{\sigma}^2_i \left(\sigma^2_{M_i}\right)
  ( {\nu_i}/{\tau^2} + \sum_{u \in M_i} {A_{iu}}/{\sigma^2_{u}} ) $
%  \label{eq:optimal_mu_given_sigma}
%\end{align} 
with 
$\bar{\sigma}^2_i(\sigma^2_{M_i} ) := ({{1}/{\tau^2} + \sum_{u \in M_i}  {1}/{\sigma^2_{u}}} )^{-1}$.
%\begin{align*}
%\bar{\sigma}^2_i \left(\sigma^2_{M_i} \right) := \frac{1}{\left({\frac{1}{\tau^2} + \sum_{u \in M_i}  \frac{1}{\sigma^2_{u}}} 
%\right)}.
%\end{align*}
%
We provide a derivation of this formula in the supplementary material.
The calculation of the marginal posterior $\Pr[\sigma^2_{M_i} \mid A]$ 
is computationally intractable in general. More formally,
the marginal posterior of $\sigma^2_{M_i}$
%conditional probability of $\sigma^2_{M_i}$ given $A$
%$\Pr[\sigma^2_u \mid A]$ 
%in \eqref{eq:MAP_calc} 
can be calculated by
 marginalizing out
$\sigma^2_{-i} := \{\sigma^2_v : v \in W \setminus M_i \}$ 
from the joint probability of $\sigma^2$, i.e.,
\begin{align} 
  \Pr[\sigma^2_{M_i} \mid A] = \sum_{\sigma^2_{-i} \in  \mathcal{S}^{W \setminus M_i} }  \Pr[\sigma^2 \mid A]  \label{eq:marginal_sum}
\end{align}
which requires exponentially many summations with respect to $m$.
 Thus,
the optimal estimator $\hat{\mu}^*(A)$ in
\eqref{eq:MAP}, requiring the marginal posterior $\Pr[\sigma^2_{M_i} \mid A]$
in \eqref{eq:marginal_sum},
 is {\it  computationally intractable} in general.

\section{%Provable Performance Guarantees on the 
Iterative Bayesian Learning}
\label{sec:result}

We now introduce a computationally tractable scheme, the Bayesian iterative (BI) algorithm,
and 
provide its theoretical guarantees %of the Bayesian iterative algorithm %{\fullBayIter}
under the crowdsourced regression model. 
For its analytic tractablity, 
%To do so, we first describe our proposed task assignment. 
%\noindent{\bf $ (\ell,r)$-regular task assignment.}
%In general, the performance of an estimator 
%depends on  how tasks are assigned to workers. 
we consider a popular 
assignment scheme, referred to as $(\ell,r)$-regular task assignment, 
widely adopted in crowdsourcing \cite{kos2011,ok2016icml}.
The assignment graph $G$ is 
 a random $(\ell,r)$-regular bipartite graph
drawn uniformly at random out of all $(\ell,r)$-regular graphs, 
where each task is assigned to $\ell$ workers
and each worker is assigned $r$ tasks.
Nevetheless, we remark that the BI algorithm is applicable to any (even, non-regular) task assignments.

\subsection{Bayesian Iterative (BI) Algorithm}
 \label{sec:bp}
 
We first factorize the joint probability of $\sigma^2$ in \eqref{eq:marginal_sum} as
%\footnote{The detailed derivation of $\Pr[ \sigma^2 \mid A]$ 
%\ifpaper %
%is in our supplementary material. % due to the space limitation.
%\else
%is given in Appendix~\ref{sec:model_calc2}.
%\fi
%}
%\begin{align}
%\Pr[ \sigma^2 \mid A] %& \propto \Pr[A\mid \sigma^2] =  \prod_{i \in V}  \Pr[A_i \mid \sigma^2_{M_i}] \nonumber \\
%\propto \prod_{i\in V}
% \mathcal{C}_i \left(A_i, \sigma^2_{M_i} \right) \; ,  \label{eq:sub_factor_graph}
%\end{align}
%$\Pr[ \sigma^2 \mid A]  \propto \prod_{i\in V} \mathcal{C}_i \left(A_i, \sigma^2_{M_i} \right)$
\begin{align*}
\Pr[ \sigma^2 \mid A]  \propto \prod_{i\in V} \mathcal{C}_i \left(A_i, \sigma^2_{M_i} \right)
\end{align*}
where 
%we define $\mathcal{C}_i (A_i, \sigma^2_{M_i} )$ as
%$\mathcal{C}_i (A_i, \sigma^2_{M_i} )
%:=  {({\bar{\sigma}^2_i (\sigma^2_{M_i})}/{ \tau^2  \prod_{u \in M_i} (2\pi \sigma^2_u)})^{\frac{d}{2}}} 
%\exp{\left(-\frac{1}{2} \bar{\sigma}^2_i (\sigma^2_{M_i})  \mathcal{D}_i (A_i, \sigma^2_{M_i} ) \right) }
%$
%
%$
%\mathcal{D}_i (A_i, \sigma^2_{M_i} )
%:= 
%\sum_{u \in M_i}  \frac{\|A_{iu} - \nu_{i}\|_2^2}{\sigma_u^2 \tau^2} 
%+
%\sum_{v \in M_i \setminus \{u\}} \frac{\|A_{iu} - A_{iv}\|_2^2}{\sigma_u^2 \sigma_v^2}$.
%$\mathcal{C}_i (\! A_i, \sigma^2_{M_i} \!) 
%:=  {\big(\frac{ 2\pi \cdot \bar{\sigma}^2_i (\sigma^2_{M_i})}{ 2\pi \tau^2_i  \prod_{u \in M_i} (2\pi \sigma^2_u)}\big)^{\frac{d}{2}}} 
%\cdot  \exp \big({
%-\frac{1}{2} \bar{\sigma}^2_i (\sigma^2_{M_i}) \cdot 
%\left(
%\sum_{u \in M_i}  \frac{\|A_{iu} - \nu_{i}\|_2^2}{\sigma_u^2 \tau^2} 
%+
%\sum_{v \in M_i \setminus \{u\}} \frac{\|A_{iu} - A_{iv}\|_2^2}{\sigma_u^2 \sigma_v^2}
% \right)} \big) .
%$
\iffalse
{
\begin{align*} 
&\mathcal{C}_i (A_i, \sigma^2_{M_i} ) \\
&:=  {\Big(\frac{\bar{\sigma}^2_i (\sigma^2_{M_i})}{ \tau^2  \prod_{u \in M_i} (2\pi \sigma^2_u)}\Big)^{\frac{d}{2}}} 
\exp{\left(\mathcal{D}_i (A_i, \sigma^2_{M_i} )   \right) }
\;, \quad \text{and}\\
&\mathcal{D}_i (A_i, \sigma^2_{M_i} ) \\
&\!:=\!
-\frac{\bar{\sigma}^2_i (\sigma^2_{M_i})}{2} \! 
\Bigg( \!
\sum_{u \in M_i}\!\!
  \frac{\|A_{iu} - \nu_{i}\|_2^2}{\sigma_u^2 \tau^2} 
+\!\!\!\!
\sum_{v \in M_i \setminus \{u\}} \!\!\!\!\!\!
 \frac{\|A_{iu} - A_{iv}\|_2^2}{\sigma_u^2 \sigma_v^2}
 \! \Bigg) \;.
\end{align*}
}
\fi
$\mathcal{C}_i (\!A_i, \sigma^2_{M_i}\!) \!:=\! \Big(\!\frac{\bar{\sigma}^2_i (\sigma^2_{M_{i\!}})}{ \tau^2  \prod_{u \in M_i} \! 2\pi \sigma^2_u}\!\Big)^{\!\frac{d}{2}} 
e^{-\mathcal{D}_{i\!}(\!A_{i\!}, \sigma^2_{M_i}\!)}$,
and 
$\mathcal{D}_i (\!A_i, \sigma^2_{M_i}\!) 
\!:=\!
\frac{\bar{\sigma}^2_i (\!\sigma^2_{M_i}\!)}{2}
\Big(
\underset{\scriptscriptstyle u \in M_i}{\sum} \!\!
  \frac{\|\!A_{iu\!} - \nu_{i} \|_2^2}{\sigma_u^2 \tau^2}
+\!\!\!
\underset{\scriptscriptstyle v \in M_i \!\setminus\! \{u\}}{\sum}
 \!\!\!
 \frac{\|\!A_{iu\!} - A_{iv\!}\|_2^2}{\sigma_u^2 \sigma_v^2}
 \! \Big).$

%\begin{align*}
%{\footnotesize
%{\bigg(\frac{\bar{\sigma}^2_i (\sigma^2_{M_i})}{ \tau^2_i  \prod_{u \in M_i} (2\pi \sigma^2_u)}\bigg)^{\frac{d}{2}}}
%\!   \exp{
%\bigg(\!
%-\frac{1}{2} \bar{\sigma}^2_i (\sigma^2_{M_i}) \!\cdot\!
%\bigg(
%\sum_{u \in M_i}  \frac{\|A_{iu} - \nu_{i}\|_2^2}{\sigma_u^2 \tau^2} 
%+\!\!
%\sum_{v \subset M_i \setminus \{u\}} \frac{\|A_{iu} - A_{iv}\|_2^2}{\sigma_u^2 \sigma_v^2}
% \bigg) \! \bigg)} .
% }
%\end{align*}
\ifpaper
We provide a derivation of this formula in the supplementary material.
\else
We provide a derivation of this formula in the appendix. %~\ref{sec:model_calc2}. 
\fi
This factorization of the joint probability of $\sigma^2$ given $A$
 forms a factor graph \cite{jordan1998learning} where each worker $u$'s
variance $\sigma^2_u$ and each task $i$
correspond to a variable and a local factor $\mathcal{C}_i(A_i, \sigma^2_{M_i})$ 
on the set of workers, $M_i$, to whom task $i$ is assigned,
respectively. 
This probabilistic graphical model 
motivates us to use  the popular (sum-product) belief propagation (BP)
algorithm \cite{pearl1982} on the factor graph of
$\Pr[ \sigma^2 | A]$ to approximate
the intractable computation of $\Pr[\sigma^2_{M_i} \mid A]$
in \eqref{eq:marginal_sum}. However, BP is typically used 
for approximating the marginal probability of a single variable 
$\sigma^2_u$, while we need the marginal probability
of a subset of variables $\sigma^2_{M_i}$
depending on each other.
Hence, to approximate the optimal Bayesian estimator in \eqref{eq:MAP},
we build upon BP and 
propose an iterative algorithm (\bayIter)
updating belief $b_i (\sigma^2_{M_i})$ from 
messages $m_{i \to u\!}$ and $m_{u \to i}$ between task~$i$ and worker~$u$:
\begin{align}
%&\text{\it From task $i$ to worker $u$:} \quad 
m^{t+1}_{i \to u} (\sigma^2_u)  
&\propto \sum_{\sigma^2_{M_i \setminus \{u\}}} \!\! \mathcal{C}_i(A_i, \sigma^2_{M_i}) 
\!\!\prod_{v \in M_i \setminus \{u\}} \!\!
m^{t}_{v \to i} (\sigma^2_v) \label{eq:iu-message}\\
%&\text{\it From worker $u$ to task $i$:} \quad
m^{t+1}_{u \to i} (\sigma^2_u) 
 &\propto \prod_{j \in N_{u} \setminus \{i\}} 
m^{t+1}_{j \to u} (\sigma^2_u) \label{eq:ui-message}\\
%b^{t+1}_{u} (\sigma^2_u ) & \propto \prod_{i \in N_{u}} m^{t+1}_{i \to u} (\sigma^2_u) 
%b^{t+1}_{iu} (\sigma^2_u )
%& \propto \prod_{j \in N_u \setminus \{i\}} m^{t+1}_{j \to u} (\sigma^2_{u}) \;,
%&\text{\it Calculating belief:} \quad
b^{t+1}_{i} (\sigma^2_{M_i} ) 
&\propto \mathcal{C}_i (A_i, \sigma^2_{M_i}) \prod_{u \in M_{i}} m^{t+1}_{u \to i} (\sigma^2_u ) 
\label{eq:belief}
\end{align}
where we initialize the messages 
with a trivial constant ${1}/{|\mathcal{S}|}$ 
and normalize the messages and beliefs so that
$\sum_{\sigma^2_u} m^t_{i
  \to u} (\sigma^2_u) =
   \sum_{\sigma^2_u} m^t_{u \to i} (\sigma^2_u) = 
   \sum_{\sigma^2_{M_i}} b^t_{i} (\sigma^2_{M_i}) = 1.$
At the end of $k$ iterations, as an approximation of the optimal Bayesian estimator in \eqref{eq:MAP},
we estimate
 $\hat{\mu}^{{\MF \bayIter}(k)}(A)$ using
\eqref{eq:MAP} with belief $b^{k}_{i}(\sigma^2_{M_i})$
as an approximation of $\Pr[\sigma^2_{M_i} \mid A]$. Formally,
\begin{align} 
\hat{\mu}^{{\MF \bayIter}(k)}_i (A)  
:= \sum_{\sigma^2_{M_i} \in \mathcal{S}^{M_i}} 
\bar{\mu}_i (A_i, \sigma^2_{M_i})
  b^{k}_{i}(\sigma^2_{M_i}) \;.
  \label{eq:BP-estimator}
\end{align}
Although the messages and their updates are the same as those of 
the typical BP, %for approximating the marginal probability 
%of a single variable. However, differently from the typical BP,
we use a specific form of belief in \eqref{eq:belief}
for approximating the marginal probability 
of a {\it subset} of dependent variables.
This allows us to provide sharp performance guarantees in Section \ref{sec:result},
while the typical BP for {\it single} variable marginalization has little known provable guarantees.

%we need to approximate the marginal probability of a subset of variables $\sigma^2_{M_i}$ depending on each other.
%We hence adjust BP for the crowdsourced regression
%and use a special form of belief in \eqref{eq:belief},
%which allow us to provide sharp performance guarantees in Section \ref{sec:result}, while the typical BP for single variable has little known provable guarantees.

 %after sufficient iterations
%\cite{pearl1982}, i.e., 
 % within $2|E|$ iterations. Hence the
%BP algorithm has the following property:
%\begin{property} \label{prop:tree-bp} If assignment graph $G$ is a
 % tree so that the corresponding factor graph is a tree as well, then
We note that if the factor graph is a {\it tree}, i.e., having no loop, 
then it is not hard to check that {\fullBayIter} calculates
the exact value of the marginal posterior of multiple variables $\sigma^2_{M_i}$ 
%as the typical BP does that of single variable $\sigma^2_{u}$  \cite{pearl1982}
 since 
 \begin{align*} {
  \Pr [\sigma^2_{M_i} \mid A ]
 \propto \mathcal{C}_i (A_i, \sigma^2_{M_i})  
\prod_{u \in M_i} \Pr[\sigma^2_u \mid A_{-i}]}
\end{align*}
where $A_{-i} := A \setminus A_i$.
%\begin{align*}
%\Pr [\sigma^2_{M_i} \mid A ]
%%&= \frac{f_{A_i} [y_i \mid \sigma^2_{M_i}]}{f_{A_i} [y_i \mid A_{-i} = y_{-i}]} \cdot \prod_{u \in M_i} \Pr[\sigma^2_u \mid A_{-i} = y_{-i}] \\
%& \propto \mathcal{C}_i (A_i, \sigma^2_{M_i})  
%\prod_{u \in M_i} \Pr[\sigma^2_u \mid A_{-i}] 
%\end{align*}
% 
%
 More formally, if the assignment graph $G$ is a tree from task $i$ with depth $2k$,
 then we have $b^{t}_{i} (\sigma^2_{M_i}) = \Pr[\sigma^2_{M_i}  \mid A]$  for all $t \ge k$.
%\begin{align} \label{eq:prop:tree-bp}
%%\text{if $G$ is tree,} ~
%b^{t}_{i} (\sigma^2_{M_i}) 
%~=~ 
%\Pr[\sigma^2_{M_i}  \mid A] \quad \text{\it for all $t \ge k$} \; .
%\end{align}
However, for general graphs with loops, the typical BP has no
guarantee on neither the approximation error
nor the convergence of BP
while it has been successfully applied to many applications \cite{murphy1999loopy, yanover2006linear}.
Perhaps surprisingly, we can analytically explain such empirical success for crowdsourced regression
with strong guarantees in the following section.

\subsection{Quantitative Performance Guarantee}
\label{sec:bp_quanti}
We first present a performance guarantee of 
{\bayIter} estimator
that is close to that of an {\em oracle} estimator.
The proof is in Section~\ref{sec:quantify_pf}.
\begin{theorem}\label{thm:quantify}
  Consider the crowdsourced regression model
  with $\mathcal{S} = \{\sigma^2_1, ..., \sigma^2_S\}$
  and a random $(\ell, r)$-regular graph $G$ consisting of $n$ tasks and 
$(\ell/r)n$ workers.
For given $\varepsilon, \sigma^2_{\min}, \sigma^2_{\max} > 0$ and $\ell \ge 2$, 
if  \textnormal{(i)}
$|\sigma^2_s - \sigma^2_{s'} | > \varepsilon $ 
and $\sigma^2_{\min} \le \sigma^2_s \le \sigma^2_{\max}$ for all $1 \le s \neq s' \le S$, 
and \textnormal{(ii)} $2 \le r, k \le \log \log n$,
 then for sufficiently large $n $, 
{\bayIter} in \eqref{eq:BP-estimator} with $k$ iterations
 achieves  \begin{subequations} \begin{align}
 \EXP \bigg[\frac{1}{n} \sum_{i \in V} 
 \textnormal{MSE} (\hat{\mu}_i^{{\MF \bayIter}(k)}(A)) 
 \bigg]  
&\le  \frac{d}{n}  \sum_{i \in V}  \EXP \left[  
\bar{\sigma}^2_i \left(\sigma^2_{M_i} \right) \right] 
  \label{eq:quantify_a} \\
&\quad  + 
\mathcal{E}_{\ell, \mathcal{S}}
\ell^{1/4}
 \Big(
 {4 \exp\Big({-\frac{\varepsilon^2 r}{8(8\varepsilon+ 1) \sigma^2_{\max}} } \Big) + 2^{-k}}
 \Big)^{1/4} \!\! \label{eq:quantify_b}%
\end{align} \label{eq:quantify}%
\end{subequations} where
$\mathcal{E}_{\ell, \mathcal{S}} := 
2d 
{(\frac{1}{\tau^2} + \ell  \frac{{\sigma}^2_{\max}}{\sigma^4_{\min}} )}
  {
  (\frac{1}{\tau^2} + \frac{\ell}{\sigma^2_{\min}} )^{-2}
  }
$ and
 the expectation is taken w.r.t. $G$ and $A$.
%\begin{align*} % \label{eq:epsilon_def}
%\mathcal{E}_{\ell, \mathcal{S}} := 
%%\sqrt{ d(2+d) }
%2d
% \cdot 
%{\left(\frac{1}{\tau^2} + \ell \frac{{\sigma}^2_{\max}}{\sigma^4_{\min}} \right)}
%  {
%  \left(\frac{1}{\tau^2} +\ell \frac{1}{\sigma^2_{\min}} \right)^{-2}
%  }
% \;.
%\end{align*}
%where $\sigma^2_{\max} := \max_{\sigma^2_{s} \in \mathcal{S}} \sigma^2_s.$
\end{theorem}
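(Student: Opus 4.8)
The plan is to couple {\bayIter} with an \emph{oracle} estimator that is handed the true variance configuration, $\hat{\mu}^{\text{ora}}_i(A) := \bar{\mu}_i(A_i, \sigma^2_{M_i})$ with the realized $\sigma^2_{M_i}$. Since $\mu_i \mid A_i, \sigma^2_{M_i} \sim \mathcal{N}(\bar{\mu}_i, \bar{\sigma}^2_i I_d)$ and $\mu_i$ is a priori independent of $(A_{-i},\sigma^2_{-i})$, we have $\hat{\mu}^{\text{ora}}_i = \EXP[\mu_i \mid A,\sigma^2]$ with conditional MSE exactly $d\,\bar{\sigma}^2_i(\sigma^2_{M_i})$, so $\frac1n\sum_i \EXP[\text{MSE}(\hat{\mu}^{\text{ora}}_i)]$ reproduces the first term \eqref{eq:quantify_a}. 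The first step is to note that, because $\hat{\mu}^{{\MF \bayIter}(k)}_i$ is a function of $A$ alone while $\hat{\mu}^{\text{ora}}_i=\EXP[\mu_i\mid A,\sigma^2]$, conditioning on $(A,\sigma^2)$ kills the cross term and yields the exact Pythagorean identity
\[
\EXP[\text{MSE}(\hat{\mu}^{{\MF \bayIter}(k)}_i)] = d\,\EXP[\bar{\sigma}^2_i(\sigma^2_{M_i})] + \EXP\big[\|\hat{\mu}^{{\MF \bayIter}(k)}_i - \hat{\mu}^{\text{ora}}_i\|_2^2\big].
\]
Everything then reduces to bounding the averaged coupling gap $\frac1n\sum_i \EXP\|\hat{\mu}^{{\MF \bayIter}(k)}_i - \hat{\mu}^{\text{ora}}_i\|_2^2$ by \eqref{eq:quantify_b}.

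Next I would decompose this gap. Writing $q_i := 1 - b^k_i(\sigma^2_{M_i})$ for the belief mass that {\bayIter} places away from the true configuration, and observing that the true-configuration summand in \eqref{eq:BP-estimator} matches $\hat{\mu}^{\text{ora}}_i$ exactly, the triangle inequality gives $\|\hat{\mu}^{{\MF \bayIter}(k)}_i - \hat{\mu}^{\text{ora}}_i\| \le \Delta_i\, q_i$, where $\Delta_i := \max_{\sigma^2_{M_i}} \|\bar{\mu}_i(A_i,\sigma^2_{M_i}) - \hat{\mu}^{\text{ora}}_i\|$ is the worst-case deviation of the posterior mean over configurations. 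Applying the Cauchy--Schwarz inequality separates these two quantities. The fourth moment of $\Delta_i$ is a direct Gaussian moment computation: $\bar{\mu}_i$ is affine in the $A_{iu}$ with weights $\bar{\sigma}^2_i/\sigma^2_u \le (\tfrac1{\tau^2}+\tfrac{\ell}{\sigma^2_{\min}})^{-1}\sigma^{-2}_{\min}$, each $A_{iu}$ has variance at most $\sigma^2_{\max}$, and collecting the $\ell$ contributions and taking the appropriate root collapses to the prefactor $\mathcal{E}_{\ell,\mathcal{S}}\,\ell^{1/4}$. The belief error $\EXP[q_i]$ then supplies the remaining factor; the fractional exponents $\ell^{1/4}$ and the outer power $1/4$ in \eqref{eq:quantify_b} are precisely the bookkeeping of these Cauchy--Schwarz splits.

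The heart of the proof is the concentration estimate $\EXP[q_i] \le \big(4\exp(-\varepsilon^2 r / (8(8\varepsilon+1)\sigma^2_{\max})) + 2^{-k}\big)^{1/2}$. Here I would exploit locality: for a random $(\ell,r)$-regular graph with $k,r \le \log\log n$, the radius-$2k$ neighborhood of task $i$ is a tree with probability $1-o(1)$, since the expected number of short cycles vanishes; on a tree the remark following \eqref{eq:belief} shows $b^k_i(\cdot)=\Pr[\sigma^2_{M_i}\mid A]$ exactly, so it suffices to control the genuine tree posterior. I would then set up a density-evolution recursion for the error of the messages $m^{t}_{u\to i}$ relative to a point mass at the true variance. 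Each level contributes a per-worker term: from the $r-1$ sibling tasks of worker $u$ the residuals furnish $\Theta(r)$ Gaussian samples whose log-likelihood ratio between $\sigma^2_u$ and any competitor separated by more than $\varepsilon$ has mean of order $\varepsilon^2 r/\sigma^2_{\max}$ and concentrates, producing the $\exp(-\varepsilon^2 r/\cdots)$ factor (the constant $8(8\varepsilon+1)\sigma^2_{\max}$ being the explicit output of a Bernstein-type tail together with the second-moment step that converts the mean separation into an error probability and introduces the square root). The residual influence of the uninformative initialization $1/|\mathcal{S}|$ at the depth-$2k$ leaves contracts by a factor at most $1/2$ per level, using $\ell\ge 2$ and the separation, giving the geometric $2^{-k}$ term once the recursion is solved.

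The main obstacle is this recursion, and it is exactly where crowdsourced \emph{regression} departs from the discrete analyses of \cite{mossel2014,ok2016icml}: the per-task information about $\sigma^2_u$ is carried by the continuous residual $\|A_{iu}-\mu_i\|^2$, whose law is itself uncertain because the neighbors' means and variances must be inferred recursively, and whose Gaussian tails make both $\Delta_i$ and the message errors \emph{unbounded}. Establishing both the per-level concentration and the contraction factor $\le 1/2$ therefore demands careful moment and truncation control of these residuals uniformly over the random tree, in place of the bounded-likelihood arguments available in the classification setting. I expect the bulk of the technical work to lie in making this variance-estimation concentration explicit and in verifying that the contraction survives arbitrary factor degree $\ell$.
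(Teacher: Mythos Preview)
Your oracle decomposition and the Cauchy--Schwarz separation of the ``geometry'' factor $\Delta_i$ from the ``belief error'' $q_i = 1 - b^k_i(\tilde\sigma^2_{M_i})$ match the paper's argument (your Pythagorean identity is in fact a slightly cleaner packaging of the expansion behind \eqref{eq:MSE_BP_bdd}). Where you diverge---and where the plan has a real gap---is in the origin of the two terms inside the $1/4$-power.

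The $2^{-k}$ does \emph{not} come from a contraction of the uninformative initialization through $k$ BP levels. It is simply the probability that the depth-$(2k{+}1)$ neighborhood $G_{\tau,2k+1}$ of a task $\tau$ fails to be a tree: for a random $(\ell,r)$-regular bipartite graph with $r,k\le\log\log n$ one has $\Pr[G_{\tau,2k+1}\text{ not a tree}]\le 3(\ell r)^{2k+2}/n\le 2^{-k}$ for large $n$, and on that event one uses the trivial bound $q_\tau\le 1$. On the tree event, BP is exact, so $q_\tau=\Pr[\sigma^2_{M_\tau}\neq\tilde\sigma^2_{M_\tau}\mid A_{\tau,2k+1}]$; a union bound over $u\in M_\tau$ supplies the factor $\ell$. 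The remaining bound $\Pr[\sigma^2_u\neq\tilde\sigma^2_u\mid A_{\tau,2k+1}]\le 4\exp(-\varepsilon^2 r/(8(8\varepsilon+1)\sigma^2_{\max}))$ is \emph{not} obtained by any recursion either: one constructs a crude classifier $\hat s^\dagger_u$ using only the depth-$2$ data $A_{u,2}$ (for each of the worker's $r$ tasks, take the squared residual between $u$'s answer and the average of the other $\ell-1$ workers' answers, then average), and Bernstein plus Hoeffding on this empirical variance give the stated tail. Since the Bayes posterior can only do better than $\hat s^\dagger_u$, the same bound transfers. No recursion, no depth beyond $2$, is needed (Lemma~\ref{lem:concentration}).

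The density-evolution/per-level contraction you outline is exactly the machinery of Theorem~\ref{thm:optimality} (Lemma~\ref{lem:correlation-decay}), not of Theorem~\ref{thm:quantify}, and there it requires the extra hypotheses $|\mathcal{S}|=2$, $\sigma^2_{\max}\le 2\sigma^2_{\min}$, and $r\ge C_{\ell,\varepsilon}$. Under Theorem~\ref{thm:quantify}'s weaker hypotheses (arbitrary $S$, no ratio constraint, $r\ge 2$), your asserted contraction ``$\le 1/2$ per level, using $\ell\ge 2$ and the separation'' is unsubstantiated; the paper does not claim it and I do not see how to obtain it without those additional assumptions. So while your high-level decomposition is right, your plan for bounding $\EXP[q_i]$ conflates the two theorems and would not go through as stated.
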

We provide three interpretations of Theorem~\ref{thm:quantify}. 
First, consider an oracle estimator that knows the 
hidden variances $\sigma^2_u$'s 
and makes optimal inference as  
$\hat{\mu}_i^{\MF ora}(A, \sigma^2) := \EXP[\mu_i \mid A, \sigma^2] = \bar{\mu}_i\left(A_i, \sigma^2_{M_i} \right)$.
%\begin{align}
%\label{eq:oracle}
%\hat{\mu}_i^{\MF ora}(A, \sigma^2) := \EXP[\mu_i \mid A, \sigma^2] = \bar{\mu}_i\left(A_i, \sigma^2_{M_i} \right) \;.
%\end{align}
This gives the MSE of $\hat{\mu}_i^{\MF ora}(A, \sigma^2)$: % for any given graph $G$: 
\begin{align*} 
 \EXP\bigg[ \frac{1}{n} \sum_{i \in V} \text{MSE}(\hat{\mu}_i^{\MF{ora}}(A, \sigma^2))  \bigg]
= \frac{d}{n} \sum_{i \in V} \EXP \left[  \bar{\sigma}^2_i \left(\sigma^2_{M_i} \right)  \right] \;.
\end{align*} 
Note that the oracle estimator $\hat{\mu}^{\MF ora}$ always 
outperforms even the optimal estimator $\hat{\mu}^{*}$ in \eqref{eq:MAP}, 
%due to the free access to the hidden workers' variances.
% , since the
% oracle estimator exploits optimally the free access to the hidden
% workers' variances which the optimal estimator doesn't have.
providing a lower bound on the MSE of any estimator.  
This coincides with \eqref{eq:quantify_a} in our bound, % \eqref{eq:quantify}, 
implying that the gap \eqref{eq:quantify_b} to the oracle performance \eqref{eq:quantify_a} quantifies the {\em difficulty} in identifying reliable workers.
We stress that considering a weaker oracle that captures the difficulty in estimating worker reliability,
should give a tighter lower bound than \eqref{eq:quantify_a}. % the term $(b)$. 
This is stated precisely in the following section (see Theorem~\ref{thm:optimality}). 
% corresponds to the
%performance of the oracle estimator, because using \eqref{eq:oracle},

%where the expectation is taken with respect to the distribution of
%$A$ and $\sigma^2$. 
Second, for sufficiently large $n$, when the number $r$ of per-worker tasks
and the total iterations $k$ grow with $n$, 
the performance of {\bayIter} quickly approaches  
that of the oracle estimator, %which is the fundamental limit, 
as \eqref{eq:quantify_b} vanishes exponentially. 
%, i.e.,  
%\begin{align*}
%  \lim_{n \to \infty}    
%  &   \EXP  \left[\frac{1}{n} \sum_{i \in V} 
%    \textnormal{MSE} (\hat{\mu}_i^{{\MF \bayIter}(k)}(A)) 
%    \right] 
%     = 
%    \frac{d}{n}  \sum_{i \in V}  \EXP  \left[  
%    \bar{\sigma}^2_i \left(\sigma^2_{M_i} \right) \right]
%    \;.
%\end{align*}
This is because under $(\ell,r)$-regular task assignment, for increasing
$r$ with the total number of tasks $n$, {\fullBayIter} accurately infers
all workers' variances and thus optimally estimates the true positions
$\mu.$ Note that the above performance limit holds for any
$r = \omega (1),$ implying that a reasonable number of tasks per worker
is enough to achieve a performance close to the oracle bound. % optimality of {\fullBayIter}.
Third, we compare {\bayIter} with simple averaging, i.e.,
$\hat{\mu}_i^{\MF avg}(A):=  \sum_{u \in M_i} A_{iu}/{|M_i|},$ which achieves
%to quantify the performance gap between BP and a vanilla approach. 
%It is not hard to see $\hat{\mu}_i^{\MF avg}(A)$ performs as in:
\begin{align*}
\EXP  \bigg[ \frac{1}{n} \sum_{i \in V} \text{MSE}(\hat{\mu}_i^{\MF{avg}}(A))  \bigg]
 = 
\frac{d}{n} \sum_{i \in V} 
 \EXP  \bigg[\frac{\sum_{u \in M_i} \sigma^2_u}{|M_i|^2}\bigg] \;.
\end{align*} 
%where the expectation is taken with respect to the distribution of $A$.
Note that $\EXP[\text{MSE}(\hat{\mu}^{\MF avg}_i(A))]$ increases proportionally to the
arithmetic mean of variances of workers assigned to each task, while
%$\text{MSE}(\hat{\mu}^{\MF ora}_i(A))$ or 
$\EXP[\text{MSE}(\hat{\mu}^{{\MF \bayIter}(k)}_i(A))]$ 
is proportional to the harmonic
mean of variances of workers and prior, i.e.,
$\mathbb{E} [\text{MSE}(\hat{\mu}_i^{\MF{avg}}(A)) ] \ge \mathbb{E}
[\text{MSE}(\hat{\mu}_i^{\MF{\bayIter}(k)}(A)) ].$
This gap can be made arbitrarily large by 
increasing the difference between the maximum and minimum variances of workers.
%Since the gap between the arithmetic and harmonic means can be
%made arbitrarily large by involving a single worker with low variance 
For example, if a single worker $u \in M_i$ assigned to task $i$
has high accuracy, i.e., $\sigma^2_u \simeq 0$, and the others' variances are $x$'s,
 then 
$\EXP[\text{MSE}(\hat{\mu}_i^{\MF{avg}}(A)) ] \simeq ({d}/{|M_i|})   x$ but 
$\EXP[\text{MSE}(\hat{\mu}_i^{\MF{\bayIter}(k)}(A)) ] \simeq 0.$
%In other words, BP estimator is tolerant of baleful worker while 
%the simple average is not.
Hence, the existence of a single worker with high precision in each task can reduce MSE significantly.
Our estimator iteratively refines its belief and identifies those good workers, when $r$ is sufficiently large. 

%However, since we know neither the true positions nor the workers' variances at the beginning,
%identify good workers is hard, in particular, when the number of tasks assigned to a worker is small.

\subsection{Relative Performance Guarantee}
\label{sec:bp_opt}
% \setlength{\abovedisplayskip}{7pt}%
 %\setlength{\belowdisplayskip}{5pt}%

%As the comparisons to the oracle estimator becomes loose for small 

We present the relative performance of {\bayIter} by comparing to the optimal estimator,
in particular, when the quantitative guarantee in Theorem~\ref{thm:quantify} is not tight, i.e., $r$ is small and thus
estimating reliability is difficult. 
%The proof is in Section~\ref{sec:proof-optimality}.
\begin{theorem}\label{thm:optimality} 
  Consider the crowdsourced regression model
  with $\mathcal{S} = \{\sigma^2_{\min}, \sigma^2_{\max}\}$
  and a random $(\ell, r)$-regular graph $G$ consisting of $n$ tasks and 
  $(\ell/r)n$ workers.
  For given $\varepsilon > 0$ and $\ell$, 
  there exists a constant $C_{\ell, \varepsilon}$, depending on only $\ell$ and $\varepsilon$, such that if 
  \textnormal{(i)} $\sigma^2_{\min}+ \varepsilon   \le \sigma^2_{\max} \le 2\sigma^2_{\min} $,
  and
  \textnormal{(ii)}
  $C_{\ell, \varepsilon} \le r \le \log \log n$,
  then 
  {\bayIter} in \eqref{eq:BP-estimator} with $k= \log \log n$ iterations achieves
\begin{align}
\!\!\!\! \EXP \bigg[\frac{1}{n} \sum_{i \in V} \Big( \textnormal{MSE} (\hat{\mu}^*_i(A)) - 
    \textnormal{MSE} (\hat{\mu}_i^{{\MF \bayIter}(k)}(A))  \Big)
    \bigg]  \to 0
    % \to 0  \quad \text{as} \quad n \to \infty \;, 
    \label{eq:main_relative} 
\end{align}
%  \begin{align*}
%       \EXP   \left[\frac{1}{n} \sum_{i \in V} \left| \textnormal{MSE} (\hat{\mu}^*_i(A)) - 
%      \textnormal{MSE} (\hat{\mu}_i^{{\MF \bayIter}(k)}(A))  \right|
%	\right]  
%       ~\le~ %C_{\mathcal{S}, \ell} \cdot 2^{-k}
%       \mathcal{E}_{\ell, \mathcal{S}}  \cdot  2^{-k} 
%       \cdot 6\ell \cdot S^{2\ell} \cdot  
%      \left(2  + \frac{d\sigma^6_{\max}}{\ell^3 \sigma^2_{\min}} \right)
%  \end{align*} 
  %  \textnormal{(iii)} $k \le \log \log n$, then for sufficiently large $n$,
as $n \to \infty$. The expectation here is taken w.r.t. the distribution of $G$ and $A$.
%  and $\mathcal{E}_{\ell, \mathcal{S}}$ is defined  in \eqref{eq:epsilon_def}.
\end{theorem}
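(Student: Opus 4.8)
The plan is to show that \bayIter\ is asymptotically indistinguishable from the optimal Bayesian estimator $\hat{\mu}^*$, by reducing the MSE gap to an $L^2$ distance and then exploiting the local tree structure of the random $(\ell,r)$-regular graph. Since $\hat{\mu}^*_i(A) = \EXP[\mu_i \mid A]$ is the $L^2$-projection of $\mu_i$ onto functions of $A$, the Pythagorean identity gives, conditionally on $A$ and hence in expectation, $\textnormal{MSE}(\hat{\mu}_i^{{\MF \bayIter}(k)}(A)) - \textnormal{MSE}(\hat{\mu}^*_i(A)) = \|\hat{\mu}_i^{{\MF \bayIter}(k)}(A) - \hat{\mu}^*_i(A)\|_2^2$, so \eqref{eq:main_relative} is equivalent to showing $\EXP[\frac{1}{n}\sum_i \|\hat{\mu}_i^{{\MF \bayIter}(k)}(A) - \hat{\mu}^*_i(A)\|_2^2] \to 0$. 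Both estimators are weighted averages of the \emph{same} conditional means $\bar{\mu}_i(A_i, \sigma^2_{M_i})$ from \eqref{eq:MAP} and \eqref{eq:BP-estimator}, the only difference being the weights $b^k_i(\sigma^2_{M_i})$ versus the true marginal $\Pr[\sigma^2_{M_i}\mid A]$. I would therefore bound this distance by $\big(\max_{\sigma^2_{M_i}} \|\bar{\mu}_i(A_i,\sigma^2_{M_i})\|_2\big)\cdot \sum_{\sigma^2_{M_i}}|b^k_i(\sigma^2_{M_i}) - \Pr[\sigma^2_{M_i}\mid A]|$, controlling the first factor via the fact that $\bar{\mu}_i$ is a convex combination of $\nu_i$ and the Gaussian responses $A_{iu}$ and thus has bounded moments of every order. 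This reduces the theorem to showing that $b^k_i$ converges in total variation to the true marginal, in a moment-weighted, graph-averaged sense.

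Next I would invoke the local tree structure. Because $\ell, r, k \le \log\log n$, the depth-$2k$ neighborhood $\mathcal{T}_i$ of a uniformly chosen task contains at most $(\ell r)^{O(k)} = n^{o(1)}$ vertices, so by the standard second-moment count of short cycles in the configuration model, $\mathcal{T}_i$ is a tree with probability $1-o(1)$. On this event the exactness of \bayIter\ on trees (noted just before the theorem) gives $b^k_i(\sigma^2_{M_i}) = \Pr[\sigma^2_{M_i}\mid A_{\mathcal{T}_i}]$, the \emph{local} posterior that conditions only on the responses inside $\mathcal{T}_i$. On the complementary non-tree event, whose probability is $o(1)$, I would not control the belief directly but bound its contribution by Cauchy--Schwarz, $\EXP[\|\hat{\mu}_i^{{\MF \bayIter}(k)} - \hat{\mu}^*_i\|_2^2\,\mathbbm{1}[\text{non-tree}]] \le \sqrt{\EXP[\|\hat{\mu}_i^{{\MF \bayIter}(k)} - \hat{\mu}^*_i\|_2^4]}\,\sqrt{\Pr[\text{non-tree}]}$, using the bounded fourth moment of $\bar{\mu}_i$; this is precisely the step where the \emph{unboundedness} of the regression responses (absent in the discrete settings of \cite{mossel2014,ok2016icml}) forces explicit moment control instead of a trivial $\ell^\infty$ bound.

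The heart of the argument is then to show that the local posterior converges to the global one, i.e.\ $\EXP\big[\sum_{\sigma^2_{M_i}}|\Pr[\sigma^2_{M_i}\mid A_{\mathcal{T}_i}] - \Pr[\sigma^2_{M_i}\mid A]|\big] \to 0$. Here I would use the tree Markov property: conditioned on the variances $\sigma^2_{\partial\mathcal{T}_i}$ of the workers and tasks on the boundary of $\mathcal{T}_i$, the root variables $\sigma^2_{M_i}$ are independent of every response outside $\mathcal{T}_i$. Thus the extra information in $A\setminus A_{\mathcal{T}_i}$ can act only through the boundary, and the local-versus-global discrepancy is dominated by the \emph{influence} of the depth-$2k$ boundary on the root. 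Following the recursive coupling strategy of \cite{mossel2014} and its crowdsourcing generalization \cite{ok2016icml}, I would set up the message recursion on the tree and show it contracts, so that this boundary influence decays geometrically in $k = \log\log n \to \infty$. The assumptions $\sigma^2_{\min}+\varepsilon \le \sigma^2_{\max}\le 2\sigma^2_{\min}$ and $r \ge C_{\ell,\varepsilon}$ are exactly what make the recursion contract: the bounded gap keeps the per-level signal weak, while sufficiently many tasks per worker pin down the boundary variances locally, so that revealing them exactly changes the root belief negligibly.

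I expect the contraction estimate in the last paragraph to be the main obstacle. Unlike \cite{mossel2014,ok2016icml}, whose factor graphs have factor degree two, here each task factor $\mathcal{C}_i(A_i,\sigma^2_{M_i})$ couples all $\ell$ incident worker variables, so the tree recursion acts on distributions over $\mathcal{S}^{M_i}$ rather than on scalar log-likelihood ratios; establishing a uniform contraction for this higher-degree, vector-valued recursion---while simultaneously carrying the unbounded Gaussian moments through every bound---is the crux, and is what dictates the admissible regime of $r$ and the variance gap. The remaining steps (the Pythagorean reduction, the tree-likeness count, and the moment bounds on $\bar{\mu}_i$) are comparatively routine.
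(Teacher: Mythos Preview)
Your overall plan is close to the paper's, and the Pythagorean reduction in your first paragraph is correct and arguably cleaner than the paper's treatment. The tree-likeness count, the moment control on $\bar{\mu}_i$, and the identification that the crux is a contraction of the tree recursion are all on target.

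The one substantive difference is in how you handle the intractable global posterior $\Pr[\sigma^2_{M_i}\mid A]$. You propose to compare the local posterior $\Pr[\sigma^2_{M_i}\mid A_{\mathcal{T}_i}]$ directly to the global one, arguing that the gap is ``dominated by the influence of the boundary.'' Formalizing this requires correlation decay \emph{uniformly over all boundary configurations}: since both the local and global posteriors are convex combinations of the same family $\{\Pr[\sigma^2_{M_i}\mid A_{\mathcal{T}_i},\sigma^2_{\partial\mathcal{T}_i}=s]\}_s$ (with different, and in the global case intractable, mixing weights), bounding their difference forces you to control $\sup_s|\Pr[\cdot\mid A_{\mathcal{T}_i},\sigma^2_{\partial}=s]-\Pr[\cdot\mid A_{\mathcal{T}_i}]|$. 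The paper sidesteps this entirely by introducing a \emph{weak oracle} $\hat{\mu}^{{\MF ora}(k)}_\tau(A):=\EXP[\mu_\tau\mid A,\sigma^2_{\partial W_{\tau,2k+1}}]$ that reveals the \emph{true} boundary variances. Because conditioning on extra information can only lower MSE, this oracle sandwiches the optimal estimator, $\EXP[\text{MSE}(\hat{\mu}^{{\MF ora}(k)})]\le\EXP[\text{MSE}(\hat{\mu}^*)]\le\EXP[\text{MSE}(\hat{\mu}^{{\MF\bayIter}(k)})]$, so it suffices to bound the gap between {\MF\bayIter} and the oracle. On the tree event both of these are tree-local quantities computed from $A_{\tau,2k+1}$, and the correlation-decay lemma (Lemma~\ref{lem:correlation-decay}) then only needs to compare the root posterior with and without the \emph{true} boundary revealed---its proof exploits that both the local belief and the true-boundary belief concentrate near $1$, an argument that does not obviously survive an arbitrary (wrong) boundary initialization.

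Your route is not wrong, but the uniform decay you would need is strictly stronger than what the paper proves, and establishing it would add work. The weak-oracle sandwich is the device that lets the paper avoid ever touching the global posterior; it is worth internalizing as the key trick that makes the comparison tractable.
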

\iffalse
As a corollary, it follows that  when we set $k$ increasing with $n$,
e.g., $ k = \log \log n$, we have an asymptotic optimality of {\fullBayIter}: 
\begin{align}
  \lim_{n \to \infty}    
  &   \EXP \left[\frac{1}{n} \sum_{i \in V} \left| \textnormal{MSE} (\hat{\mu}^*_i(A)) - 
    \textnormal{MSE} (\hat{\mu}_i^{{\MF BP}(k)}(A))  \right|
    \right]     = 0 \;.
    \label{eq:main_relative}
\end{align}
\fi
This result is not directly comparable to Theorem \ref{thm:quantify} 
as it applies to different regimes of the parameters. 
%In particular, %in Theorem \ref{thm:quantify}, 
The {\em oracle} optimality gap 
\eqref{eq:quantify_b} does not vanish for finite $\ell$ and $r$.
This is perhaps because the 
oracle is too strong to compete against when $\ell$ and $r$ are small. 
Hence, to obtain the tight result in \eqref{eq:main_relative}, we construct a more practical lower bound on the 
optimal estimator in \eqref{eq:marginal_sum} that takes account of the worker reliability estimation.
%Such a comparison can be made rigorous by constructing the following lower bound on the fundamental limit. 
%To analyze the relative performance of BP estimator to the optimal estimator,
We use the fact that the random $(\ell, r)$-regular bipartite
graph has a {\it locally tree-like structure} with depth
$k \le \log \log n$   and our message update is exact on the local
tree \cite{pearl1982}. 
By revealing the ground truths at the boundary of this local tree of depth $k$,
we construct a {\em weaker} oracle estimator that gives a tighter lower bound. 
%Directly analyzing the performance of such a weaker oracle is 
%hard. Instead, 
We show that the gap between our estimator 
(without the ground truths at the boundary) 
and the weaker oracle vanishes as the tree depth increases. 
This is made clear by establishing {\it decaying correlation} 
from the information on the outside of the
local tree to the root. 
A formal proof of Theorem~\ref{thm:optimality} is presented in Section~\ref{sec:proof-optimality}.

%is fast enough to guarantee the convergence of
%MSE's in mean, while MSE can be arbitrarily large % infinite
%in some random space. 

For the analytic tractability, we need a constant lower bound of 
$r \ge C_{\ell, \varepsilon}$ and $|\mathcal{S}| = 2$.
Similar conditions are also required in other BP analysis \cite{ok2016icml, mossel2014}, while
ours is more general in terms of $\ell$, i.e., factor degree since the other analysis made on only factor degree $2$
but also more challenging due to the unboundedness of the regression error.
%In addition to $|\mathcal{S}| = 2$, 
We also assume $\sigma^2_{\min}+ \varepsilon   \le \sigma^2_{\max} \le 2 \sigma^2_{\min}$. However, this is the most challenging regime for any inference algorithms
 since it is hard to distinguish the workers' variances.  
Note that when this assumption is violated, i.e., $\sigma_{\min} \ll \sigma_{\max}$, Theorem~\ref{thm:quantify} provides the near-optimality of {\bayIter} since the MSE gap between BI and Oracle vanishes as the variance gap increases. 
%we believe that Theorem \ref{thm:quantify} still holds without it. % the conditions.
%  Although proving the optimality for more general $\mathcal{S}$
  %requires new analysis techniques, we believe that Theorem \ref{thm:quantify} holds without the assumptions.
%  , beyond those we develop in this paper,
 The experimental results in Section~\ref{sec:exp} indeed suggest the {\bayIter}'s optimality even when such assumptions are violated.
\section{Proofs of Theorems}
%\vspace{-0.05cm}

\subsection{Proof of Theorem~\ref{thm:quantify}}
\label{sec:quantify_pf}

% \setlength{\abovedisplayskip}{4pt}%
% \setlength{\belowdisplayskip}{4pt}%
% similarly to the bound in \eqref{eq:MSE_bdd},
%Pick an arbitrary $\rho \in W$. 
We start with an upper bound 
on the conditional expectation of MSE of $\hat{\mu}^{{\MF {\bayIter}} (k)}_i (A)$ 
conditioned on $\sigma^2 = \tilde{\sigma}^2 \in \mathcal{S}^{W}$. 
Let $ \EXP_{\tilde{\sigma}^2}$ be the conditional expectation given 
$\sigma^2 = \tilde{\sigma}^2$. % \in \mathcal{S}^{W}$.
Using Cauchy-Schwarz inequality for random variables 
 $X$ and $Y$, i.e., $|\EXP[XY]| \le \sqrt{\EXP[X^2] \EXP[Y^2]}$,
%and some calculus,
it is not hard to obtain that (see the supplementary material for the detailed derivation)
\begin{align}
  \EXP_{\tilde{\sigma}^2}\!  \left[ \| \hat{\mu}^{{\MF {\bayIter}} (k)}_i (A)  - \mu_i \|^2_2  \right] \le 
d  \bar{\sigma}^2_i (\tilde{\sigma}^2_{M_i}) 
+
 \mathcal{E}_{\ell, \mathcal{S}} 
\left(
1- 
\EXP_{\tilde{\sigma}^2} 
\left[  b^k_i (\tilde{\sigma}^2_{M_i})    \right]
\right)^{1/4} \;.
\label{eq:MSE_BP_bdd}  
\end{align}
%whose derivation is %for \eqref{eq:MSE_BP_bdd} 
%in the supplementary material. % Appendix~\ref{sec:MSE_BP_bdd_pf}.
%\ifpaper %
%in our supplementary material due to the space limitation.
%\else %
%in Appendix~\ref{sec:MSE_BP_bdd_pf}.
%\fi %
%\note{up to here}
%
% and 
%We further obtain an upper bound of the term in the last summation 
%in \eqref{eq:MSE_BP_bdd} as follows:
%\begin{align}
%asdf
%\end{align}
%Noting that $\prod_{u \in M_{i}} b^k_u (\sigma'^2_u)  \le b^k_u $ 

To complete the proof, we will obtain an upper bound of the last term in the RHS of \eqref{eq:MSE_BP_bdd}
using the known fact that a random $(\ell, r)$-regular bipartite graph $G$ is
 a locally tree-like.
Pick an arbitrary task $\tau \in V$. 
Let $G_{\tau, 2k+1}=(V_{\tau, 2k+1}, W_{\tau, 2k+1}, E_{\tau, 2k+1}) $ denote
the subgraph of $G$ induced by all the nodes within (graph) distance $2k+1$ from
{\it root} $\tau$. From Lemma~5 in \cite{kos2014},
we have that for sufficiently large $n$,
\begin{align}  
\Pr[\text{$G_{\tau, 2k+1}$ is not tree}] 
%\le \frac{3 \ell r }{n} ((\ell - 1)( r-1))^{2k+1}  
\le \frac{3 (\ell r)^{2k+2} }{n} 
 \le 2^{-k}
 \label{eq:tree-probability}  
\end{align}
where the last inequality follows from 
the choice of $r, k \le \log \log n$ and large $n$. % and constant $\ell$.
Thus, we obtain that
\begin{align} 
 \EXP \big[ 1- \EXP_{\tilde{\sigma}^2} 
[b^k_\tau (\tilde{\sigma}^2_{M_\tau}) ] \big] 
\le 
\EXP 
\big[ 
1- \EXP_{\tilde{\sigma}^2} 
[ 
 b^k_{\tau}
 (\tilde{\sigma}^2_{M_{\tau}})  
 \mid 
 \text{$G_{\tau, 2k+1}$ is a tree}
]
\big] 
+  2^{-k}    \label{eq:b_bdd} 
\end{align}
where $\EXP $ is taken w.r.t. $G$ and $\sigma^2$.
%Recalling the exactness of {\bayIter} in tree, it follows that if $G_{\tau, 2k+1}$ is a tree,
%\begin{align}  \label{eq:bp-rho-2k}
%b^k_{\tau}(\sigma'^2_{M_{\tau}}) 
% =  \Pr[\sigma^2_{M_{\tau}} = \sigma'^2_{M_{\tau}} \mid A_{\tau, 2k+1}] \; .
%\end{align}
%From the choice of $r, k \le \log \log n$ and constant $\ell$, 
%we note that for sufficiently large $n$,
%\begin{align}
%\frac{3 \ell r }{n} ((\ell - 1)( r-1))^{2k+1}  ~\le~ 2^{-k} \; . \label{eq:2k_bound}
%\end{align}

Let $A_{\tau, 2k+1} := \{A_{iu} : (i,u) \in E_{\tau, 2k+1}\}$.
%and $\EXP_{\tilde{\sigma}^2}$ 
%is taken with respect to $A_{\tau, 2k+1}$ given $\sigma^2 = \tilde{\sigma}^2$
The exactness of {\bayIter} on tree implies that if $G_{\tau, 2k+1}$ is tree,
$b^k_{\tau}(\sigma'^2_{M_{\tau}}) $ is the likelihood of $\sigma^2_{M_{\tau}} = \sigma'^2_{M_{\tau}}$ given $A_{\tau, 2k+1}$ and thus
\begin{align}  \label{eq:bp-rho-2k}
\!\!\!\!\! 1- \EXP_{\tilde{\sigma}^2} [ b^k_{\tau}(\tilde{\sigma}^2_{M_{\tau}}) 
 ] 
& = \EXP_{\tilde{\sigma}^2}[
  \Pr[\sigma^2_{M_{\tau}} \neq \tilde{\sigma}^2_{M_{\tau}} \mid A_{\tau, 2k+1}] 
  ]  \nonumber \\
&  \le
\!\!
\sum_{u \in M_\tau}\EXP_{\tilde{\sigma}^2}
[\Pr[\sigma^2_{u} \neq \tilde{\sigma}^2_{u} \mid A_{\tau, 2k+1}] ] \!\!\!
\end{align}
where the inequality is due to the union bound.
Hence, it suffices to show that if $G_{\tau, 2k+1}$ is tree,
for any $u \in M_\tau$, the marginal probability of $\sigma^2_{u}$
concentrated at $\tilde{\sigma}^2_u$.
\begin{lemma} \label{lem:concentration}
For given $\rho \in W$, suppose $G_{\rho, 2k} = (V_{\rho, 2k}, W_{\rho, 2k}, E_{\rho, 2k})$
  \footnote{We denote by $\tau \in V$ and $\rho \in W$ task and worker roots.}
  is a $(\ell, r)$-regular bipartite graph 
%  $G = (V, W, E)$
with $\ell \ge 2$ and $r \ge 1$
  and it is a tree rooted from worker $\rho$
  with depth $2k \ge 2$. 
For given  $\varepsilon, \sigma^2_{\min}, \sigma^2_{\max}  >0$, consider 
$\mathcal{S} = \{\sigma^2_1, ..., \sigma^2_S\}$ such that
  \textnormal{(i)}
$|\sigma^2_s - \sigma^2_{s'} | > \varepsilon $ 
and $\sigma^2_{\min} \le \sigma^2_s \le \sigma^2_{\max}$ for all $1 \le s \neq s' \le S$.
 Then, % and $u \in M_\rho$
\begin{align*}  %\label{eq:concentration}
\EXP \left[ \EXP_{\tilde{\sigma}^2 } \left[
\Pr \left[ \sigma^2_\rho \neq \tilde{\sigma}^2_\rho \mid A_{\rho, 2k} 
\right]
\right] \right]
 \le 
 %4  \exp\left( -\frac{\varepsilon^2 r}{8(8\varepsilon+ 1) \sigma^2_{\max}}\right) 
 4  e^{ -\frac{\varepsilon^2 r}{8(8\varepsilon+ 1) \sigma^2_{\max}} }
% \EXP \left[ b^k_\rho(\sigma^2_\rho) 
%\Pr\left[ \sigma^2_\rho \mid A_{\rho, 2k}\right]
%\longmid \sigma^2_\rho
%\right]
% \ge 1- \exp(-r)
\end{align*}
 where the inner expectation $\EXP_{\tilde{\sigma}^2}$ is taken w.r.t. $A_{\rho, 2k}$ from the crowdsourced regression model
  given $\sigma^2 = \tilde{\sigma}^2 \in \mathcal{S}^{W}$,
  and the outer expectation $\EXP$ is taken w.r.t. $\tilde{\sigma}^2 \in \mathcal{S}^{W}$ drawn uniformly at random.
\end{lemma}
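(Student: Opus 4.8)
The plan is to bound the expected posterior mass placed on the wrong label by reducing it to the error probability of a concrete (suboptimal) estimator, and then to control that error by Chernoff-type concentration. Writing $\Pr_{\mathrm{post}}[\,\cdot\mid A_{\rho,2k}]$ for the posterior under the uniform prior, I first note that the inner and outer expectations together realize the Bayesian joint law of $(\sigma^2_\rho, A_{\rho,2k})$, so conditioning on the data gives
\[
\EXP\big[\EXP_{\tilde\sigma^2}[\Pr[\sigma^2_\rho \neq \tilde\sigma^2_\rho \mid A_{\rho,2k}]]\big]
= \EXP_{A_{\rho,2k}}\Big[1 - \sum_{s=1}^{S}\Pr_{\mathrm{post}}[\sigma^2_\rho = \sigma^2_s \mid A_{\rho,2k}]^2\Big],
\]
the expected Gini impurity of the posterior. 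Since $1-\sum_s p_s^2 = (1-\max_s p_s)(1+\max_s p_s)\cdot(\dots)\le 2(1-\max_s p_s)$, the left-hand side is at most twice the Bayes (MAP) error for classifying $\sigma^2_\rho$, which in turn is at most the error probability of \emph{any} estimator $\hat s(A_{\rho,2k})$. Hence it suffices to exhibit one estimator whose error decays at the stated rate, and the factor $2$ here will combine with a two-sided tail bound to produce the leading constant $4$.

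Next I would build such an estimator by cancelling the unknown true positions. Because $\ell \ge 2$ and the depth satisfies $2k \ge 2$, every task $i \in N_\rho$ assigned to the root worker $\rho$ carries at least one other worker $v_i$ whose answer lies inside $G_{\rho,2k}$. The pairwise differences $A_{i\rho}-A_{iv_i}$ are independent of $\mu_i$ and distributed as $\mathcal{N}(0,(\sigma^2_\rho+\sigma^2_{v_i})I_d)$, so the statistic $\hat T := \tfrac{1}{rd}\sum_{i\in N_\rho}\|A_{i\rho}-A_{iv_i}\|_2^2$ has conditional mean $\sigma^2_\rho + \bar\sigma^2$, where $\bar\sigma^2 := \tfrac1S\sum_{w}\sigma^2_w$ is a known quantity since $\mathcal{S}$ is known. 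The candidate centres $\{\sigma^2_s+\bar\sigma^2\}_{s}$ are separated by more than $\varepsilon$, so taking $\hat s$ to be the nearest-candidate rule, any misclassification forces $|\hat T - (\sigma^2_\rho+\bar\sigma^2)| > \varepsilon/2$.

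It then remains to bound this deviation. Conditioned on $\sigma^2_\rho$, the summands $\|A_{i\rho}-A_{iv_i}\|_2^2$ are independent across $i$ (the subtrees hanging off $\rho$'s tasks are disjoint), each a scaled chi-squared variable with scale $\sigma^2_\rho+\sigma^2_{v_i}\le 2\sigma^2_{\max}$, plus the bounded nuisance fluctuation coming from $\sigma^2_{v_i}$ ranging over $\mathcal{S}$. A Bernstein/Chernoff bound for this sum of $r$ independent sub-exponential terms should yield $\Pr[|\hat T - \sigma^2_\rho - \bar\sigma^2| > \varepsilon/2] \le 2\exp(-\varepsilon^2 r/(8(8\varepsilon+1)\sigma^2_{\max}))$, which together with the Gini-to-MAP factor of $2$ delivers the claimed $4\exp(\cdots)$.

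I expect the third step to be the main obstacle: pinning down the precise constant requires carefully tracking the sub-exponential (Bernstein) parameters of the chi-squared summands at scale up to $2\sigma^2_{\max}$ \emph{together with} the extra variability injected by the unknown reference variances $\sigma^2_{v_i}$, and checking that the threshold $\varepsilon/2$ falls in the regime that produces the $\varepsilon^2/((8\varepsilon+1)\sigma^2_{\max})$ rate rather than the linear Bernstein tail. By contrast, the reduction to an estimator and the position-cancellation via pairwise differences are clean and essentially algebraic; the genuine difficulty is the quantitative concentration for the heavy-tailed squared-norm statistic in $d$ dimensions.
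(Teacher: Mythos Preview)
Your plan is essentially the paper's own strategy: upper–bound the expected posterior mass on the wrong value by the error probability of a concrete (suboptimal) classifier of $\sigma^2_\rho$, build that classifier from a statistic that cancels the unknown $\mu_i$'s via pairwise differences, and then control the deviation of this (scaled) chi-squared statistic by a Bernstein-type bound. The only notable differences are in the bookkeeping. First, the paper uses the average of \emph{all} $\ell-1$ co-workers as the reference, i.e.\ $Z_i=\tfrac{1}{\ell-1}\sum_{v\in M_i\setminus\{\rho\}}A_{iv}-A_{i\rho}$, rather than a single $v_i$; this has no effect on the argument beyond changing the variance $a_i$. Second, the paper decouples the two sources of randomness: it first conditions on the grandchildren's variances and applies Bernstein to the chi-squared sum, and then applies Hoeffding to the bounded fluctuation of $\tfrac1r\sum_i a_i$ about $\sigma^2_\rho+\sigma^2_{\mathsf{avg}}$; the union of these two $2\exp(\cdot)$ bounds is how the constant $4$ and the exponent $\varepsilon^2/(8(8\varepsilon+1)\sigma^2_{\max})$ arise. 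Your single-shot Bernstein on the mixture is equally valid but will not reproduce exactly that constant without mimicking the same split. Finally, your Gini-to-MAP step $1-\sum_s p_s^2\le 1-p_{\max}^2\le 2(1-p_{\max})$ is actually cleaner than the paper's reduction, which as written runs the inequality in the wrong direction; your factor of $2$ is the honest price here.
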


The proof of Lemma~\ref{lem:concentration} is  in the supplementary material. % Appendix~\ref{sec:concentration-pf}.
Combining 
\eqref{eq:b_bdd},  \eqref{eq:bp-rho-2k}  and Lemma~\ref{lem:concentration} leads to
%it follows that
\begin{align*}
\EXP \left[
( 1- \EXP_{\tilde{\sigma}^2}[b^k_\tau (\tilde{\sigma}^2_{M_\tau})])^{1/4} \right]
& \le 
\left(1- 
\EXP [ \EXP_{\tilde{\sigma}^2}[
b^k_\tau (\tilde{\sigma}^2_{M_\tau})]] \right)^{\frac{1}{4}} \\
&\le \Big( 4 \ell  e^{ -\frac{\varepsilon^2 r}{8(8\varepsilon+ 1) \sigma^2_{\max}}} + 2^{-k} \Big)^{\frac{1}{4}}
\end{align*}
where the first inequality is from the fact that 
  $(1-x)^{1/4}$ is concave, i.e.,  
$\EXP [(1-X)^{1/4}] \le (1-\EXP[X])^{1/4}$.
This completes the proof of Theorem~\ref{thm:quantify}
with \eqref{eq:MSE_BP_bdd} because of the arbitrary choice of root task $\tau \in V$.

\subsection{Proof of Theorem~\ref{thm:optimality}}
\label{sec:proof-optimality}

%We s
%on the conditional expectation of MSE of $\hat{\mu}^{{\MF BP} (k)}_i (A)$ 
%conditioned on $\sigma^2 = \tilde{\sigma}^2 \in \mathcal{S}^{W}$. 
%Let $ \EXP_{\tilde{\sigma}^2}$ be the conditional expectation given 
%$\sigma^2 = \tilde{\sigma}^2$. 
%
Pick an arbitrary task $\tau \in V$.
Recalling the exactness of BI on tree, it is clear that 
in the case of $\ell = 1$, 
 $\hat{\mu}_\tau^{{\MF \bayIter}{(k)}}(A) $
 is identical to the optimal estimator $\hat{\mu}_\tau^*(A)$.
%since $G$ is the set of disjoint {one-level} trees each of which root
%corresponds to a worker
%and we have
% $b^k_i (\sigma^2_{M_\tau}) = \Pr[\sigma^2_{M_i} \mid A]$.
We so focus on $\ell \ge 2$.
Recall that the Bayesian optimal estimator
$\hat{\mu}_\tau^*(A)$ minimizes the MSE given $A$. However, its analysis is very challenging due to
loops in its corresponding graphical model. 
To overcome this issue, we use the locally tree like structure
of random $(\ell, r)$-regular bipartite graph, again.
Intuitively, we will first construct an artificial but (analytically) 
tractable estimator outperforming $\hat{\mu}_\tau^*(A)$ in terms of MSE
and then we will show
the diminishing gap between MSE's of {\bayIter} and the constructed estimator.

% and extract subgraph $G_{\tau, 2k+1} = (V_{\tau, 2k+1}, W_{\tau, 2k+1}, E_{\tau, 2k+1})$ from $G$.
Let $\partial W_{\tau, 2k+1}$ be the set of all workers at distance $2k+1$ from root $\tau$ in subgraph $G_{\tau, 2k+1}$.
Consider an oracle estimator  $\hat{\mu}_\tau^{{\MF ora}{(k)}}(A)$ of $\mu_\tau$ with 
 free access to true variances of leaf-workers ${\partial W_{\tau, 2k+1}}$, formally defined as
\begin{align*}
\hat{\mu}_\tau^{{\MF ora}{(k)}}(A) &:= \!\sum_{\sigma^2_{M_\tau} \in \mathcal{S}^{M_\tau}} \!\!
 \bar{\mu}_\tau \!
 (A_\tau, \sigma^2_{M_\tau} )  \Pr [\sigma^2_{M_\tau}\! \mid A, \sigma^2_{\partial W_{\tau, 2k+1}}]  \\
 & = \sum_{\sigma^2_{M_\tau} \in \mathcal{S}^{M_\tau}} 
 \bar{\mu}_i\left(A_\tau, \sigma^2_{M_\tau} \right)  \Pr [\sigma^2_{M_\tau} \mid A_{\tau, 2k+1}, \sigma^2_{\partial W_{\tau, 2k+1}}] 
\end{align*}
where for the last equality, we use the conditional independence between $A_{\tau, 2k+1}$ 
and $A \setminus A_{\tau, 2k+1}$ given the additional information $\sigma^2_{\partial W_{\tau, 2k+1}}$.
Using the equality
 $(
 \hat{\mu}^*_\tau(A)   - \mu_\tau) = ( \hat{\mu}^*_\tau(A)  - \EXP[\mu_\tau \mid A, \sigma^2_{\partial  W_{\tau, 2k+1}}] ) +
( \EXP[\mu_\tau \mid A, \sigma^2_{\partial W_{\tau, 2k+1}}] - \mu_\tau )$,
it is not hard to check that 
$\hat{\mu}_\tau^{{\MF ora}{(k)}}$
has smaller expected MSE than $\hat{\mu}^*_\tau(A)$, i.e.,
$
\EXP [  \text{MSE} (\hat{\mu}^{{\MF {ora}} (k)}_\tau (A)) ] 
\le \EXP [\text{MSE} (\hat{\mu}^*_\tau (A))] 
 \le \EXP [\text{MSE} (\hat{\mu}^{{\MF {\bayIter}} (k)}_\tau (A))]$.
Thus, it is enough to show
that as $n \to \infty$,
\begin{align}
&\EXP\left[ \left|  \text{MSE} (\hat{\mu}^{{\MF {ora}} (k)}_\tau (A)) - \text{MSE} (\hat{\mu}^{{\MF {\bayIter}} (k)}_\tau (A))  \right| \right]  \to 0 \;. \label{eq:ora_BP}
\end{align}
Since the only difference between $\hat{\mu}^{{\MF {ora}} (k)}_\tau (A) $ and $\hat{\mu}^{{\MF {\bayIter}} (k)}_\tau (A)$ is the estimation on $\sigma^2_{M_\tau}$, i.e., {\bayIter} uses $b^k_\tau(\sigma^2_{M_\tau})$ instead of $\Pr[\sigma^2_{M_\tau} \mid A, \sigma^2_{\partial W_{\tau, 2k+1}}]$.
%Although 
%the gap between MSE's of $\mu_i^*(A)$ and $\mu_i^{\MF{BP}(k)}(A)$
%can be arbitrary large with positive measure,
Using Cauchy-Schwarz inequality and some calculus, similarly as \eqref{eq:MSE_BP_bdd},
we derive an upper bound on the expected difference
between MSE's of $\mu_\tau^{\MF{{ora}}(k)}(A)$ and $\mu_\tau^{\MF{{\bayIter}}(k)}(A)$
as follows:
\begin{align}
\EXP\left[  \big| \text{MSE} (\hat{\mu}^{{\MF ora}(k)}_\tau (A)) - \text{MSE} (\hat{\mu}^{{\MF {\bayIter}} (k)}_\tau (A))  \big| \right] 
 \le
\mathcal{E}_{\ell, \mathcal{S}} 
\sum_{ \sigma'^2_{M_\tau}, \sigma''^2_{M_\tau} \in \mathcal{S}^\ell}
\sqrt{
 \EXP \Big[ \big(D_{\tau, k}(\sigma'^2_{M_\tau}, \sigma''^2_{M_\tau}) \big)^2 \Big]
}
 \label{eq:MSE_diff_bound}
\end{align}
where 
$D_{\tau, k}(\sigma'^2_{M_\tau}, \sigma''^2_{M_\tau}): =
b^k_\tau (\sigma'^2_{ M_{\tau}}) 
b^k_\tau (\sigma''^2_{ M_{\tau}})
-   \Pr[\sigma^2_{M_\tau} = \sigma'^2_{ M_{\tau}} \mid A, \sigma^2_{\partial W_{\tau, 2k+1}}] 
 \Pr[\sigma^2_{M_\tau} = \sigma''^2_{ M_{\tau}} \mid A, \sigma^2_{\partial W_{\tau, 2k+1}}] 
$.
We provide the detailed steps for \eqref{eq:MSE_diff_bound} in the supplementary material. % Appendix~\ref{sec:MSE_diff_bound_pf}.
% $\rho \in W$ and tasks assigned to the task $\rho$.
Then, from the same decomposition in \eqref{eq:b_bdd},
% into two cases whether $G_{\tau, 2k+1}$ is a tree or not,
it follows that for each 
$\sigma'^2_{M_{\tau}}, \sigma''^2_{M_{\tau}}  \in \mathcal{S}^{\ell}$
and sufficiently large $n$,
\begin{align}
\EXP \left[\left(D_{\tau, k}(\sigma'^2_{M_\tau}, \sigma''^2_{M_\tau}\right)^2 \right]
~\le~ \EXP \left[D_{\tau, k}(\sigma'^2_{M_\tau}, \sigma''^2_{M_\tau} \right]
~\le~
\EXP 
\big[ 
\left| D_{\tau, k}(\sigma'^2_{M_\tau}, \sigma''^2_{M_\tau})
\right|
 \mid 
 \text{$G_{\tau, 2k+1}$ is tree}
\big] 
+ 2^{-k}
%+  \frac{3 \ell r }{n} ((\ell - 1)( r-1))^{2k+1} \; ,
\label{eq:decomp_partial}
\end{align}
where the first inequality follows from that $0 \le D_{\tau, k}(\sigma'^2_{M_\tau}, \sigma''^2_{M_\tau}) \le 1$.
% and
%let $G_{\tau, 2k}=(V_{\tau, 2k}, W_{\tau, 2k}, E_{\tau, 2k}) $ denote
%the subgraph of $G$ induced by all the nodes within (graph) distance $2k$ from root $\rho$.

Suppose $G_{\tau, 2k+1}$ is tree. Then, the graph subtracted from 
$G_{\tau, 2k+1}$ to task $\tau$
and edges between task $\tau$ and workers in $M_{\tau}$
is partitioned into $r$ sub-trees denoted by $\{ G_{\rho, 2k} = (V_{\rho, 2k}, W_{\rho, 2k}, E_{\rho, 2k}): \rho \in M_{\tau} \}$
each of which is rooted from worker $\rho \in M_\tau$
with depth $2k$ in the subtracted graph.
From the exactness of {\bayIter} on tree, it follows that
\begin{align*}
b^k_{\tau}(\sigma'^2_{M_{\tau}})  
&= \Pr[\sigma^2_{M_\tau} = \sigma'^2_{M_\tau} \mid A_{\tau, 2k+1}]  \\
&\propto \mathcal{C}_\tau (A_\tau, \sigma^2_{M_\tau}) 
 \Pr[\sigma^2_{M_\tau} = \sigma'^2_{M_\tau} \mid A_{\tau, 2k+1} \setminus A_{\tau}]  \\
& = \mathcal{C}_\tau (A_\tau, \sigma^2_{M_\tau}) 
\prod_{\rho \in M_\tau} 
\Pr[\sigma^2_{\rho} = \sigma'^2_{\rho} \mid A_{\rho, 2k}] 
\end{align*}
where
$A_{\rho, 2k} : = \{A_{iu} : (i,u) \in E_{\rho, 2k}\}$ and 
 for the last equality, we use 
the conditional independence
among $\sigma^2_{M_\tau}$ given 
$A_{\tau, 2k+1} \setminus A_{\tau}$
decomposed into $A_{\rho, 2k}$. 
Similarly, we also obtain
\begin{align*}
\Pr[\sigma^2_{M_\tau} = \sigma'^2_{ M_{\tau}} \mid A, \sigma^2_{\partial W_{\tau, 2k+1}}] 
&\propto \mathcal{C}_\tau (A_\tau, \sigma^2_{M_\tau}) 
 \Pr[\sigma^2_{M_\tau} = \sigma'^2_{M_\tau} \mid A_{\tau, 2k+1} \setminus A_{\tau}, \sigma^2_{\partial W_{\tau, 2k+1}}]  \\
& = \mathcal{C}_\tau (A_\tau, \sigma^2_{M_\tau}) 
\prod_{\rho \in M_\tau} 
\Pr[\sigma^2_{\rho} = \sigma'^2_{\rho} \mid A_{\rho, 2k}, \sigma^2_{\partial W_{\tau, 2k+1}}]  \;.
\end{align*}
Hence it is enough to show the vanishing correlation of true variances of workers at leaves 
to inferring the root worker's variance.
Formally, we provide Lemma~\ref{lem:correlation-decay} that captures 
a decreasing rate of the correlation. 
\begin{lemma} \label{lem:correlation-decay}
  Suppose $G_{\rho, 2k} = (V_{\rho, 2k}, W_{\rho, 2k}, E_{\rho, 2k})$
  is induced from $(\ell, r)$-regular bipartite graph $G = (V, W, E)$ 
%  with $\ell \ge 2$ and $r \ge 1$
  and it is a tree with depth $2k \ge 2$. 
  Let $\partial W_{\rho, 2k}$ be the set of workers at the leaves in $G_{\rho, 2k}$.
For given  $\varepsilon, \sigma^2_{\min}, \sigma^2_{\max}  >0$, consider 
$\mathcal{S} = \{\sigma^2_{\min}, \sigma^2_{\max}\}$ such that
$\sigma^2_{\min}+ \varepsilon   \le \sigma^2_{\max} \le 2\sigma^2_{\min}$. 
 Then, for any given $\tilde{\sigma}^2 \in \mathcal{S}^{W}$,
there exists a constant $C_{\ell, \varepsilon}$ such that if
  $r \ge C_{\ell, \varepsilon}$, then 
\begin{align} 
\EXP_{\tilde{\sigma}^2} \! \left[ \left|
\Pr[ \sigma^2_\rho =  \tilde{\sigma}^2_\rho  \mid A_{\rho, 2k}, \sigma^2_{\partial W_{\rho, 2k}}]  - \Pr[ \sigma^2_\rho =  \tilde{\sigma}^2_\rho \mid A_{\rho, 2k}] 
 \right| \right] 
 \le 2^{-k} \label{eq:correlation-decay}
\end{align}
 where the expectation is taken w.r.t. $A$ from the crowdsourced regression model
  given $\sigma^2 = \tilde{\sigma}^2$ and $G$.
\end{lemma}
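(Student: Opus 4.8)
The plan is to leverage the exactness of \bayIter (equivalently, of BP) on the tree $G_{\rho, 2k}$, already invoked in the excerpt, to write both conditional laws in \eqref{eq:correlation-decay} as outputs of the \emph{same} message-passing recursion, differing only in the initialization at the leaf workers $\partial W_{\rho, 2k}$: the conditioned posterior $\Pr[\sigma^2_\rho = \tilde\sigma^2_\rho \mid A_{\rho, 2k}, \sigma^2_{\partial W_{\rho, 2k}}]$ starts each leaf message as the point mass at its revealed variance, whereas the free posterior $\Pr[\sigma^2_\rho = \tilde\sigma^2_\rho \mid A_{\rho, 2k}]$ starts each leaf message at the uniform prior. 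Since $|\mathcal{S}| = 2$, I would encode every worker-to-task message by a single scalar — say the probability $P_w := \Pr[\sigma^2_w = \sigma^2_{\min} \mid \cdot]$ or its log-likelihood ratio $h_w$ — and track, at each worker $w$ rooting a depth-$2j$ subtree, the discrepancy $\Delta_w := |P_w^{\text{cond}} - P_w^{\text{free}}|$ between the two runs. Writing $z_j := \EXP[\Delta_w]$ for such a subtree-root worker, one has $z_0 \le 1$ at the leaves, and the goal becomes a per-layer contraction $z_j \le \tfrac12 z_{j-1}$; iterating over the $k$ worker layers yields $z_k \le 2^{-k}$, which is exactly \eqref{eq:correlation-decay}.

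For the one-step contraction I would expand the recursion. A worker $w$ aggregates evidence from its $r-1$ child tasks, so $h_w = \sum_s g_{j_s \to w}$ and the perturbation is $\delta h_w = \sum_s (g^{\text{cond}}_{j_s \to w} - g^{\text{free}}_{j_s \to w})$; each $g_{j_s \to w}$ depends on the boundary only through the $\ell-1$ grandchildren messages at $j_s$, giving a single-edge sensitivity bound $|\delta g_{j_s \to w}| \lesssim \sum_{w' \in M_{j_s} \setminus w} \Delta_{w'}$. Passing back from the LLR to the probability, $|P_w^{\text{cond}} - P_w^{\text{free}}| \le P_w(1 - P_w)\,|\delta h_w|$, since the logistic map has derivative $P(1-P)$. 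The crux is then the factor $\EXP[P_w(1-P_w)]$, which is precisely the residual uncertainty of $w$'s own variance given its subtree, and which by the concentration already proved in Lemma~\ref{lem:concentration} decays like $e^{-c\,\varepsilon^2 r/\sigma^2_{\max}}$. Hence the product (branching) $\times$ (edge-sensitivity) $\times$ (residual uncertainty) scales like $(r-1)(\ell-1)\,\beta\, e^{-c\varepsilon^2 r/\sigma^2_{\max}}$, which the assumption $\sigma^2_{\max} \le 2\sigma^2_{\min}$ keeps well-behaved and which tends to $0$ as $r$ grows; choosing $C_{\ell, \varepsilon}$ so large that this product is at most $\tfrac12$ for all $r \ge C_{\ell, \varepsilon}$ delivers the contraction. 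This is exactly why the lemma needs only a \emph{constant} lower bound on $r$: concentration of each worker's own evidence must merely beat the fixed branching $(r-1)(\ell-1)$.

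The main obstacle — and the essential departure from the bounded discrete analyses of \cite{mossel2014, ok2016icml} — is that both the single-edge sensitivity of $g_{j_s \to w}$ and the residual factor $P_w(1-P_w)$ are \emph{unbounded} functions of the answers $A$, because $\mathcal{C}_i$ carries Gaussian factors $\exp(-\|A_{iu}-A_{iv}\|_2^2/(2\sigma^2_u\sigma^2_v))$ whose log-ratios have Gaussian (hence only polynomial-moment) fluctuations. Consequently the contraction cannot hold pointwise and must be established in expectation: I would split on the event that the relevant answers are atypically far from their means, bound that event's contribution by Gaussian tail estimates, and on the typical event apply Cauchy--Schwarz — exactly as in the derivations of \eqref{eq:MSE_BP_bdd} and \eqref{eq:MSE_diff_bound} — so that $\EXP\big[P_w(1-P_w)\,|\delta h_w|\big]$ factorizes into the concentration factor from Lemma~\ref{lem:concentration} times a finite moment of the per-edge log-likelihood ratios. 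Controlling these moments uniformly over $\mathcal{S}^W$ while preserving the branching-versus-concentration balance is where essentially all the work lies; the induction over the $k$ layers and the reduction of the scalar discrepancy $\Delta_\rho$ to the stated form \eqref{eq:correlation-decay} are then routine.
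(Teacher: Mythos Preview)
Your high-level plan coincides with the paper's: both posteriors are outputs of the same BP recursion on the tree with different leaf initializations, one tracks the scalar discrepancy $|X_u-Y_u|$ (where $X_u=2\Pr[\sigma^2_u=\tilde\sigma^2_u\mid A_u]-1$ and $Y_u$ is the leaf-oracle version), and one proves a per-layer contraction via the mean value theorem applied to $X_u=h_u(X_{\partial^2 u})$. The gap is in how you decouple the contraction factor from the grandchildren's discrepancies. You bound the logistic derivative by $P_w(1-P_w)$ and propose to extract $\EXP[P_w(1-P_w)]$ by Cauchy--Schwarz, invoking Lemma~\ref{lem:concentration}. But $P_w(1-P_w)$ at the mean-value point depends on the \emph{entire} subtree below $w$, hence on every $\Delta_{w'}$ inside $|\delta h_w|$; Cauchy--Schwarz then yields $(\EXP[\Delta_{w'}^2])^{1/2}$ instead of $\EXP[\Delta_{w'}]$, and your first-moment recursion $z_j\le\tfrac12 z_{j-1}$ no longer closes. (Also, $P_w(1-P_w)\le\tfrac14$ is bounded; the unboundedness you flag lives only in the per-task sensitivity.)

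The paper sidesteps this by replacing $P_u(1-P_u)=\tfrac{g_u^+g_u^-}{(g_u^++g_u^-)^2}$ with the cruder but \emph{multiplicative} bound $\sqrt{g_u^-/g_u^+}=\prod_{j\in\partial u}\Gamma_{ju}$, where $\Gamma_{ju}:=\sqrt{g_{ju}^-/g_{ju}^+}$ depends only on $A_j$ and the subtrees below task $j$. For the task $i$ that contains the grandchild $v$ under consideration, this factor is absorbed into a single finite-mean sensitivity $\eta_i(A_i)$; for every $j\neq i$, $\Gamma_{ju}$ is conditionally independent of $|X_v-Y_v|$, so the expectation \emph{genuinely} factorizes without Cauchy--Schwarz. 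The smallness then comes not from $\EXP[P_w(1-P_w)]$ but from a separate lemma (Lemma~\ref{lem:one}) showing $\EXP_{\tilde\sigma^2}[\max_\lambda\Gamma_{ju}]\le 1-\tfrac12\Delta_{\min}<1$, whose limiting value $\EXP_{\tilde\sigma^2}[\Gamma_{ju}(1_{\partial_u j})]$ is identified as $1-H^2(f_{A_j\mid\tilde\sigma^2},f_{A_j\mid\tilde\sigma'^2})$, a Hellinger distance; Lemma~\ref{lem:concentration} is used only inside that lemma to guarantee the mean-value point is near $1_{\partial_u j}$ with high probability. The final contraction factor is therefore $\eta\cdot(1-\tfrac12\Delta_{\min})^{r-2}$, and $C_{\ell,\varepsilon}$ is chosen so that this is at most $\tfrac{1}{2(\ell-1)(r-1)}$.
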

The proof of Lemma~\ref{lem:correlation-decay} is given in the supplementary material.
This lemma completes the proof of Theorem~\ref{thm:optimality} with \eqref{eq:MSE_diff_bound} and \eqref{eq:decomp_partial}.

{
\captionsetup[subfloat]{captionskip=5.5pt}
\begin{figure*}[t!]
   %\begin{center}
%\vspace{-0.2cm}
%\begin{minipage}{.245\textwidth}
   \begin{center}
 %  \end{center}
     \subfloat[$\mathcal{S}_{\text{small}}$ with  $\ell=5$\label{fig:small_l5}]{\includegraphics[width=0.25\columnwidth]{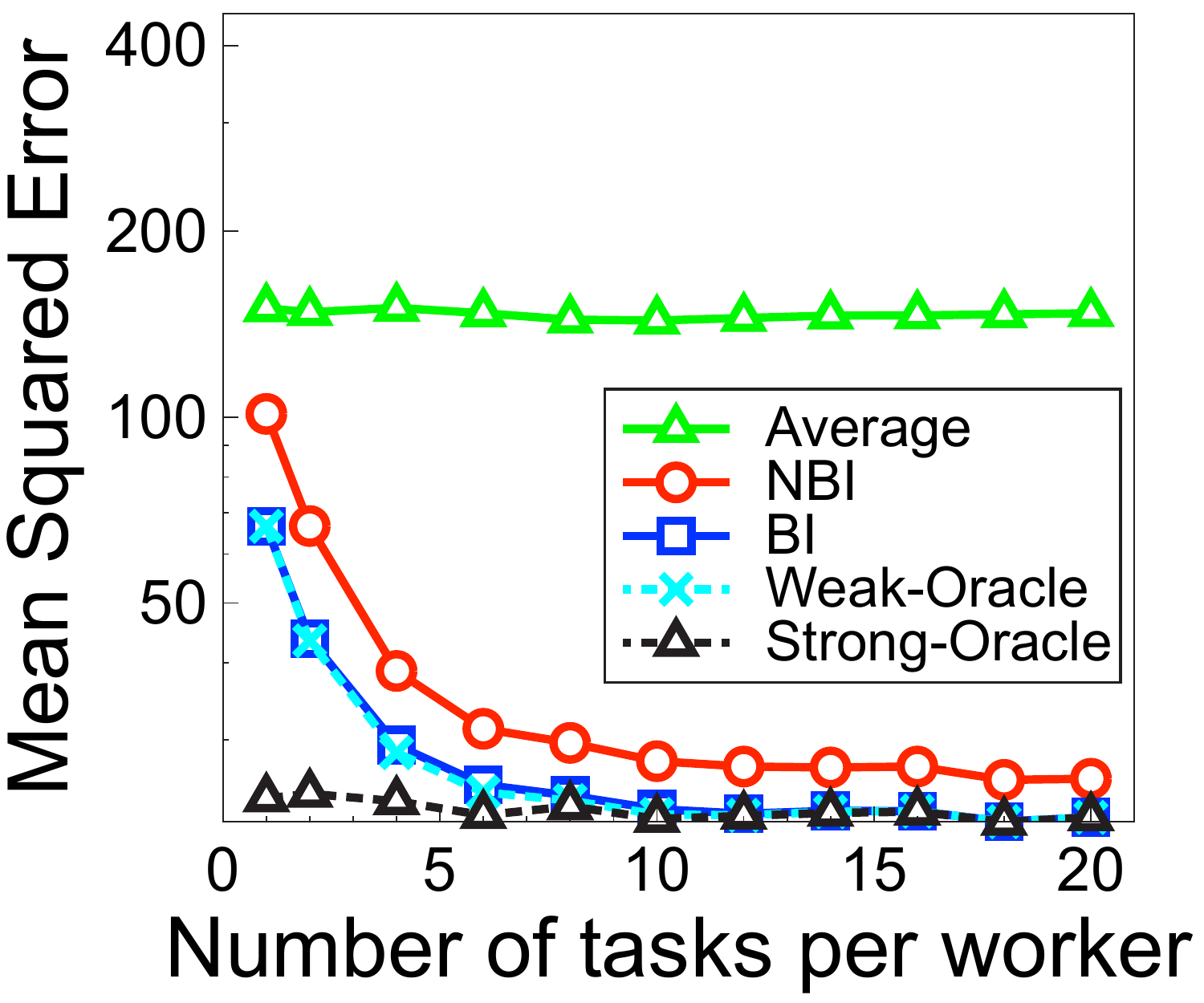}}
     \subfloat[$\mathcal{S}_{\text{large}}$ with  $\ell=5$\label{fig:large_l5}]{\includegraphics[width=0.25\columnwidth]{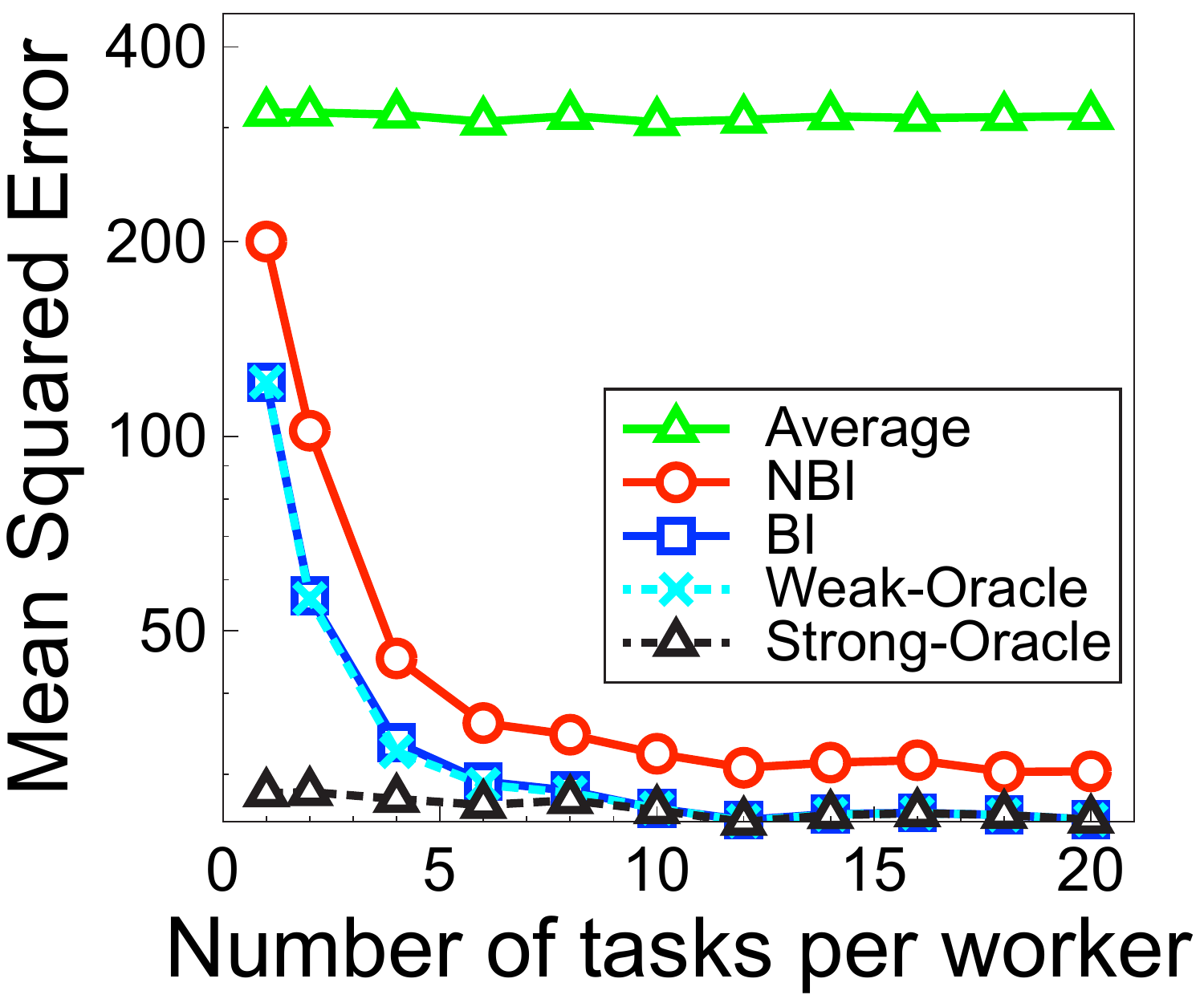}} 
%\end{minipage}
%\begin{minipage}{.245\textwidth}
     \subfloat[$\mathcal{S}_{\text{small}}$ with $r=5$\label{fig:small_r5}]{\includegraphics[width=0.25\columnwidth]{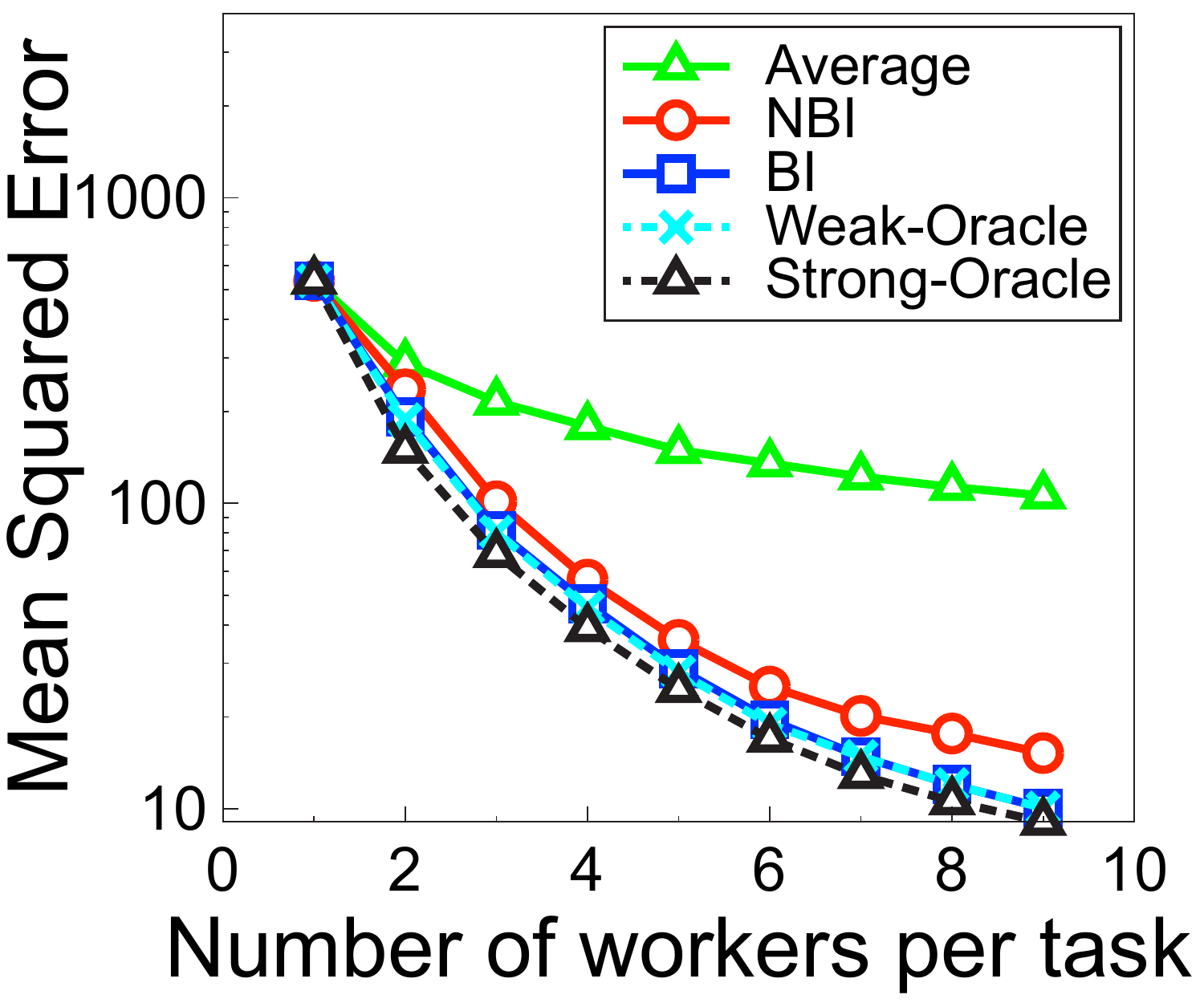}} 
          \subfloat[$\mathcal{S}_{\text{large}}$ with  $r=5$\label{fig:large_r5}]{\includegraphics[width=0.25\columnwidth]{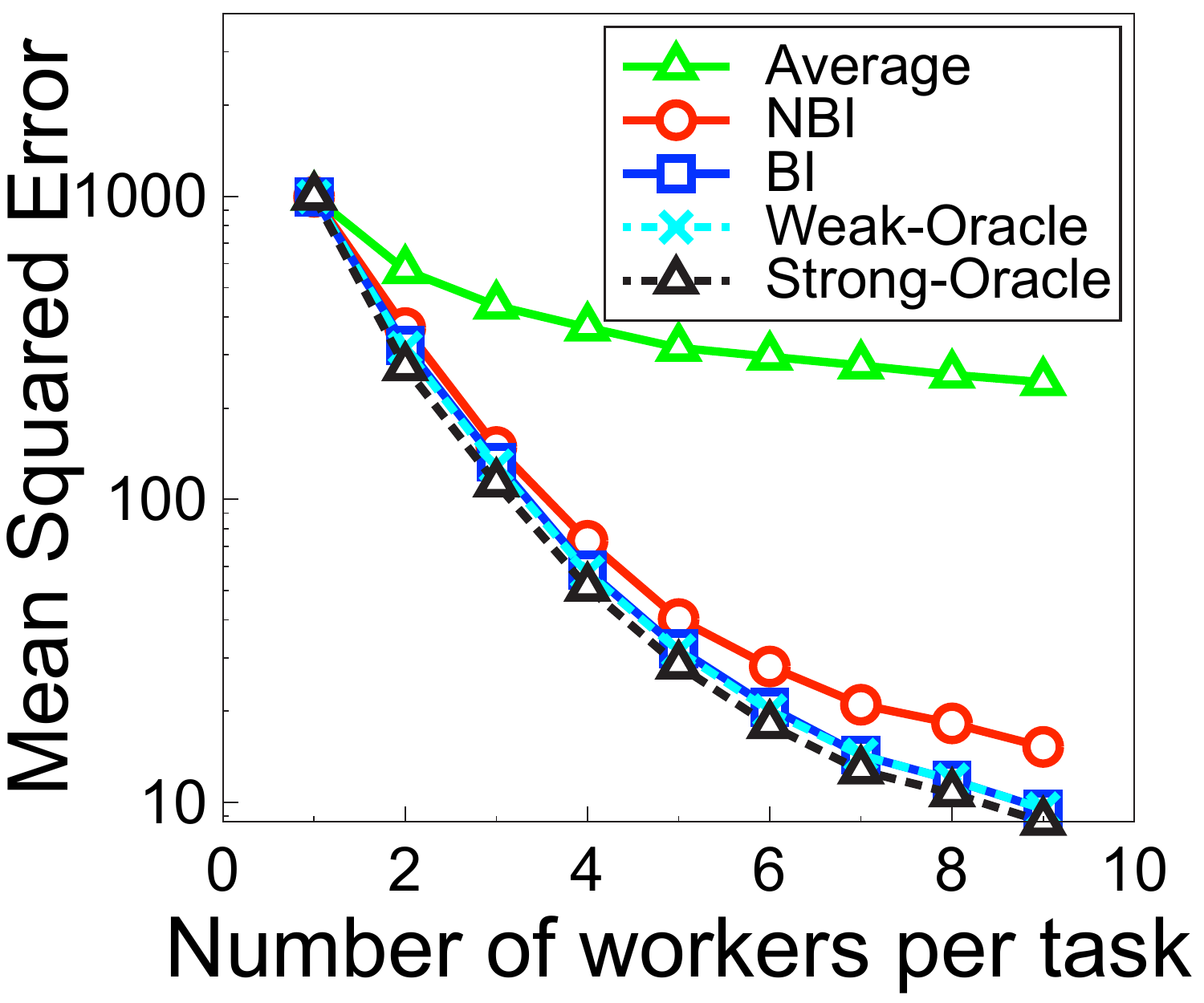}} 
  %   \end{minipage}
   \end{center}
   \vspace{-0.3cm}
   \caption{The average MSE of various algorithms on the
     synthetic datasets consisting of $200$
     tasks and workers with $\mathcal{S}_{\text{small}} \!=\! \{10,100,1000\}$ and
     $\mathcal{S}_{\text{large}} \!=\! \{10, 100, 5000\}$;
 (a)-(b)\! $\ell\!=\!5$ with varying $r$; (c)-(d)\!  $r\!=\!5$ with varying $\ell$.}
\label{fig:all}
\end{figure*}
}
%%%%%%%%%%%%%%%%%%%%%%%%%%%%%%%%%%%%%%%%%%%%%%%%%%%%%

\section{Experimental Results}
\label{sec:exp}

%In this section, we present experimental results supporting 
%our analytical findings and the superiority of 
%{\bayIter} on synthetic and real-world datasets.
%experiments  and  on simulated tasks of locating objects of interest in images. 
%\subsection{Tested Algorithms}
We experiment the following five algorithms:
%: {\MF \bayIter}, {\MF \nonIter}, {\MF Average}, and {\MF Oracle}
% as implemented 
%which are described in what follows:
\begin{compactitem}[$\circ$]
\item {\MF \bayIter} is implemented
%It is an implementation of 
%{\MF \bayIter} 
without any prior information on true positions
by taking the limit $\tau \to \infty$, i.e., it outputs 
$\mu^{{\MF \bayIter}}(A)$ in \cref{eq:ui-message,eq:iu-message,eq:belief} with
$\lim_{\tau \to \infty} \mathcal{C}_i (A_i, \sigma^2_{M_i})$. Note that our theoretical guarantees on
{\bayIter} still hold in this regime. 
%We terminate {\MF \bayIter} at the maximum of $100$ iterations or after
%checking convergence of messages. % tolerance.
% In our model, for
%mathematical rigor, we assumed that we have some information of
%the true position $\mu_i$ in advance as specified by the density of
%$\mu_i$ as the spherical Gaussian with mean $\nu_i$ and variance $\tau$.
%Since the exact knowledge of such statistical information might not be easy
%to obtain in practice, we implement {\MF BP} with no
%prior information on true positions by taking the limit of BP as $\tau \to \infty.$
%%$\lim_{\tau \to \infty} \mathcal{C}_i (A_i, \sigma^2_{M_i})$, i.e.,
%%where large $\tau$ dilutes the information.
%Instead of {\MF Oracle}, whereas it might not be tight we need to simulate a weaker
%oracle to tighten the lower bound in this regime, which is
%computationally intractable.
\vspace{0.1cm}
 \item {\MF \nonIter} is
an iterative algorithm of non-Bayesian type.
% designed without assuming
%the prior distribution.
%, hence named  (NBI).
It recursively 
updating 
workers' variances and tasks' answers based on workers' 
consensus.
%non-Bayesian algorithm
Formally, 
initialized with $\hat{\sigma}^2_{0, u} = 1$,
it estimates 
%\begin{align*}
%\mu^{{\MF con}(t)}_i(A) 
%&:= \frac{\sum_{ u \in M_i} \tfrac{1}{ \hat{\sigma}^{2}_{t, u} } A_{iu} }{\sum_{ u \in M_i} { \tfrac{1}{\hat{\sigma}^{2}_{t, u}}  }} \;, \\
%{\hat{\sigma}^2_{t+1, u}} &: = \frac{1}{|N_u|} \sum_{i \in N_u} \|A_{iu} - \mu^{{{\MF con}(t)}}_i(A)\|^2  \;.
%\end{align*}
$
\mu^{{\MF NBI}(t)}_i(A) 
:= \lim_{\tau \to \infty} \bar{\mu}_i(A_i, \hat{\sigma}^2_{t, M_i})$
and 
${\hat{\sigma}^2_{t+1, u}} : = \sum_{i \in N_u}  \|A_{iu} - \mu^{{{\MF NBI}(t)}}_i(A)\|^2/|N_u|.$
\vspace{0.1cm}
\item {\MF Average}
%This algorithm 
just takes the average of workers' observations without learning workers' variances,
 i.e., $\mu^{\MF avg}_i (A) := \frac{1}{|M_i|} \sum_{u \in M_i} A_{iu}.$
\vspace{0.1cm}
\item {\MF Strong/Weak-Oracle} are two artificial estimators
which have free access to workers' variances $\sigma^2.$ They would outperform all existing algorithms, even including the optimal estimator. 
For each task $i$,
{\MF Strong-Oracle} uses every worker's true variance, i.e., 
$\mu^{\MF strong}_i (A, \sigma^2) := \lim_{\tau \to \infty} \EXP[\mu_i \mid  A, \sigma^2].$
{\MF Weak-Oracle} uses just the true variances of leaf workers, denoted by $\partial T_i$, in Breadth-first search tree, denoted by $T_i$, of root $i$, i.e., $\mu^{\MF weak}_i (A, \sigma^2) := \lim_{\tau \to \infty} \EXP[\mu_i \mid  A, \sigma^2_{\partial T_i}]$.
%comparing the worker's answers and the others' answers,
%  and then it aggregates the workers' answers based on the estimated variances.
%  Formally, it outputs
%  $\mu^{\MF \nonIter}_i(A) := \mu^{\MF ora}_i (A, {\hat{\sigma}^2_{{\MF sim}}})$
%  where ${\hat{\sigma}^2_{{\MF sim},u}} : = \frac{1}{|N_u|}\sum_{i \in N_u} \|A_{iu} - \mu^{\MF avg}_i(A)\|^2$.
%  
  
%   \item {\MF \nonIter}: This has a simple mechanism to estimate each worker's variance
%by comparing the worker's answers and the others' answers,
%  and then it aggregates the workers' answers based on the estimated variances.
%  Formally, it outputs
%  $\mu^{\MF nav}_i(A) := \mu^{\MF ora}_i (A, {\hat{\sigma}^2_{{\MF sim}}})$
%  where ${\hat{\sigma}^2_{{\MF sim},u}} : = \frac{1}{|N_u|}\sum_{i \in N_u} \|A_{iu} - \mu^{\MF avg}_i(A)\|^2$.

% Then it aggregates $A_{iu}$'s based on the estimated variances %${\sigma^2_{\MF sim}}$
%  and outputs
\end{compactitem}

Recall that 
the true positions $\mu_i$'s are assumed to be drawn from 
the spherical Gaussian with mean $\nu_i$ and variance $\tau$.
As $\nu_i$'s and $\tau$ are hard to obtain in practice, 
in all experiments, 
we implement {\MF {\bayIter}} with no knowledge on 
{\em the prior distribution of the true positions of the tasks} by taking the limit of {\MF {\bayIter}} as $\tau \to \infty.$ 
%In our model, 
%For 
%mathematical rigorousness, %we assumed that we have 
%we assumed to have some information of 
%the true position $\mu_i$ in advance as specified by the density of
%$\mu_i$ as the spherical Gaussian with mean $\nu_i$ and variance $\tau$.
%Since such statistical information might not be easy
%to obtain in practice, 
%we implement {\MF {\bayIter}} with no
%prior knowledge on {\em true positions of the tasks} by taking the limit of {\MF {\bayIter}} as $\tau \to \infty.$
%$\lim_{\tau \to \infty} \mathcal{C}_i (A_i, \sigma^2_{M_i})$, i.e.,
%where large $\tau$ dilutes the information.
Note that our theoretical guarantees on {\MF {\bayIter}} still hold in this regime.

Our baseline comparisons include the ones with 
a simple approach of {\MF Average} and the fundamental lower bounds
from {\MF Strong/Weak-Oracle}.
{\MF Strong-Oracle} and {\MF Weak-Oracle} correspond to the fundamental limits that we compare with {\MF BI} in Theorems~\ref{thm:quantify}~and~\ref{thm:optimality}, respectively.
As we use the prior distribution in the synthetic experiments, 
one may ask about what is the gain of this side information. 
To address this, 
we test a non-Bayesian algorithm ({\MF {\nonIter}}) that 
does not assume such prior information, iteratively estimating
task answers and worker variances based on consensus.
Note that a similar idea has been also investigated in \cite{RYZ10}.

\subsection{Synthetic Datasets}

\label{sec:synthetic}
%\noindent {\bf Synthetic datasets.} 

Since our theoretical results cover a large $n$ regime, 
we test a more challenging regime of modest size $n = 200$. 
Synthetic datasets are generated by
the set of random $(\ell, r)$-regular bipartite graphs of $200$
object detection tasks where each task $i$ is associated with the true
position $\mu_i$ chosen uniformly at random in a $100 \times 100$ image.
We randomly choose each worker's variance using
$\mathcal{S}_{\text{small}} = \{10, 100, 1000\}$ or
$\mathcal{S}_{\text{large}} = \{10, 100, 5000\}$.
The simulation results with
varying $r$ and $\ell$ are plotted in
Figures~\ref{fig:all}\subref{fig:small_l5}-\subref{fig:large_l5} and
\ref{fig:all}\subref{fig:small_r5}-\subref{fig:large_r5}, where we
take the average of $50$ random instances.

\noindent {\bf Optimality of {\bayIter}.} 
%As discussed in Section~\ref{sec:bp_quanti},
%As discussed in Section~\ref{sec:bp_opt},
As discussed in Section~\ref{sec:result},
Figures~\ref{fig:all}\subref{fig:small_l5}-\subref{fig:large_r5} show that 
for {\it all} $(\ell, r)$, {\MF {\bayIter}} closely achieves the fundamental limit of {\MF Weak-Oracle},
whereas {\MF Average} and {\MF {\nonIter}} have the suboptimal performance.
We also observe that 
{\MF Weak-Oracle} with the challenge of identifying reliable workers 
indeed provides tighter fundamental limit than {\MF Strong-Oracle}, as discussed in Section~\ref{sec:result}.
Overall, {\MF {\nonIter}} has a small constant gap to {\MF \bayIter}, 
which quantifies the gain of {\MF{\bayIter}} using the matched prior distribution.
{\MF Average} shows the significant performance loss, compared to the optimal {\MF{\bayIter}}
or even {\MF{\nonIter}}.
For example, in  Figures~\ref{fig:all}\subref{fig:small_r5}-\subref{fig:large_r5},
in order to make MSE less than $100$ with $\mathcal{S}_{\text{small}}$, 
{\MF \bayIter} and {\MF \nonIter} require only $\ell \ge 3$,
but {\MF Average} requires $\ell \ge 9$, implying that 
{\MF Average} needs to hire three times more workers per task than others.

\noindent {\bf Importance of worker identification.} 
Comparing Figures~\ref{fig:all}\subref{fig:small_r5}-\subref{fig:large_r5},
we observe that under the minimum of workers' variances fixed,
both {\MF {\bayIter}} and {\MF {\nonIter}}, which
identify reliable workers and adaptively weight their answers,
sustain good performance for both small and large maximum
worker variances. However, the performance of {\MF Average}, which does not distinguish workers, 
is significantly degenerated by spammers with large variance from $\mathcal{S}_{\rm large}$.
This shows the importance of classifying workers in making the
estimator robust to spammers or adversary.

%It is interesting to see that MSE of {\MF Simple} estimating workers' variance 
%decreases as $r$ increases, similarly as {\MF \bayIter}.
 %doesn't have such ability.
%, even though
% BP doesn't have the free access to the workers variance
%which the oracle estimator has. 

\noindent {\bf Impact of $(\ell, r)$.}
As Figures~\ref{fig:all}\subref{fig:small_r5}-\subref{fig:large_r5} show, 
increasing $\ell$ (or budget) exponentially reduces MSE's of all algorithms, while 
the value of exponent varies for each algorithm. 
In Figures~\ref{fig:all}\subref{fig:small_l5}-\subref{fig:large_l5}, 
the gap between the optimal {\MF {\bayIter}} (or {\MF Weak-Oracle}) and {\MF Strong-Oracle}
quantifies the difficulty in identifying reliable workers.
As studied in Theorem~\ref{thm:quantify}, the gap is diminishing exponentially fast with increasing $r,$
whereas the fundamental limit of {\MF Strong-Oracle} does not change with $r$.
Hence, for efficiency from the worker identification, 
the task requester needs to assign each worker so as to answer a certain number of tasks at least,
while letting a worker solve too many tasks may be impractical but also unhelpful to increase accuracy.

%Since {\MF Simple} estimates workers' variances in a simple way, % at least,
% % although the estimation is not sophisticated as BP.
% it can obtain the certain level of the tolerance to spammers.

%In addition, the performance degeneration by increasing the maximum variance 
%(compare Figures~\cref{fig:small_l5,fig:large_l5}) 
%is not that severe as the one of {\MF Average}, while 
%{\MF BP} has a negligible performance degeneration when $r$ is large.
%
%We note that the arithmetic means of $\mathcal{S}_{\text{small}}$ and $\mathcal{S}_{\text{large}}$
%are $370$ and $1703$, respectively, while the harmonic means of them are values near $27$.
%Hence, as discussed in Section~\ref{sec:bp_quanti}, {\MF Average} 
% significantly different while
%Hence

 \begin{table}[t!]
\caption{ 
Estimation quality of {\MF Average}, {\MF \nonIter}, {\MF \bayIter} with $\mathcal{S}_{\textnormal{est}}$, {\MF PG1} 
and {\MF Weak/Strong-Oracle}'s
on crowdsourced {\MF FG-NET} datasets from Amazon MTurk workers.
}
\vspace{-0.3cm}
\label{tab:fg-net-table}
\begin{center}
{  \fontsize{9}{9}
\begin{sc}
\begin{tabular}{ccccccc}
\hline
\multirow{2}{*}{\!\!{Estimator}\!\!}	& \multirow{2}{*}{\!\!{\MF Average}\!\!} & \multirow{2}{*}{\!\!{\MF \nonIter}\!\!} & \multirow{2}{*}{\!\!\MF\bayIter\!\!} &  \multirow{2}{*}{\!\!{\MF PG1}\!\!} & {\!\!{\MF Weak}\!\!} & {\!\!{\MF Strong}\!\!} \\
& & & & & \!\!{\MF Oracle}\!\! & \!\!{\MF Oracle}\!\! \\
\hline 
\!\!{Data noise}\!\!	& \multirow{2}{*}{\!\!34.99\!\!}& \multirow{2}{*}{\!\!32.80\!\!} & \multirow{2}{*}{\!\!28.72\!\!} &\multirow{2}{*}{\!\!33.54\!\!}	&\multirow{2}{*}{\!\!28.68\!\!}	 &\multirow{2}{*}{\!\!28.45\!\!}	 \\
  ({MSE})  & {} & {} & {} & {} & {} & {} \\
  \hline
\end{tabular}
\end{sc}
}
\end{center}
\end{table}

%%%%%%%%%%%%%%
% Previous table
%%%%%%%%%%%%%%
\iffalse
 \begin{table*}[t]
\begin{minipage}{1\columnwidth}
\caption{ 
Estimation quality of {\MF Average}, {\MF \nonIter}, {\MF \bayIter} with $\mathcal{S}_{\textnormal{est}}$, {\MF PG1} and {\MF Weak/Strong-Oracle}
on crowdsourced {\MF FG-NET} datasets from Amazon MTurk workers
in terms of MSE.
}
\label{tab:fg-net-table}
\begin{center}
%\begin{small}
{ % \fontsize{9}{9}
\begin{sc}
  %   \begin{minipage}{1\columnwidth}   
\begin{tabular}{cccc}
\hline
% \multirow{2}{*}{Estimator}  & \multirow{2}{*}{Estimator} & \multirow{2}{*}{MSE} 	\\
  \multirow{2}{*}{Estimator}  & Data noise  \\ %\multicolumn{2}{c}{Testing error} \\
{} & ({MSE})  %&  (MAE)	
 \\
\hline
%\multirow{4}{*}{\minitab[c]{VOC 07/12}}
					{\MF Average}		&34.99	%& 5.006 	
							\\
					{\MF \nonIter}		&32.80	%&4.687 	 
							 \\
					 {\MF\bayIter}		&28.72   %& 4.773 	 
					 		\\
					{\MF PG1} 		&33.54	%& 4.623 	
							 \\
					{\MF Weak-Oracle} 		&28.68	%& 4.623 	
							 \\
					{\MF Strong-Oracle} 		&28.45	%& 4.623 	
							 \\
\hline
%AVG: 5.005607e+00 
%AVG: 3.226521e+00 
%BP1: 4.773906e+00 
%BP1: 3.100259e+00 
%BP2: 4.386465e+00 
%BP2: 2.890493e+00 
%Iter: 4.687298e+00 
%Iter: 3.135448e+00 
%Ora: 4.623180e+00 
%Ora: 3.002691e+00
\end{tabular}
\end{sc}
%\end{small}
}
%\vspace{cm}
\end{center}
\end{minipage}
\end{table*}
\fi
%%%%%%%%%%%%%%

  \subsection{Human Age Prediction}

\label{sec:HAP}
%To evaluate performance of our methods in real-world scenario, 
%we use
We also present experiment results on datasets from a {\em real-world} crowdsourcing system.
We use {\MF FG-NET} datasets
which has been widely used as a benchmark for facial age estimation \cite{lanitis2008comparative}.
The dataset contains $1,002$ photos of $82$ individuals' faces, in which
each photo is labeled with a biological age %of individual when the photo was taken
as the ground truth.
Furthermore, \citet{han2015demographic} provide 
crowdsourced labels on {\MF FG-NET} datasets,
in which $165$ workers in Amazon Mechanical Turk (MTurk) answer 
their own age estimation on given subset of $1,002$ photos,
so that each photo has 
$10$ answers from workers, while each worker provides a different number (from $1$ to $457$) of
 answers, and $60.73$ answers on average.
%Examples of workers' age estimation is presented in Figure~\ref{fig:face_crowd}.
% {\MF FG-NET} datasets on human age prediction from \cite{han2015demographic},
%where

\noindent{\bf Prior estimation.} 
In processing the real-world dataset, the prior distribution on noise level is not provided in advance, while it is required to run
%while we need 
%it for 
{\MF {\bayIter}}.
%Hence, to obtain support for {\MF {\bayIter}},
To infer the prior distribution, we first study workers' answer patterns.
We often observe two extreme classes of answers for a task: 
a few outliers and consensus among majority. 
For example, in Figures~\ref{fig:face_crowd}\subref{fig:easy} and \ref{fig:face_crowd}\subref{fig:hard},
there exist noisy answers $5$ and $7$, respectively, which are far
from the majority's answers, $1$ and $55$, respectively.
%The easiest and hardest samples of {\MF FG-NET} datasets in terms of
%average absolute error of crowd workers' answers:
%as age estimations on each photo of (a) 1-year-old (b) 35-year-old individuals,
%Amazon M-turk workers provide answers (1, 1, 1, 1, 1, 1, 1, 1, 2, 5)
%and (7, 51, 52, 55, 55, 55, 59, 63, 66, 67), respectively.
Such observations suggest to choose a simple support, e.g.~$\mathcal{S}=\{  \sigma^2_{\rm  good}, \sigma^2_{\rm bad}\}$.
%In particular, the examples in Figure~\ref{fig:face_crowd} suggest to use a simple support 
In particular, without any use of ground truth, we first run  {\MF {\nonIter}} 
%for estimating workers' reliabilities 
and use
the top $10\%$ and bottom $10\%$ workers' reliabilities as the binary support, which is
$\mathcal{S}_{\textnormal{est}} = \{6.687, 62.56\}$. % in our experiment.
%For {\MF FG-NET}, we use {\MF {\bayIter}} with the estimated support $\mathcal{S} = [6.687, 62.56]$.

\noindent{\bf Validation on the estimated prior.}
For {\MF {FG-NET}}, we additionally test {\MF PG1} which is Gibbs sampling algorithm\footnote{As suggested in \cite{piech2013tuned}, we use $800$ iterations of Gibbs sampling after discarding the initial $80$ burn-in samples.} relying on a sophisticated worker model, called $\text{PG}_1$ model in \cite{piech2013tuned}, with biased Gaussian noises where worker variance and bias are drawn from continuous supports.
In Table~\ref{tab:fg-net-table}, we compare the estimation of {\MF {\bayIter}} to other algorithms.
Observe that MSE of {\MF {\bayIter}} with the binary support $\mathcal{S}_{\textnormal{est}}$ 
is close to those of {\MF Weak/Strong-Oracle}, while the other algorithms have some gaps.
Despite using a sophisticated model, $\text{PG}_1$, {\MF PG1} is
observed to perform worse than {\MF {\bayIter}}.
%We conjecture that in order to exploit the advantage in modeling, 
This is because {\MF PG1} needs non-trivial parameter optimization based on {\it training dataset}, % as in \citet{piech2013tuned} 
which incurs overfitting.
%while the use of training dataset can also improve {\MF {\bayIter}}.
This result from the real workers supports the value of
our simplified modeling on workers' noise. % as in our model.
For interested readers,
we also report the regression accuracy of 
deep neural networks trained using the pruned dataset produced by
different algorithm in the supplementary material.

\begin{figure}[t!]
{
\captionsetup[subfloat]{captionskip=3pt}
%\begin{figure}[t!]
\begin{minipage}{1\columnwidth}
\begin{center}
     \subfloat[1-year-old \label{fig:easy}]
     {	\includegraphics[height=0.2\columnwidth, trim={0cm 0.1cm 0.1cm 0cm},clip]{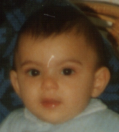}} 
     \hspace{0.8cm}
     \subfloat[35-year-old\label{fig:hard}]
     {	\includegraphics[height=0.2\columnwidth, trim={0cm 0.1cm 0cm 0.1cm},clip]{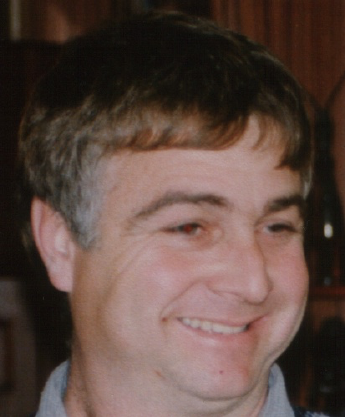}} 
     \end{center}
     \vspace{-0.3cm}
   \captionof{figure}{
Easy and hard samples from {\MF FG-NET} in terms of
average absolute error of workers' answers:
(1,1,1,1,1,1,1,1,2,5) and (7,51,52,55,55,55,59,63,66,67)
on photo of (a) 1-year-old, and (b) 35-year-old, resp.}
\label{fig:face_crowd}
%\label{fig:all}
% easiest: 046A04.JPG
% AVG: 7.761955e-02
% BP1: 3.440957e-01
% BP2: 1.571412e-01
% Iter: 2.708888e-01
% Ora: 4.885221e-01
%     hardest: 032A15.JPG
% AVG: 2.245510e+01
% BP1: 2.125741e+01
% BP2: 2.334494e+01
% Iter: 2.023362e+01
% Ora: 2.572180e+01
\end{minipage}
%\end{figure}
}
\end{figure}

  \vspace{-0.1cm}
\section{Conclusion}
\vspace{-0.2cm}

\label{sec:conclusion}
%\vspace{-0.1cm}

We study a model to address the problem of aggregating 
real-valued responses from a crowd of workers with 
heterogeneous noise level. In particular, inspired by the observation on answer pattern of Amazon MTurk workers,
we use a canonical noise model.
This modeling allows us to pose this crowdsourced regression problem as an inference problem over a graphical model, naturally motivating the proposed {\bayIter} algorithm based on BP.
Typically, the analysis of such iterative algorithms is not tractable even
for estimating discrete labels.
However, 
our theoretical framework,
inspired by recent advances in BP, e.g. \cite{mossel2014},
provides
sharp guarantees on {\bayIter}
and shows its optimality for a broad range of parameters. 

\iffalse
A promising future research direction with significant practical interest is 
the question of how to adaptively assign tasks to make more efficient use of the budget. 
As workers typically arrive in an online fashion, 
such heuristics are used widely in practice with little theoretical understanding. 
Efficient and principled schemes have  given significant gain in, for instance, voting in social media \cite{JJN16}. 
There are recent advances for adaptive crowdsourced classification \cite{HJV13,KO16}, but 
these approaches rely on the discrete nature of the problem
while crowdsourced regression requires to handle real-valued responses.
\fi
%For crowdsourced regression, it requires innovative ideas to characterize confidence intervals for 
%real-valued responses. 
%Our model seems a promising approach for this since
%it converts analysis of discrete random variables (noise level) to 
%that of real-valued ones (ground truth).

An important research direction is in generalizing the proposed noise model. 
First natural generalization is to allow differing task difficulties, by adding an additional independent Gaussian with variance $\sigma^2_i$ 
for answers on task $i$. Larger variance represents more difficult tasks. 
Second natural generalization is to allow worker biases, by adding a constant shift of value $\mu_u$ for answers given by worker $u$ as \citet{piech2013tuned} considered.
Our observations on the crowdsourced {\MF FG-NET} datasets also suggest 
heterogeneous task difficulty and boundary effect on tasks. 
%suggests several  improving noise models to capture 
%In fact, we observe such characteristics in the crowdsourced {\MF FG-NET} datasets.
%It is also independent of research interest to develop more realistic model capturing
As in the examples in Figure~\ref{fig:face_crowd}, we often observe less estimation error
for photos of younger individuals: MSE $15.00$ for $233$ photos whose ages are below $5$,
and MSE $85.39$ for $769$ photos whose ages are above $5$.
This shows heterogeneous task difficulty and boundary effect at the same time:
age prediction on older individual is more challenging due to more variations,
and human age cannot be negative.
% as its object biological age.

%
%   24.2983
%
%   65.3440
%
%num_young =
%
%   233
%
%num_old =
%
%   769

%%% Local Variables:
%%% mode: latex
%%% TeX-master: "main"
%%% End:

{\small
\bibliographystyle{plainnat}
\bibliography{ref}
}

\appendix

% !TEX root =  main.tex
%\clearpage

\section{Importance of Crowdsourcing System on Machine Learning Applications}
	%to Demonstrate the Importance of Estimator}
\vspace{-0.1cm}

Crowdsourcing is a primary marketplace to get labels on training datasets, 
to be used to train machine learning models. 
In this section, using both semi-synthetic and real datasets, 
we investigate the impact of having higher quality labels 
 on real-world machine learning tasks. 
We show that sophisticated regression algorithms like {\MF \bayIter}
can produce high quality labels on 
 the crowdsourced training datasets, 
 improving the end-to-end performance of convolutional neural network (CNN) 
 on visual object detection or human age prediction.
This highlights the importance of estimator but also
justifies the use of the proposed {\MF \bayIter}, in a real world system. 

%We also provide the experimental results demonstrating
%the impact of crowdsourced regression on real-world machine learning tasks.
%To do so, we investigate how much an efficient estimator, refining
% the crowdsourced training dataset,
% improves the performance of convolutional neural network (CNN) 
% for the object detection problem.

\subsection{Visual Object Detection} 

\label{sec:VOD}

%which is a fundamental problem in the area of computer vision
%The vision task requires a huge amount of the training datasets often obtained by 
 %the crowdsourcing system, e.g., Amazon's mechanical turk \cite{deng2009imagenet}.
\noindent{\bf Emulating a crowdsourcing system.} 
To do so, we use PASCAL
 visual object classes (VOC) datasets from \cite{pascal15}:
%two real-world object detection datasets:
{\MF VOC-07/12} 
consisting of $40,058$ annotated objects in $16,551$ images. % in total.
%consisting of $12,608$ and $27,450$ annotated objects in $5,011$ and $11,540$ images in total.
Each object is annotated by a %rectangular
 bounding box expressed
 by two opposite corner points.
% We consider each corner point as the true position of a task in our model.
We emulate the crowdsourcing system with 
a random $(\ell = 3, r = 10)$-regular bipartite graph % between images and virtual workers. 
where each image is assigned to $3$ workers and each worker
is assigned $10$ images ($\simeq 24.2$ objects on average) to draw the bounding boxes of every object in the assigned images.
% the set of images.  
Each worker has variance drawn uniformly at random from support $\mathcal{S} = \{10, 1000\}$.
 and generates noisy responses of which examples are shown in 
Figure~\ref{fig:annotation}.

\begin{figure*}[h!]
\begin{center}
     \subfloat[When $\sigma^2_u  = 10$ \label{fig:annotation10}]
     {\includegraphics[height=2.5cm]{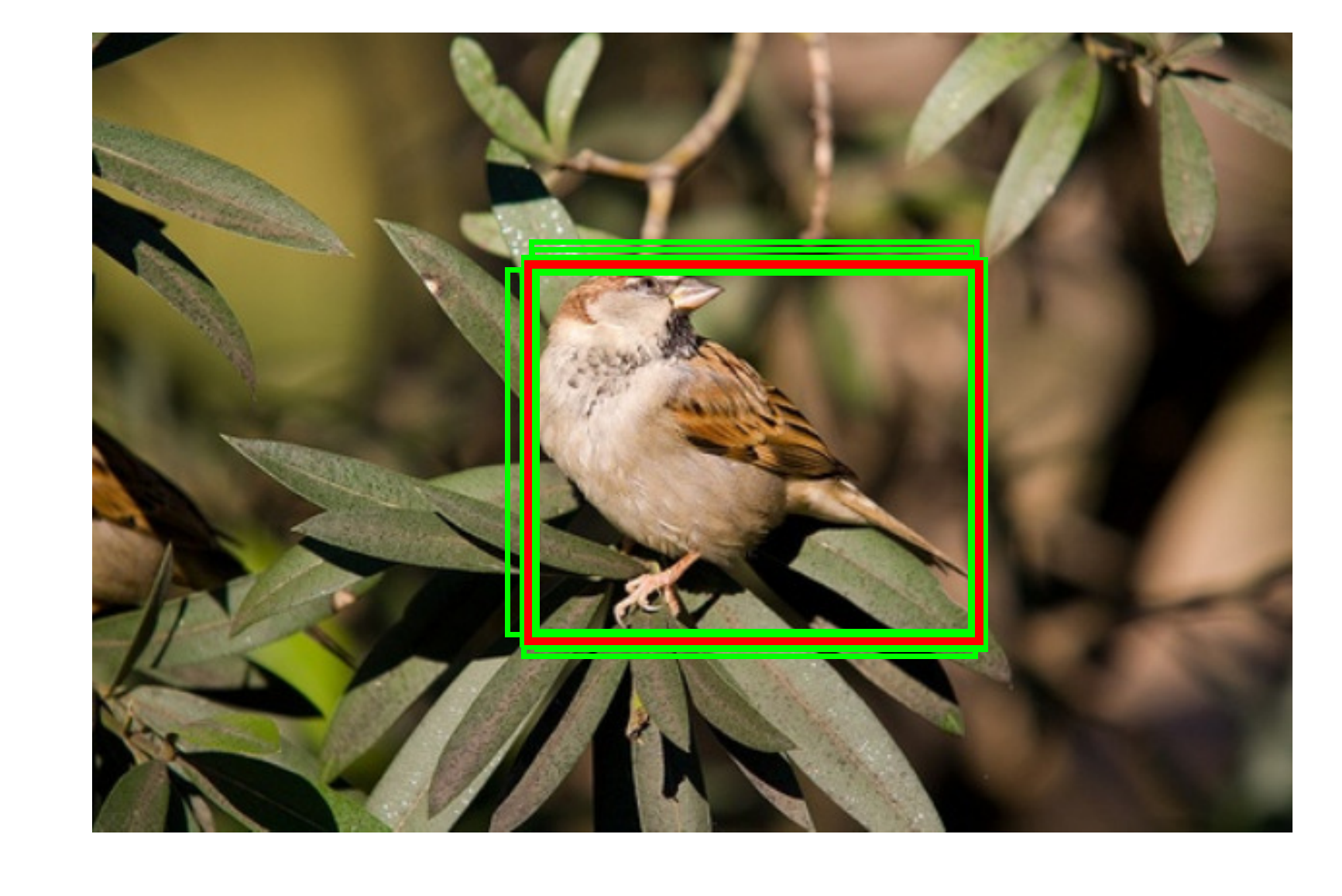}
	\includegraphics[height=2.5cm]{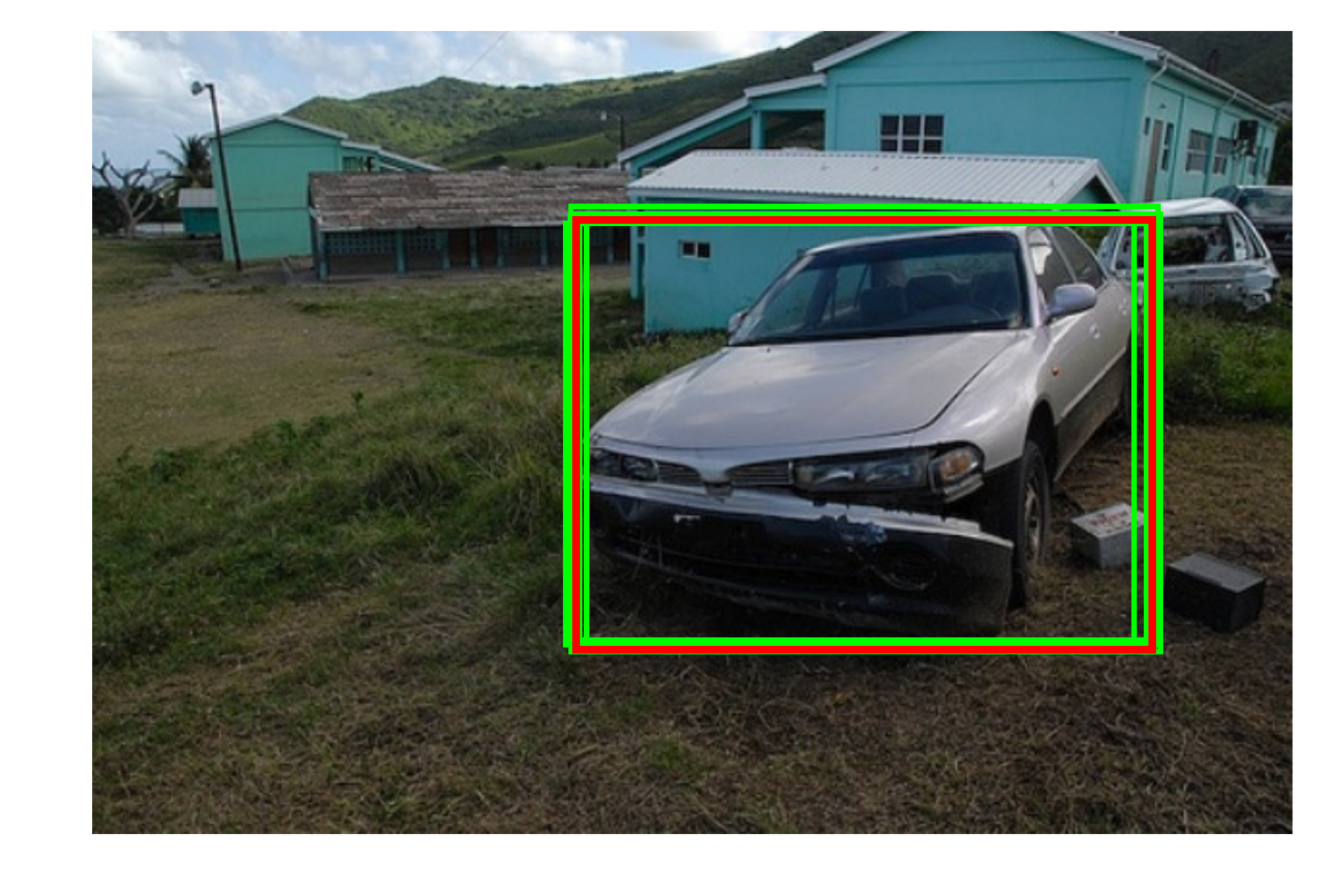}} 
     \subfloat[When $\sigma^2_u  = 1000$ \label{fig:annotation1000}]
     {\includegraphics[height=2.5cm]{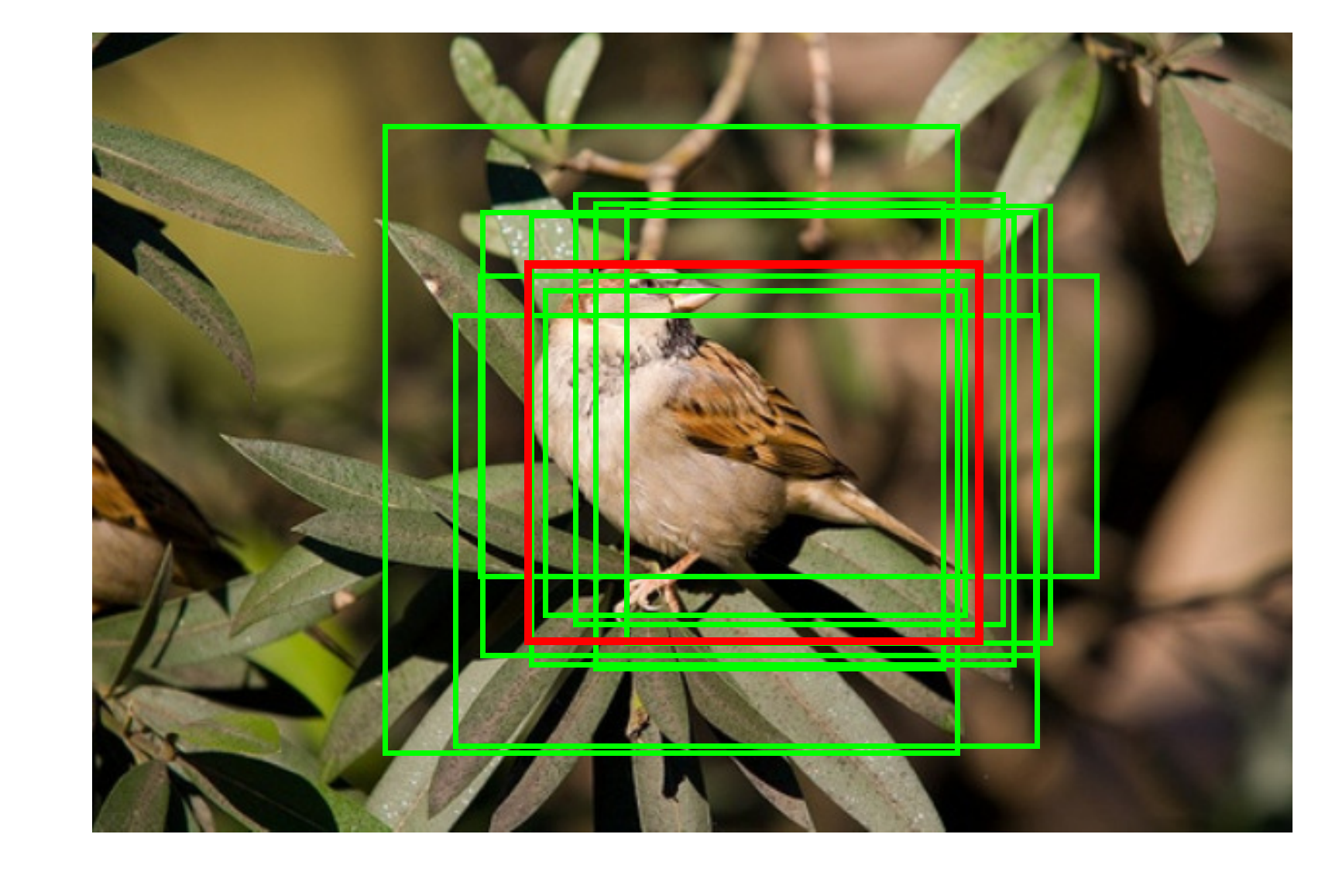}
	\includegraphics[height=2.5cm]{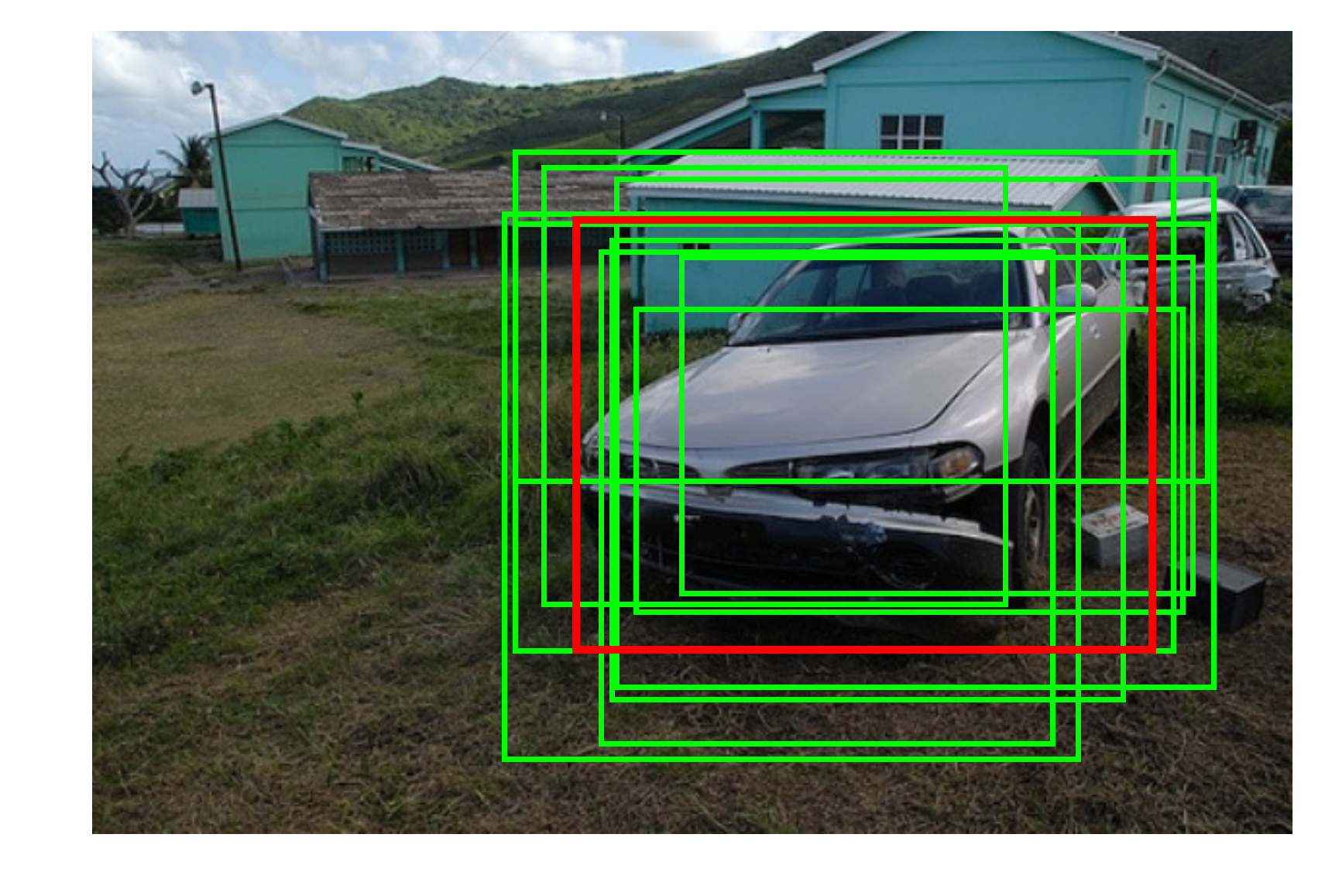}} 
     \end{center}
   \caption{
  Examples of object annotations by a worker $u$ with $\sigma^2_u = 10$ or $1000$.
    %{\MF PASCAL VOC2007} test images.
  }
\label{fig:annotation}%\label{fig:all}
\end{figure*}
%%%%%%%%%%%%%%%%%%%%%%%
% Three figures
%%%%%%%%%%%%%%%%%%%%%%%
\iffalse
  \begin{figure*}[b!]
\begin{center}
     \subfloat[When $\sigma^2_u  = 10$ \label{fig:annotation10}]
     {\includegraphics[height=3cm]{../fig/vision/10_is_not_large1}
	\includegraphics[height=3cm]{../fig/vision/10_is_not_large2}
	\includegraphics[height=3cm]{../fig/vision/10_is_not_large3}} \\
     \subfloat[When $\sigma^2_u  = 1000$ \label{fig:annotation1000}]
     {\includegraphics[height=3cm]{../fig/vision/1000_is_not_large1}
	\includegraphics[height=3cm]{../fig/vision/1000_is_not_large2}
	\includegraphics[height=3cm]{../fig/vision/1000_is_not_large3}} 
     \end{center}
   \caption{
  Examples of object annotations by a worker $u$ with $\sigma^2_u = 10$ or $1000$.
    %{\MF PASCAL VOC2007} test images.
  }
\label{fig:annotation}%\label{fig:all}
\vspace{-0.08in}
\end{figure*}
\fi
%%%%%%%%%%%%%%%%%%%%%%%%%%%

\noindent{\bf Evaluation on visual object detection task.}  %\noindent{\bf Impact of de-noising process.} 
Using each training dataset from four different estimators ({\MF Average}, {\MF \nonIter}, {\MF \bayIter}, {\MF Strong-Oracle}),
we train\footnote{As suggested by \citet{liu2016ssd}, we train SSD using $120,000$ iterations where the learning rate is initialized at $4\times10^{-5}$, and is decreased by factor $0.1$ at $80,000$-th and $100,000$-th iterations.} CNN of single shot multibox detector (SSD) model \cite{liu2016ssd}, which shows the state-of-the-art performance.
Then we evaluate the trained SSD's in terms of the mean average precision (mAP)
which is a popular benchmarking metric for the %PASCAL VOC 
datasets (see Table~\ref{tab:ssd-table}).
Intuitively, a high mAP means more true positive and less false positive detections.

Comparing mAP of {\MF Average},
mAP's of {\MF \bayIter} and  {\MF \nonIter} 
are $4\%$ mAP higher as Figure~\ref{fig:ex1_app} also visually shows the improvement. % in the experiment with {\MF VOC-07/12} datasets. 
Note that achieving a similar amount of improvement is highly
challenging. %, as evidenced in recent extensive research efforts in the literature.
%on
%smarter machine learning algorithms. 
Indeed, Faster-RCNN in
\cite{ren2015faster} is proposed to improve the mAP of Fast-RCNN in
\cite{girshick2015fast} from $70.0\%$ to $73.2\%$.  Later, SSD in
\cite{liu2016ssd} is proposed to achieve $4\%$ mAP improvement over
Faster-RCNN. % $77.2$ mAP).
%%%%%%%%%%%%%%%
%%In addition to the mAP improvement, more accurate training dataset with less MSE leads to more
%%qualified detection with higher overlap
%%ratio, % than others from {\MF Simple} or {\MF Average}.
%%whose examples are provided
%%in Figure~\ref{fig:ex1}.
%in the supplementary material.
%We also present examples SSD detect objects {in the supplementary material.}
%Figure
%\ref{fig:ex1}, where %,\ref{fig:ex2}, and \ref{fig:ex3}.
%we observe that the training dataset from {\MF \nonIter} or {\MF \bayIter}
%enables SSD to not only detect more objects but also draw tighter
%bounding boxes than {\MF Average}.

%\else %
%In addition to the mAP improvement, more accurate training dataset with less MSE leads to more
%qualified detection with higher overlap
%ratio. % than others from {\MF Simple} or {\MF Average}.
%We present how SSD detect objects in Figure
%\ref{fig:ex1}, where %,\ref{fig:ex2}, and \ref{fig:ex3}.
%we observe that the training dataset from {\MF \nonIter} or {\MF \bayIter}
%enables SSD to not only detect more objects but also draw tighter
%bounding boxes than {\MF Average}.
%\fi
%%%%%%%%%%%%%%%

%We further investigate the impact of crowdsourced regression algorithms on real-world machine learning tasks.
%To do so, 

%which is a fundamental problem in the area of computer vision
%The vision task requires a huge amount of the training datasets often obtained by 
 %the crowdsourcing system, e.g., Amazon's mechanical turk \cite{deng2009imagenet}.
 %\noindent{\bf Emulating a crowdsourcing system.} 

%\subsection{Real Datasets} % FG-NET Datasets} 

  \begin{table}[t!]
%    \begin{minipage}{0.6\columnwidth}
\caption{ 
Estimation quality of {\MF Average}, {\MF \nonIter}, {\MF \bayIter}, and {\MF Strong-Oracle}
on crowdsourced {\MF VOC-07/12} datasets from virtual workers in terms of MSE, and 
performance of SSD's trained with
the estimated dataset and ground truth ({\MF VOC-07/12})
in terms of mean average precision (mAP); mean portion of the output bounding box overlapped on the ground truth (Overlap).
% For a reference performance, we also provide mAP when we train SSD using ground truth.
}
      \vspace{-0.2cm}
\label{tab:ssd-table}
\begin{center}
%\begin{small}
{ % \fontsize{9}{7}
\begin{sc}
  %   \begin{minipage}{1\columnwidth}   
\begin{tabular}{cccc}
\hline
% \multirow{2}{*}{Dataset}  & \multirow{2}{*}{Estimator} & \multirow{2}{*}{MSE}  &{Mean Average }	 & \multirow{2}{*}{Overlap} \\
% & & & { Precision} &\\
  \multirow{2}{*}{Estimator}  & 
Data noise  & \multicolumn{2}{c}{Testing accuracy}  \\
%  \cmidrule(lr){3-4}
 & (MSE)  & (mAP)	 & ({Overlap}) \\
\hline
%\multirow{4}{*}{\minitab[c]{VOC 07/12}}
					{\MF Average}	&355.6	& 71.80 	& 0.741\\%/0.122 \\
					{\MF \nonIter}		&116.1	&75.62 	& 0.767 \\ %0.122 \\
					{\MF \bayIter}		&109.8 	& 75.94 	& 0.772 \\ %/0.123\\
					{\MF Strong-Oracle} 		&109.8	& 76.05 	& 0.774 \\ %/0.123\\
					\hline
					\!\!Ground truth\!\!\!\!\! & -		& 77.79  	& 0.784 \\ %/0.123        

\hline
%\multirow{5}{*}{\minitab[c]{VOC \\ 07}}
%					&Average 	&353.1	& 62.70 	& 0.711 	\\ %/0.117 \\
%					& \nonIter 		&170.6	&  66.383 		&0.726 			\\ %/0.117 \\
%					&\bayIter 		&111.2	& 67.52 	& 0.736 	\\ %/0.120\\
%					&Oracle		&111.2	&  67.55 	& 0.736 	\\ %/0.120\\
%					&Ground Truth   		&0		& 69.47 	& 0.746 		\\ %/0.120       \\
%\hline
\end{tabular}
\end{sc}
%\end{small}
}
\end{center}
%\end{minipage}
%\begin{minipage}{0.4\columnwidth}
%{
%\captionsetup[subfloat]{captionskip=2pt}
%%\begin{figure}[t!]
%%\vspace{-0.6cm}
%\begin{center}
%     \subfloat[$\sigma^2_u  = 10$ \label{fig:annotation10_intext}]
%     {	\includegraphics[width=0.45\columnwidth, trim={5cm 1cm 0.5cm 1cm},clip]{../fig/vision/10_is_not_large3}}
%     \subfloat[$\sigma^2_u  = 1000$ \label{fig:annotation1000_intext}]
%     {	\includegraphics[width=0.45\columnwidth, trim={5cm 1cm 0.5cm 1cm},clip]{../fig/vision/1000_is_not_large3}} 
%     \end{center}
%\captionof{figure}{
% An example of object annotations by workers' with $\sigma^2_u = 10$ or $1000$.
%  }
%\label{fig:annotation_intext_app}
%}
%\end{minipage}
\end{table}

{
\begin{figure*}[t]
\captionsetup[subfloat]{captionskip=1pt}
%\vspace{-0.3cm}
   %\begin{center}
%\begin{minipage}{0.5\columnwidth}
     \begin{minipage}{0.245\columnwidth}   
     \subfloat[{\MF Average} \label{fig:ex1-avg}]{\includegraphics[width=1\columnwidth]{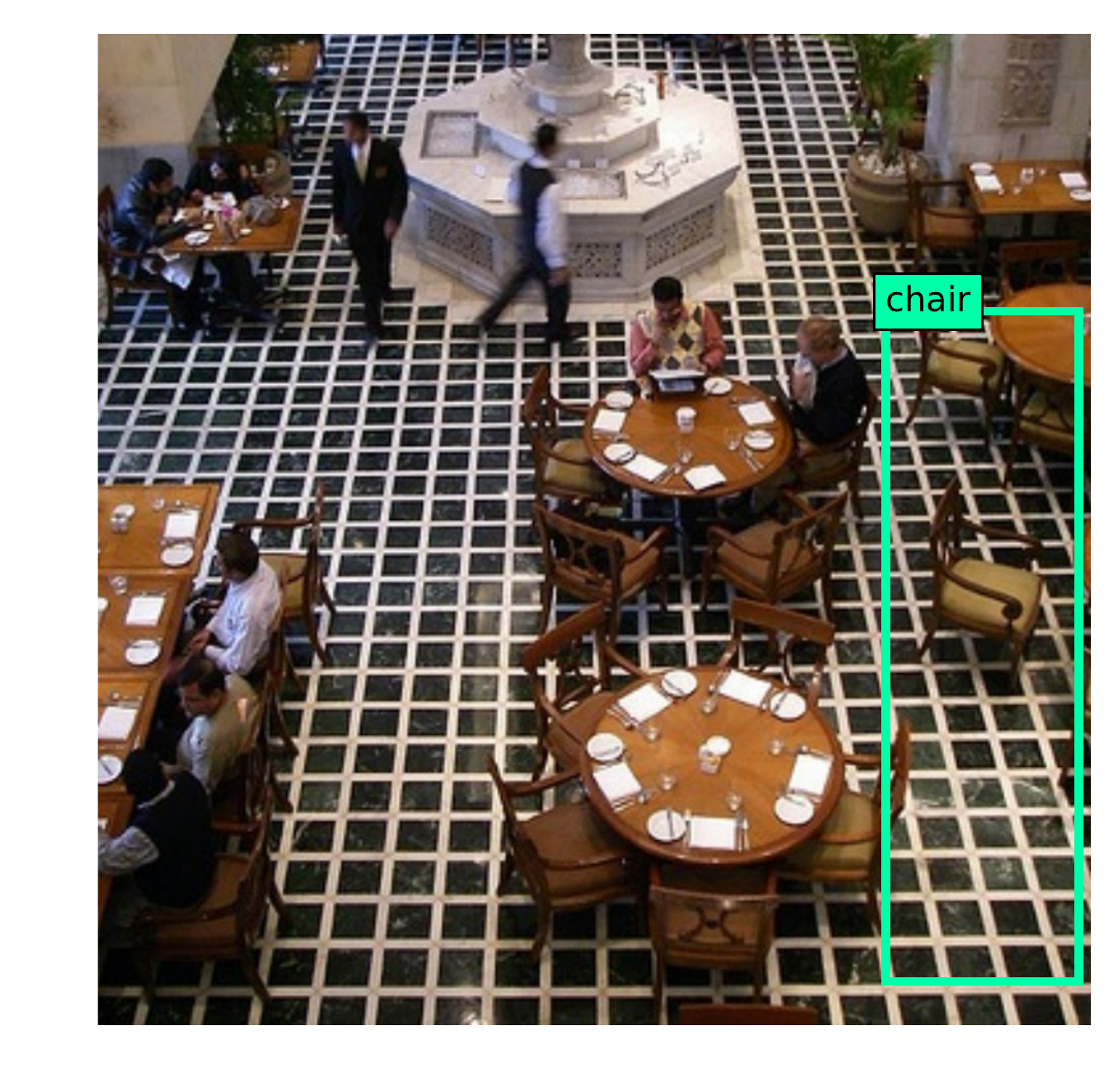}}
               \end{minipage}
          \begin{minipage}{0.245\columnwidth}   
     \subfloat[{\MF \nonIter} \label{fig:ex1-nav}]{\includegraphics[width=1\columnwidth, trim={0.5cm 0.15cm 0cm 0cm},clip]{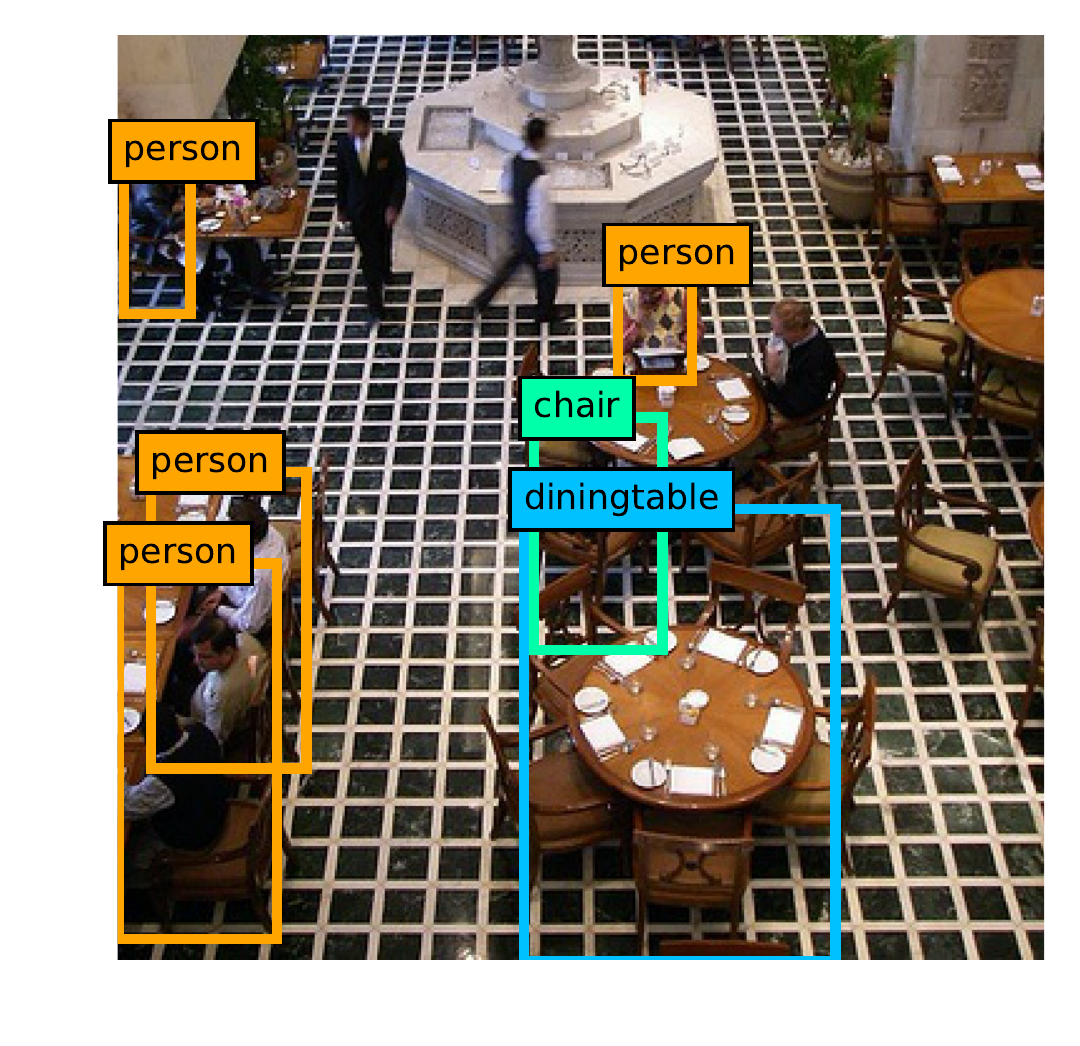}} 
          \end{minipage} 
          \begin{minipage}{0.245\columnwidth}   
     \subfloat[{\MF \bayIter}\label{fig:ex1-bp}]{\includegraphics[width=1\columnwidth]{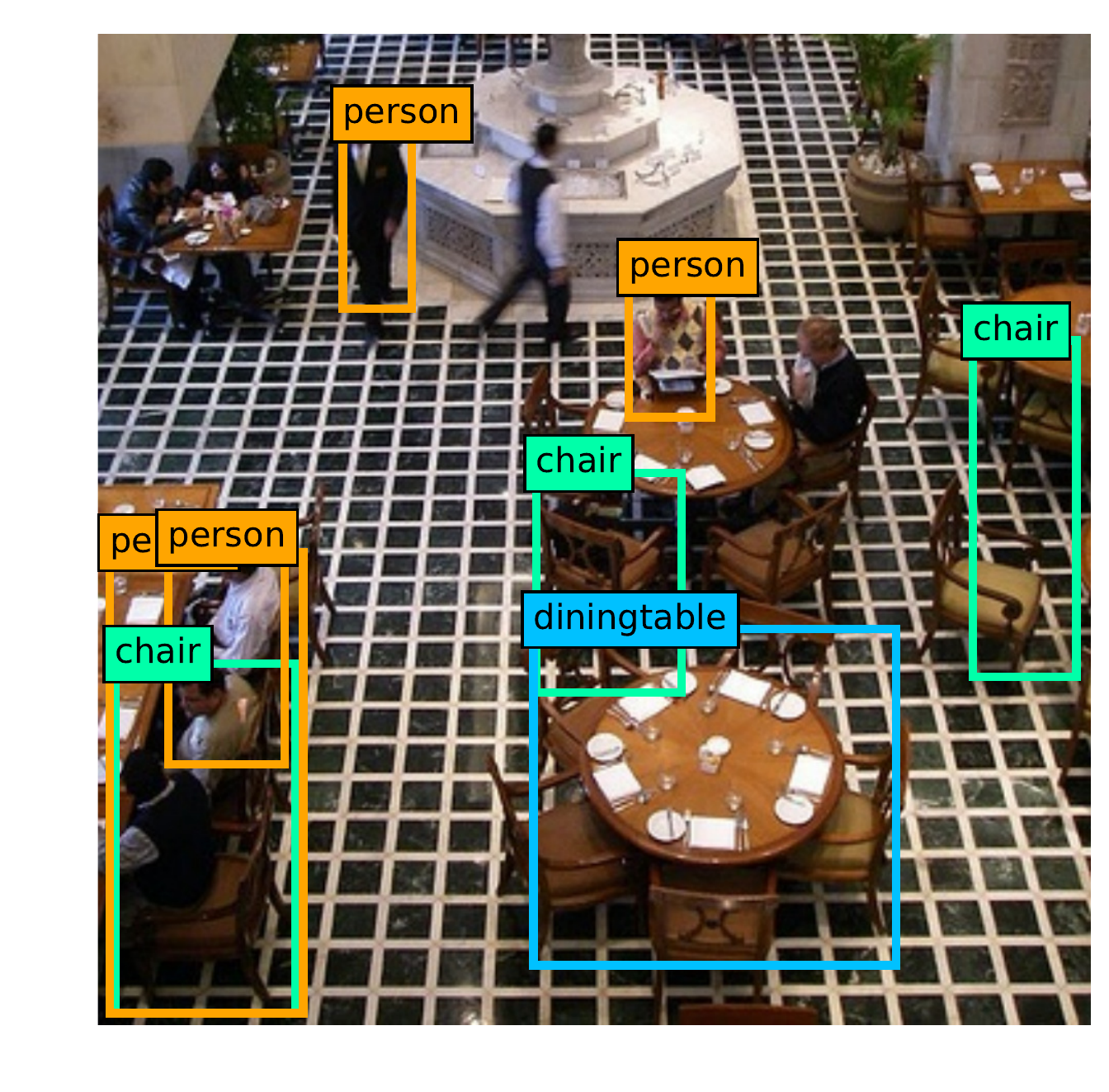}} 
     \end{minipage}       
     \begin{minipage}{0.245\columnwidth}   
     \subfloat[{\MF Strong-Oracle}\label{fig:ex1-ora}]{\includegraphics[width=1\columnwidth]{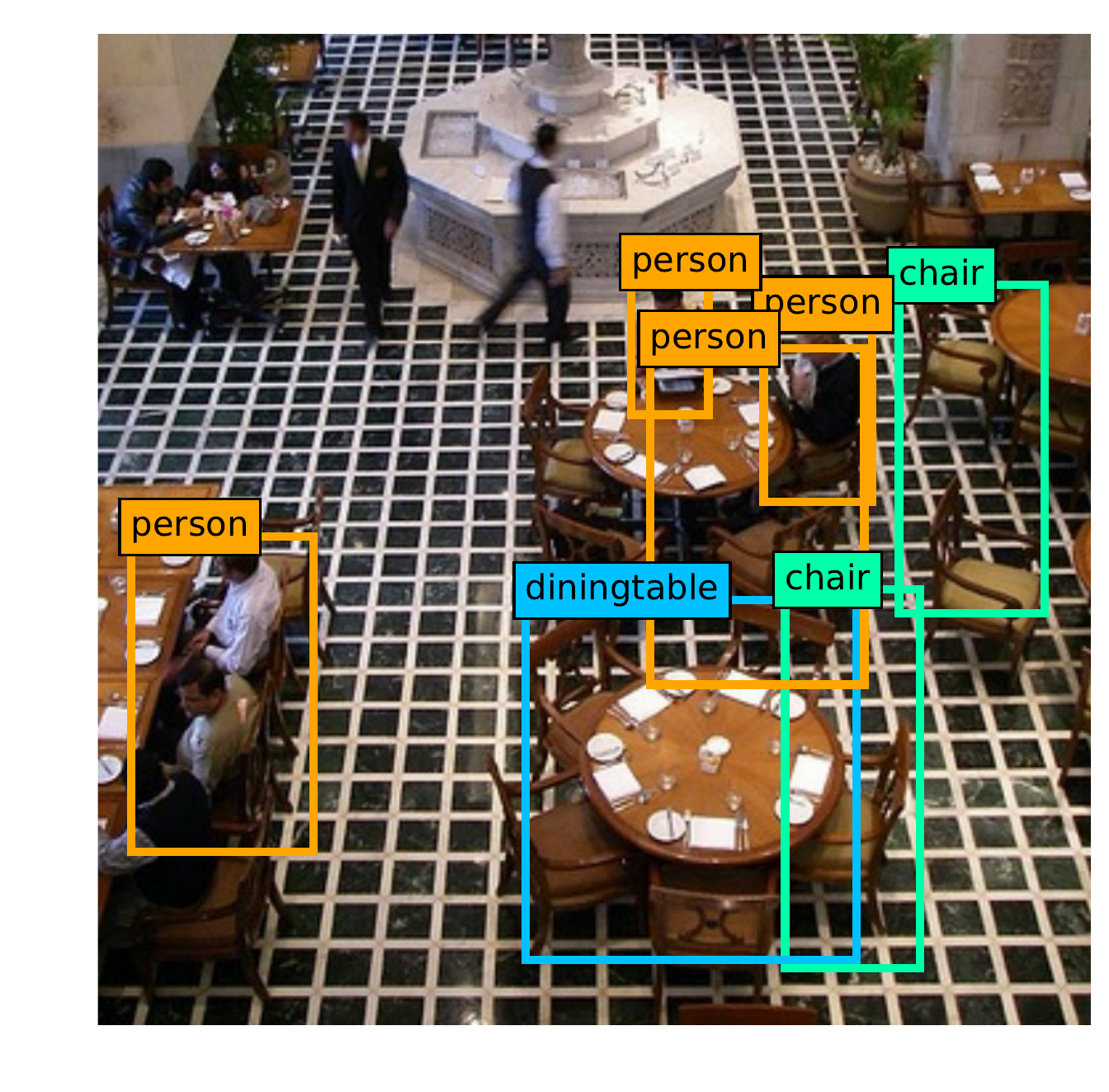}} 
 \end{minipage}  
   \caption{
  Examples of detections of SSD trained by the crowdsourced 
  {\MF VOC-07/12} 
  datasets  by {\MF Average}, {\MF \nonIter}, {\MF \bayIter}, and {\MF Strong-Oracle}.
  }
\label{fig:ex1_app}
\end{figure*}
}

%\noindent {\bf Human age prediction.} 
\subsection{Human Age Prediction} 

\label{sec:HAP}
%To evaluate performance of our methods in real-world scenario, 
%we use
\noindent{\bf Real-world dataset.} 
We also perform similar experiment using datasets from a {\em real-world} crowdsourcing system.
We use {\MF FG-NET} datasets
which has been widely used as a benchmark dataset for facial age estimation \cite{lanitis2008comparative}.
The dataset contains $1,002$ photos of $82$ individuals' faces, in which
each photo has biological age %of individual when the photo was taken
as ground truth.
% and $10$ answers from workers, while a worker provides 
Furthermore, \cite{han2015demographic} provide 
crowdsourced labels on {\MF FG-NET} datasets,
in which $165$ workers in Amazon Mechanical Turk (MTurk) answer 
their own age estimation on given subset of $1,002$ photos
so that each photo has 
%biological age %of individual when the photo was taken
%as ground truth and 
$10$ answers from workers, while each worker provides a different number (from $1$ to $457$) of
 answers, and $60.73$ answers in average.

In the dataset, we often observe two extreme classes of answers for a task: 
a few outliers and consensus among majority. 
For example, for Figures~\ref{fig:face_crowd_app}\subref{fig:easy} and \ref{fig:face_crowd_app}\subref{fig:hard},
there exist noisy answers $5$ and $7$, respectively, which are far from majority $1$ and $55$, respectively.
Such observations suggest to choose a simple support, e.g.~$\mathcal{S}=\{  \sigma^2_{\rm  good}, \sigma^2_{\rm bad}\}$.
In particular, without any use of ground truth, we first run  {\MF {\nonIter}} 
and use the top $10\%$ and bottom $10\%$ workers' reliabilities as the binary support, which is
$\mathcal{S}_{\textnormal{est}} = \{6.687, 62.56\}$ in our experiment.

\noindent{\bf Evaluation on human age prediction task.}
We first compare the estimation of {\MF {\bayIter}} to other algorithms as reported in Table~\ref{tab:fg-net-table_app}.
Observe that MSE of {\MF {\bayIter}} with the binary support $\mathcal{S}_{\textnormal{est}}$ 
is close to that of {\MF Strong-Oracle}, while the other algorithms have some gaps.
This result from real workers supports the idea of simplified workers' noise level in our model.
We also evaluate the impact on de-noising process for human age prediction.
To this end, using the pruned datasets from different estimators,
we train\footnote{ We train VGG-16 using batch normalization with standard hyper parameter setting, where we initialize based on the imagenet pre-trained model. To regressed the estimated age of the given face images, we replaced final layer of VGG-16 with one dimensional linear output layer, and fine-tuned all the layers with initial learning rate $0.01$ (and divided by $10$ after $30, 60, 90$ epoch). Protocol of measuring model performance is standard Leave One Person Out (LOPO) which uses images of $81$ subjects for training and use remaining subject for test, and the final result is averaged over the total $82$ model training \cite{panis2016overview}.
} one of the state-of-art CNN models, called VGG-16 \cite{simonyan2014very},
under some modification proposed by \cite{rothe2015dex} for human age prediction.
Although the crowdsourced dataset {\MF {FG-NET}} is 
not large-scale in order to see performance difference,
models trained by both {\MF {\bayIter}} and {\MF {\nonIter}} show superiority to that of {\MF {Average}} 
(which is widely used in practical crowdsourcing systems),
in terms of %mean and 
median absolute errors (MDAE), %(MAE and MDAE, respectively),
as reported in Table~\ref{tab:fg-net-table_app}.

 \begin{table}[t]
      \vspace{-0.3cm}
\begin{minipage}{0.6\columnwidth}
\caption{ 
Estimation quality of {\MF Average}, {\MF \nonIter}, {\MF \bayIter} with $\mathcal{S}_{\textnormal{est}}$, and {\MF Strong-Oracle}
on crowdsourced {\MF FG-NET} datasets from Amazon MTurk workers
in terms of MSE,
and 
performance of VGG-16's trained with the estimated datasets 
and the ground truth dataset ({\MF FG-NET})
in terms of
%mean absolute error (MAE); 
median absolute error (MDAE).
%Performance of VGG-16's trained by
%the ground truth datasets ({\MF FG-NET}) 
%and the crowdsourced datasets using
%{\MF Average}, {\MF \nonIter}, {\MF \bayIter}, and {\MF Oracle} 
%in terms of mean squared errors (MSE); mean absolute error (MAE).
%We also provide the result using ground truth for a reference performance.
%For a reference performance, we also provide MAE when we train VGG-16 using ground truth.
}
      \vspace{-0.2cm}
\label{tab:fg-net-table_app}
\begin{center}
%\begin{small}
{  % \fontsize{9}{9}
\begin{sc}
  %   \begin{minipage}{1\columnwidth}   
\begin{tabular}{cccc}
\hline
% \multirow{2}{*}{Estimator}  & \multirow{2}{*}{Estimator} & \multirow{2}{*}{MSE} 	\\
  \multirow{2}{*}{Estimator}  & Data noise  & \!\!Testing error\!\! \\ %\multicolumn{2}{c}{Testing error} \\
{} & ({MSE})  %&  (MAE)	
& (MDAE) \\
\hline
%\multirow{4}{*}{\minitab[c]{VOC 07/12}}
					{\MF Average}		&34.99	%& 5.006 	
							& 3.227 \\
					{\MF \nonIter}		&32.80	%&4.687 	 
							& 3.135 \\
					 {\MF \bayIter}		&28.72   %& 4.773 	 
					 		& 3.100\\
					{\MF Strong-Oracle} 		&28.45	%& 4.623 	
							 & 3.003\\
					\hline
					\!\!Ground truth\!\!\!\!\! & -	 %&	XX 
							&  1.822	 \\ %/0.123        

\hline
%AVG: 5.005607e+00 
%AVG: 3.226521e+00 
%BP1: 4.773906e+00 
%BP1: 3.100259e+00 
%BP2: 4.386465e+00 
%BP2: 2.890493e+00 
%Iter: 4.687298e+00 
%Iter: 3.135448e+00 
%Ora: 4.623180e+00 
%Ora: 3.002691e+00
\end{tabular}
\end{sc}
%\end{small}
}
%\vspace{cm}
\end{center}
\end{minipage}
\begin{minipage}{0.4\columnwidth}
{
\captionsetup[subfloat]{captionskip=3pt}
%\begin{figure}[t!]
%\begin{minipage}{0.4\columnwidth}
\begin{center}
     \subfloat[1-year-old \label{fig:easy}]
     {	\includegraphics[height=0.4\columnwidth, trim={0cm 0.1cm 0.1cm 0cm},clip]{fig/vision/010A01_easy}} 
     \hspace{0.8cm}
     \subfloat[35-year-old\label{fig:hard}]
     {	\includegraphics[height=0.4\columnwidth, trim={0cm 0.1cm 0cm 0.1cm},clip]{fig/vision/028A35_hard}} 
     \end{center}
   \captionof{figure}{
Easy and hard samples from {\MF FG-NET} in terms of
average absolute error of crowd workers' answers:
(1,1,1,1,1,1,1,1,2,5) and (7,51,52,55,55,55,59,63,66,67)
on photo of (a) 1-year-old, and (b) 35-year-old, resp. 
  }
\label{fig:face_crowd_app}%\label{fig:all}
% easiest: 046A04.JPG
% AVG: 7.761955e-02
% BP1: 3.440957e-01
% BP2: 1.571412e-01
% Iter: 2.708888e-01
% Ora: 4.885221e-01
%     hardest: 032A15.JPG
% AVG: 2.245510e+01
% BP1: 2.125741e+01
% BP2: 2.334494e+01
% Iter: 2.023362e+01
% Ora: 2.572180e+01
%\end{minipage}
%\end{figure}
}
\end{minipage}
     \vspace{-0.2cm}
\end{table}

\section{Model Derivations}
\label{sec:model_calc}
\vspace{-0.2cm}

\subsection{Calculation of $\bar{\mu}$ }%\eqref{eq:optimal_mu_given_sigma}}
\label{sec:model_calc1}
\vspace{-0.1cm}

We first show that the posterior 
%Using Bayes' theorem and the independence between $\mu_i$ and $\sigma^2$,
%the conditional marginal 
density of $\mu_i$
given $A_i = y_i := \{y_{iu} \in \mathbb{R}^{d } : u \in M_i\}$ and $\sigma^2_{M_i}$
 is a Gaussian density in the following: 
\begin{align}
&    f_{\mu_i}[x   \mid  A_i = y_{i}, \sigma^2_{M_i}]  =  
 \frac{f_{\mu_i}[x] \, f_{A_i} \left[y_i  \mid  \mu_i = x_i, \sigma^2_{M_i}\right] }
 { f_{A_i} \left[y_i  \mid   \sigma^2_{M_i}\right] } \label{eq:marginal} \\
 %{\int  \Pr[\mu_i] \cdot \Pr \left[A_i \longmid \mu_i, \sigma^2_{M_i}\right] d\mu_i} \label{eq:marginal} \\
% & =
%\mathcal{N}(\mu_{i}  \mid \nu_{i}, \tau^2_{u} )
% \prod_{u \in M_i}  \mathcal{N}(A_{iu}  \mid \mu_{i}, \sigma^2_{u} ) \\
 &\hspace{2.6cm} = \phi \left(x   \mid  \bar{\mu}_i\left( y_i, \sigma^2_{M_i}\right), \bar{\sigma}^2_i\left(\sigma^2_{M_i}\right) \right) \label{eq:marginal_calc} 
\end{align}
where we define $\bar{\sigma}^2_i : \mathcal{S}^{M_i}\to \mathbb{R}$ and  $\bar{\mu}_i : \mathbb{R}^{d \times M_i} \times \mathcal{S}^{M_i} \to \mathbb{R}^d$ as follows
\begin{align*}
\bar{\sigma}^2_i \left(\sigma^2_{M_i} \right)
:= \frac{1}{\frac{1}{\tau^2} + \sum_{u \in M_i}  \frac{1}{\sigma^2_{u}}} 
\;,\quad \text{and} \quad
\bar{\mu}_i\left(A_i, \sigma^2_{M_i} \right)
 := \bar{\sigma}^2_i \left(\sigma^2_{M_i}\right)
  \bigg( \frac{\nu_i}{\tau^2} + \sum_{u \in M_i} \frac{A_{iu}}{\sigma^2_{u}} \bigg) \;.
\end{align*}
The Gaussian posterior density \eqref{eq:marginal_calc} follows from:
\begin{align*}
f_{\mu_i}[x]  f_{A_i} \left[y_i \longmid \mu_i = x, \sigma^2_{M_i}\right] 
&=
\phi (x  \mid \nu_{i}, \tau^2 )
 \prod_{u \in M_i}  \phi(y_{iu}  \mid \mu_{i}, \sigma^2_{u} ) \\
&= \mathcal{C}_i \left(y_i, \sigma^2_{M_i} \right)
 \phi \left(x  \longmid \bar{\mu}_i\left(y_i, \sigma^2_{M_i}\right), \bar{\sigma}^2_i\left(\sigma^2_{M_i}\right) \right) 
\end{align*}
where we have $f_{A_i} [y_i \mid \sigma^2_{M_i}]  = \mathcal{C}_i (y_i, \sigma^2_{M_i} )$
with 
%\begin{align*}
%f_{A_i} [y_i \mid \sigma^2_{M_i}]  = \mathcal{C}_i (y_i, \sigma^2_{M_i} ) \; . 
% \label{eq:marC}
%\end{align*}
%with 
%we define $\mathcal{C}_i : \mathbb{R}^{d \times M_i} \times \mathcal{S}^{M_i} \to \mathbb{R}$ and
%$\mathcal{D}_i : \mathbb{R}^{d \times M_i} \times \mathcal{S}^{M_i} \to \mathbb{R}$ as follows
\begin{align*}
\mathcal{C}_i (A_i, \sigma^2_{M_i} ) 
&~:=~  {\bigg(\frac{ 2\pi  \bar{\sigma}^2_i (\sigma^2_{M_i})}{ 2\pi \tau^2_i  \prod_{u \in M_i} (2\pi \sigma^2_u)}\bigg)^{\frac{d}{2}}}  e^{-\mathcal{D}_i (A_i, \sigma^2_{M_i} ) }, ~\text{and}
\end{align*}
\begin{align*}
 \mathcal{D}_i (A_i, \sigma^2_{M_i} ) 
&~:=~ \frac{1}{2} \bar{\sigma}^2_i (\sigma^2_{M_i}) 
\Bigg(
\sum_{u \in M_i}  \frac{\|A_{iu} - \nu_{i}\|_2^2}{\sigma_u^2 \tau^2} 
+
\sum_{v \subset M_i \setminus \{u\}} \frac{\|A_{iu} - A_{iv}\|_2^2}{\sigma_u^2 \sigma_v^2}
 \Bigg) \; .
\end{align*}
%\note{Please give me an idea for writing $\mathcal{C}_i \left(A_i, \sigma^2_{M_i} \right)$... }

The Gaussian density in \eqref{eq:marginal_calc} leads to the posterior mean, which is weighted average of the prior mean 
and the worker responses, each weighted by the inverse of its variance:  
%we note that
%$\mu_i$ in \eqref{eq:marginal_calc} is
%%$\Pr[\mu_i \mid A_i, \sigma^2_{M_i}]$ 
%the Gaussian with mean $\bar{\mu}_i (A_i, \sigma^2_{M_i})$, i.e., 
\begin{align*}
\EXP[\mu_i \mid A_i, \sigma^2_{M_i}] = \bar{\mu}_i (A_i, \sigma^2_{M_i}) \; . %\label{eq:conditional_mu}
\end{align*}
Thus, the optimal estimator $\hat{\mu}_i^*(A)$ is given 
as \eqref{eq:MAP}.
%by 
%\begin{align*}
%\hat{\mu}^*_i (A)  = \sum_{\sigma^2_{M_i} \in \mathcal{S}^{M_i}} \bar{\mu}_i (A_i, \sigma^2_{M_i}) \cdot \Pr [\sigma^2_{M_i} \mid A] \;.
%\end{align*}

%\subsection{Calculation of \eqref{eq:sub_factor_graph}}
\subsection{Factorization of Joint Probability}
\label{sec:model_calc2}
\vspace{-0.1cm}

Using Bayes' theorem, it is not hard to write 
the joint probability of $\sigma^2$ given $A = y = \{y_{iu} \in \mathbb{R}^d : (i,u) \in E\}$,
\begin{align*}
\Pr[ \sigma^2 \mid A = y] 
 \propto f_{A}[y \mid \sigma^2]  =  \prod_{i \in V}  f_{A_i}[y_i \mid \sigma^2_{M_i}]   = \prod_{i \in V} \mathcal{C}_i (y_i, \sigma^2_{M_i} ) \; .
 \end{align*}
 
% , it follows that 
%$\Pr[ \sigma^2 \mid A] %& \propto \Pr[A\mid \sigma^2] =  \prod_{i \in V}  \Pr[A_i \mid \sigma^2_{M_i}] \nonumber \\
%\propto \prod_{i\in V}
% \mathcal{C}_i \left(A_i, \sigma^2_{M_i} \right).  %\label{eq:sub_factor_graph}
%$
%\begin{align*}
%\Pr[ \sigma^2 \mid A] %& \propto \Pr[A\mid \sigma^2] =  \prod_{i \in V}  \Pr[A_i \mid \sigma^2_{M_i}] \nonumber \\
%\propto \prod_{i\in V}
% \mathcal{C}_i \left(A_i, \sigma^2_{M_i} \right) \;.  %\label{eq:sub_factor_graph}
%\end{align*}
%Hence, the conditional probability of $\sigma^2_{M_i}$ given $A$
%%$\Pr[\sigma^2_u \mid A]$ 
%in \eqref{eq:MAP} 
%can be calculated by
% marginalizing out
%$\sigma^2_{-i} := \{\sigma^2_v : v \in W \setminus M_i \}$ 
%from the joint probability of $\sigma^2$,
%i.e.,
%\begin{align}
%  \Pr[\sigma^2_{M_i} \mid A] &= \sum_{\sigma^2_{-i} \in  \mathcal{S}^{W \setminus M_i} }  \Pr[\sigma^2 \mid A] ,  \label{eq:marginal_sum}
%\end{align}
%where 
%The summation in \eqref{eq:marginal_sum} is taken over exponentially
%many $\sigma^2_{-i} \in \mathcal{S}^{m-|M_i|}$ with respect to $m$. Thus,
%in general, the optimal estimator $\hat{\mu}^*(A)$ in
%\eqref{eq:MAP_calc}, requiring the probability of $\sigma^2_{M_i}$ given $A$ in \eqref{eq:marginal_sum},
% is {\it  computationally intractable}.

%%% Local Variables:
%%% mode: latex
%%% TeX-master: "main"
%%% End:

 % !TEX root =  main.tex

%\section{Proofs of Lemmas}
%\label{sec:lem_pf}

\section{Proofs of Lemmas}
\subsection{Proof of Lemma~\ref{lem:concentration}}
\label{sec:concentration-pf}

We first
introduce an inference problem
and connect its error rate to the expectation of likelihood of
worker $\rho$'s $\sigma^2_{\rho}$ given $A$.
Let $s_\rho \in \{1, \ldots, S\}$ be the index of $\tilde{\sigma}^2_{\rho}$,
i.e., $\tilde{\sigma}^2_{\rho} = \sigma^2_{s_\rho}$.
 Consider the classification problem recovering given but latent $s$
 %$\sigma^2_\rho = \sigma^2_{s} \in \mathcal{S}$
from $A_{\rho, 2k}$, where $A_{\rho, 2k}$ is generated from the crowdsourced regression model with fixed but hidden $\sigma^2 = \tilde{\sigma}^2$.
More formally, the problem is formulated as the following optimization problem:
\begin{align} \label{eq:classification_minimize}
\underset{\hat{s}_\rho:\text{estimator}}{\text{minimize}}~ \Pr[s_\rho \neq \hat{s}_\rho(A_{\rho, 2k})]
\end{align}
where the optimal estimator, denoted by $\hat{s}^*_\rho$, minimizes the classification error rate.
From the standard Bayesian argument,
the optimal estimator $\hat{s}^*_\rho$
is given $A_{\rho, 2k}$ as
\begin{align}\label{eq:estimating_index}
 \hat{s}^*_\rho(A_{\rho, 2k})  &: = \underset{s'_\rho = 1, \ldots, S}{\arg\max}~ \Pr[ s_\rho =  s'_\rho \mid A_{\rho, 2k} ] \;.
 \end{align}
%  \\
%  &= \underset{s' = 1, \ldots, S}{\arg\min}~ \Pr[ \sigma^2_{\rho}\neq \sigma^2_{s'} \mid A_{\rho, 2k} ] \;.
%\end{align*}
%Since the optimal estimator always maximizes the likelihood of $s_\rho$
%only given $A_{\rho, 2k}$,
From the construction of the optimal estimator in \eqref{eq:estimating_index},
it is not hard to check
%we can write
%the error rate of $\hat{s}^*$ as follows: 
\begin{align}
%\min_{\hat{s}_\rho} 
%\EXP[\Pr{[s_\rho = \hat{s}_\rho(A_{\rho, 2k})]} \mid ]  &=
 \Pr_{ \tilde{\sigma}^2}[s_\rho = \hat{s}^*_\rho(A_{\rho, 2k})]  
~: =~ \Pr[s_\rho = \hat{s}^*_\rho(A_{\rho, 2k}) \mid {\sigma}^2 = \tilde{\sigma}^2]
 ~\le~ \EXP_{\tilde{\sigma}^2} \left[ \Pr[\sigma^2_\rho = \tilde{\sigma}^2_\rho \mid  A_{\rho, 2k}] \right] \;. 
%\nonumber \\
%& = \EXP[b^k_{\rho}(\sigma^2_{\rho}) \mid \sigma^2_\rho]\;. 
\label{eq:connection_b_s} 
\end{align}
%where the conditional expectation is taken over $A_{\rho, 2k}$ conditioned on $\sigma^2_\rho = \sigma^2_s$
%and for the last equality, we use \eqref{eq:bp-rho-2k}.
Thus 
 an upper bound of the average error rate of an estimator for \eqref{eq:classification_minimize}
 will provide an upper bound of 
$\EXP_{\tilde{\sigma}^2} \left[ \Pr[\sigma^2_\rho \neq \tilde{\sigma}^2_\rho \mid  A_{\rho, 2k}] \right]$ since the optimal estimator minimizes 
the average error rate. Indeed, we have
\begin{align*}
\EXP_{\tilde{\sigma}^2} \left[ \Pr[\sigma^2_\rho \neq \tilde{\sigma}^2_\rho \mid  A_{\rho, 2k}] \right] 
& \le 1- \EXP_{\tilde{\sigma}^2} \left[ \Pr[\sigma^2_\rho = \tilde{\sigma}^2_\rho \mid  A_{\rho, 2k}] \right]\\
&=
\EXP [ \Pr_{ \tilde{\sigma}^2}[s_\rho \neq \hat{s}^*_\rho(A_{\rho, 2k})]  ] \\
& = 
 \min_{\hat{s}_\rho : \text{estimator}} \Pr[s_{\rho} \neq \hat{s}_\rho (A_\rho, 2k) ] \;.
\end{align*}
%$\EXP[b^k_{\rho}(\sigma^2_{\rho}) \mid \sigma^2_\rho]$.
%To prove Lemma~\ref{lem:concentration}, we will analyze the accuracy of
Consider a simple estimator for \eqref{eq:classification_minimize}, denoted by $\hat{s}^{\dag}_\rho$, which uses
only $A_{\rho, 2} \subset A_{\rho, 2k}$ as follows:
\begin{align}
 \hat{s}^{\dag}_\rho (A_{\rho, 2}) 
 = \underset{s'_\rho = 1,\ldots, S}{\arg \min} \left| (\sigma^2_{s'_\rho} + \sigma^2_{\MF avg}(\mathcal{S}) ) - \hat{\sigma}^2(A_{\rho, 2k}) \right|
\end{align}
where we define
\begin{align*}
\sigma^2_{\MF avg}(\mathcal{S}) := \frac{\sum_{s' = 1, \ldots, S} \sigma^2_{s'}}{S(\ell -1)}  \;,
\hat{\sigma}^2(A_{\rho, 2}) := \frac{1}{r} \sum_{i \in N_\rho} \hat{\sigma}^2_i(A_{i})  \;,  \text{and}
\;
\hat{\sigma}^2_i(A_{i}) := \left\|\frac{\sum_{u \in M_{i} \setminus \{\rho\}} A_{iu}}{\ell-1}  - A_{i\rho} \right\|^2_2 \;.
\end{align*}

From now on, we condition $\sigma^2_{\partial^2 \rho}$ additionally to $\sigma^2_\rho$ 
where $\partial^2 \rho$ is the set of $\rho$'s 
grandchildren in $G_{\rho, 2}$. 
For every $i \in N_\rho$, we define
\begin{align*}
a_i := \sum_{u\in M_i \setminus \{\rho\}} \frac{\tilde{\sigma}^2_{u}}{(\ell- 1)^2} + \tilde{\sigma}^2_\rho \; , ~\text{and}~
Z_i := \frac{\sum_{u \in M_{i} \setminus \{\rho\}} A_{iu}}{\ell-1}  - A_{i\rho} \;.
\end{align*}
Since the conditional density of $Z_i$ given $\sigma^2 = \tilde{\sigma}^2$ is $\phi (Z_i \mid 0, a_i)$,
the conditional density of $\|Z_i\|^2_2 /a_i$ is $\chi^2$-distribution
 with degree of freedom $d$. 
In addition, it is not hard to check that $\|Z_i\|^2_2$ is sub-exponential with parameters
$((2a_i \sqrt{d})^2, 2a_i)$ such that for all $|\lambda| < \frac{1}{2a_i}$,
\begin{align*}
  \EXP_{\tilde{\sigma}^2} \left[ \exp\left({\lambda \left(\|Z_i\|^2_2 - d a_i\right)} \right) \right] 
= \left( \frac{e^{-a_i \lambda}}{\sqrt{1-2 a_i \lambda}} \right)^d 
 \le \exp\left( \frac{(2a_i \sqrt{d})^2 \lambda^2 }{2} \right)\;.
\end{align*}
%where we let $\EXP_\rho$ denote the conditional expectation given
 %$\sigma^2_{\rho}$ and $\sigma^2_{\partial^2 \rho}.$
 Thus it follows that for all $|\lambda| \le  \min_{i \in N_\rho}\frac{1}{2a_i}$,
  \begin{align*}
  \EXP_{\tilde{\sigma}^2} \left[\exp\left(\lambda \sum_{i \in N_\rho} \left(\|Z_i\|^2_2 - da_i\right)\right) 
\right] 
  &= \prod_{i\in N_\rho}
    \EXP_{\tilde{\sigma}^2} \left[\exp\left(\lambda\left(\|Z_i\|^2_2 - da_i\right)\right) \right] \\
&    \le \prod_{i\in N_\rho} \exp\left( \frac{(2a_i \sqrt{d})^2 \lambda^2 }{2} \right).
  \end{align*}
From this, it is straightforward to check that $r \hat{\sigma}^2(A_{\rho, 2}) = \sum_{i \in N_\rho} \|Z_i\|^2_2$ is
sub-exponential with parameters $( (6\sigma^2_{\max} \sqrt{d})^2  , 6 \sigma^2_{\max} )$ since
 \begin{align} \label{eq:a_i_bdd}
0 \le a_i \le \sigma^2_{\max}  \left(\frac{\ell +1}{\ell -1} \right)  \le 3 \sigma^2_{\max} \;. 
  \end{align}
 Using Bernstein bound, we have
 \begin{align} 
 \Pr_{\tilde{\sigma}^2} \left[\left|
  \hat{\sigma}^2(A_{\rho, 2})  - \frac{\sum_{i \in N_\rho} a_i}{r} 
  \right| \ge \frac{\varepsilon}{4}  \right] 
  %\nonumber \\
  ~\le~ 2 \exp \left(- \frac{\varepsilon r}{48 \sigma^2_{\max}}\right)  \label{eq:bernstein_sim}
\end{align} 
where we let $\Pr_{\tilde{\sigma}^2}$ denote the conditional probability given ${\sigma}^2 = \tilde{\sigma}^2$.
%where the upper bound in \eqref{eq:bernstein} doesn't depend on $\sigma^2_{\rho}$ and $\sigma^2_{\partial^2 \rho}$.
%Noting that $0 \le \frac{\sum_{i \in N_u} a_i}{r} \le 3 \sigma^2_{\max}$ and $\EXP[a_i]$
Using Hoeffding bound with \eqref{eq:a_i_bdd}, it follows that
\begin{align}
\Pr_{\tilde{\sigma}^2} \left[ \left| \frac{\sum_{i \in N_\rho} a_i }{r} 
- \left( \sigma^2_{\MF avg}(\mathcal{S}) + \sigma^2_\rho \right) \right| \ge \frac{\varepsilon}{4} \right]
 %\nonumber \\&
 ~\le~ 2 \exp\left( -\frac{\varepsilon^2 r}{8 \sigma^2_{\max}}\right) \;.\label{eq:hoeffding_sim}
\end{align}
%Since $\sigma^2_u$ is independently
% drawn from the discrete uniform distribution with support $\mathcal{S}$, 
Combining \eqref{eq:bernstein_sim} and \eqref{eq:hoeffding_sim} and using the union bound,
 it follows that
\begin{align}
\Pr_{\tilde{\sigma}^2} \left[ s_\rho \neq \hat{s}^{\dag}_\rho (A_{\rho, 2}) \right] 
%\nonumber \\
&\le \Pr_{\tilde{\sigma}^2} \left[ \left|   \hat{\sigma}^2(A_{\rho, 2})   -  (\sigma^2_{\MF avg} (\mathcal{S}) + \sigma^2_\rho)  \right| > \frac{\varepsilon}{2}   \right] \nonumber \\
&\le 2 
\left( \exp\left( -\frac{\varepsilon r}{48 \sigma^2_{\max}}\right) +  \exp\left( -\frac{\varepsilon^2 r}{8 \sigma^2_{\max}}\right) \right) \nonumber \\
& \le
4  \exp\left( -\frac{\varepsilon^2 r}{8(8\varepsilon+ 1) \sigma^2_{\max}}\right) \label{eq:final_bdd_sim}
\end{align}
where for the first inequality we use $|\sigma^2_{s'} - \sigma^2_{s''} |  \ge \varepsilon$ for all $1 \le s', s'' \le S$ such that $s' \neq s'$.
% it is straightforward to check that
%\begin{align*}
%\EXP[b^k_\rho (\sigma^2_\rho) \mid \sigma^2_\rho] & \ge 
%\EXP [\Pr[ s = \hat{s}^{\MF sim}(A_{\rho, 2} ) ]  \mid  \sigma^2_{\rho} ] \\
%&\ge1 - 4  \exp\left( -\frac{\varepsilon^2 r}{8(8\varepsilon+ 1) \sigma^2_{\max}}\right) 
%\end{align*}
Hence, noting that $\hat{s}^{\dag}$ cannot outperform the optimal one $\hat{s}^*$ in \eqref{eq:connection_b_s}, 
 this performance guarantee on $\hat{s}^\dag$ in \eqref{eq:final_bdd_sim}
 %implies \eqref{eq:concentration} and 
 completes the proof of Lemma~\ref{lem:concentration}.

\subsection{Proof of Lemma~\ref{lem:correlation-decay}}
\label{sec:correlation-decay-pf}

We begin with the underlying intuition on the proof. As Lemma~\ref{lem:concentration} states,
if there is the strictly positive gap $\varepsilon >0$ between $\sigma^2_{\min}$ and $\sigma^2_{\max}$,
%and $r$ is sufficiently large, 
one can recover $\sigma^2_\rho \in \{\sigma^2_{\min}, \sigma^2_{\max}\}$
with small error using only the local information, i.e., $A_{\rho, 2k}$. % $\{A_{i \rho} : i \in N_{\rho}\}$.
On the other hand, $A \setminus A_{\rho, 2k}$ is far from $\rho$
and is less useful on estimating $\sigma^2_\rho$.
In the proof of Lemma~\ref{lem:correlation-decay}, we quantify the decaying rate of information
w.r.t. $k$.

  We first introduce several notations for convenience. 
  For $u \in W_{\rho, 2k}$, let $T_{u} = (V_{u}, W_{u}, E_{u})$ 
be the subtree rooted from $u$ including all the offsprings of $u$ in tree $G_{\rho,
  2k}$. Note that $T_\rho  =G_{\rho, 2k}$. We let $\partial W_{u} \subset W_{\rho, 2k}$ 
   denote the subset of worker on the leaves in $T_u$
    and let $A_{u}:= \{A_{iv} : (i, v) \in E_{u}\}$.
%     where $\Pr[\sigma^2_{u} = \sigma^2_1] = \Pr[\sigma^2_{u} = \sigma^2_2] = 1$,
%     we pick an arbitrary $\tilde{\sigma}^2 =\{ \tilde{\sigma}^2_u \in \mathcal{S} : u \in W_{\rho, 2k}\}$ and 
%    
     Since each worker $u$'s $\sigma^2_u$ is a binary random variable, we define
     a function $s_u: \mathcal{S} \to \{+1, -1\}$ for the given $\tilde{\sigma}^2$
 as follows:
\begin{align*}
s_u(\sigma^2_u) = 
\begin{cases}
+1 & \text{if}~\sigma^2_u = \tilde{\sigma}^2_u \\
-1 & \text{if}~\sigma^2_u \neq \tilde{\sigma}^2_u \;.
\end{cases}
\end{align*} 
%    We will show \eqref{eq:correlation-decay} condition on $\sigma^2 = \tilde{\sigma}^2$.
%To show \eqref{eq:correlation-decay},
%we will show % that %for $s \in \{ 1, 2\}$, 
It is enough to show
    \begin{align} 
%\EXP_{\tilde{\sigma}^2} \left[ \left|
%\Pr[ \sigma^2_\rho = \tilde{\sigma}^2_\rho \mid A]  - \Pr[  \sigma^2_\rho = \tilde{\sigma}^2_\rho \mid A_{\rho, 2k}] 
% \right| \right]  
\EXP_{\tilde{\sigma}^2} \left[ \left|
\Pr[ s_\rho(\sigma^2_\rho) = +1 \mid A_{\rho, 2k}, \sigma^2_{\partial W_{\rho}}]  - \Pr[ s_\rho(\sigma^2_\rho) = +1  \mid A_{\rho, 2k}] 
 \right| \right]  ~\le~ 2^{-k} \label{eq:wts_correlation}
\end{align}
since for each $u \in W$, $\Pr[\sigma^2_{u} = \sigma^2_1] = \Pr[\sigma^2_{u} = \sigma^2_2] = \frac{1}{2}$.

%where 
%%we let $\EXP_{\tilde{\sigma}^2}$ denote the conditional expectation given $\sigma^2 = \tilde{\sigma}^2$
%and 

To do so, we first define
\begin{align*}
X_{u}  := 2 \Pr[s_u(\sigma^2_{u}) = +1 \mid A_{u}] - 1\;, \quad \text{and} \quad  %\Pr[s_u = -1 \mid A_{u}]  \\
%1 %\Pr[ \sigma^2_u = \tilde{\sigma}^2_u \mid  A_{u}] 
 %-  2\Pr[  \sigma^2_u \neq \tilde{\sigma}^2_u \mid  A_{u}] \\
Y_{u} 
:= 2 \Pr[s_u(\sigma^2_{u}) = +1 \mid A_{u},  , \sigma^2_{\partial W_{\rho}}] - 1 %\Pr[s_u = -1 \mid A_{u}]  \\
%&:= \Pr[s_u = +1 \mid A_{u}, A_{-\rho}] - \Pr[s_u = -1 \mid A_{u}, A_{-\rho}] \;.
%&:=1 % \Pr[ \sigma^2_u = \tilde{\sigma}^2_u \mid  A_{u}, A_{-\rho}] 
% - 2 \Pr[  \sigma^2_u \neq \tilde{\sigma}^2_u \mid  A_{u}, A_{-\rho}] \;.
\end{align*}
%where we denote
%$A_{-\rho}:= A \setminus A_\rho$ so that 
so that we have
\begin{align*}
\left| \Pr[ s_\rho(\sigma^2_\rho) = +1 \mid A_{\rho, 2k},  \sigma^2_{\partial W_{\rho}}]
  - \Pr[ s_\rho(\sigma^2_\rho) = +1 \mid A_{\rho, 2k}] \right| 
 ~=~ \frac{1}{2}\left| X_\rho - Y_\rho \right|.
\end{align*}
 %we define $Y_i$ as follows:
%where we let $\Pr_{\tilde{\sigma}^2}$ denote the conditional probability
% given $\sigma^2 = \tilde{\sigma}^2$.
% Noting that 
%Next, for $0 \le t \le k$, we define $i(t) \in \partial V_{\rho,
%  2k-2t}$ to be a random node chosen uniformly at random so that
%$i(0)$ is a leaf node in $G_{\rho, 2k}$, i.e., $X_{i(0)} = 0$ thus
%$|X_{i(0)} - Y_{i(0)}| \le 1$, and $i(k)$ is the root $\rho$, i.e.,
%$\Delta(\hat{z}_\rho^*(A_{\rho})) - \Delta(\hat{s}_\rho^*(A_{\rho})) =
%\frac{1}{2}\EXP \big[ |Y_{\rho}| - |X_\rho| \big]$. Therefore it is
 %$Y_\rho = 1- 2\Pr[\sigma^2_{\rho} \mid  ]$
Using the above definitions of $X_u$ and $Y_u$
and noting $|X_u - Y_u| \le 2$,
it is enough to show that for given non-leaf worker 
$u \in W_\rho \setminus \partial W_\rho$,
\begin{align} \label{eq:recurrence-wts}
\EXP_{\tilde{\sigma}^2} \left[ {\big|X_{u} - Y_{u} \big|}  \right]
 \le \frac{1}{2 |\partial^2 u|} \sum_{v \in \partial^2 u} \EXP_{\tilde{\sigma}^2} \left[ { \big|X_{v} - Y_{v} \big|}  \right]
\end{align}
where  we let $\partial^2 u $ denote the set of grandchildren of $u$ in $T_u$.
%since using \eqref{eq:recurrence-wts} and  $|X_u - Y_u| \le 2$, we have \eqref{eq:wts_correlation}.
% that 
%$$\EXP_{\tilde{\sigma}^2} [|X_\rho - Y_\rho|] \le 2^{-k} \cdot 2 $$
%and completes the proof with 
%In the above, $\EXP_{\tilde{\sigma}^2}$ quantifies
%the correlation between the root $\rho$ and the information $A_{-\rho}$ of the outside of tree $T_\rho$.

To do so, we study certain recursions describing relations among $X$
and $Y$.
For notational convenience, 
%Recalling the factor form of joint probability of $\sigma^2$ in \eqref{eq:sub_factor_graph}
%and using Bayes' theorem and some calculus,
we define $g^{+}_{iu}$ and $g^{-}_{iu}$ as follows:
\begin{align*}
g^{+}_{iu} (X_{\partial_{u} i}; A_{i}) 
%:= \Pr \left[s_u = +1  \mid A_{i}, X_{\partial_{u} i} \right] \\
&:= \sum_{\sigma'^2_{M_i} \in \mathcal{S}^{M_i} : {\sigma}'^2_u = \tilde{\sigma}^2_u
}  \mathcal{C}_i(A_i, \sigma'^2_{M_i}) \prod_{ v \in \partial_{u} i} \frac{1+ s_v (\sigma'^2_v)  X_{v}}{2}
\\% \label{eq:g_plus_def} \\
g^{-}_{iu} (X_{\partial_{i} u}; A_{i}) 
%:= \Pr \left[s_u = -1   \mid A_{i}, X_{\partial_{u} i} \right] \\
&:= \sum_{\sigma'^2_{M_i} \in \mathcal{S}^{M_i} : {\sigma}'^2_u \neq \tilde{\sigma}^2_u
}  \mathcal{C}_i(A_i, \sigma'^2_{M_i}) \prod_{ v \in \partial_{u} i} \frac{1+ s_v (\sigma'^2_v)  X_{v}}{2}
\;. %\label{eq:g_minus_def}
%&:=
%\EXP_{\mu} \left[\frac{1-A_{iu} \mu_u}{2}
% \prod_{j\in \partial_{i} u}  \frac{1+ A_{ju} \mu_u X_j }{2}  \right]
\end{align*}
where we may omit $A_i$ in the argument of $g^{+}_{iu}$ and $g^{-}_{iu}$
if $A_i$ is clear from the context. Recalling 
the factor form of the joint probability of $\sigma^2$, i.e., 
and using Bayes' theorem with the fact that $\Pr[s_u(\sigma^2_u) = +1 \mid A_u] = \frac{1+X_u}{2}$
and some calculus, it is not hard to check
\begin{align}
g^{+}_{iu} (X_{\partial_{u} i}; A_{i}) 
&\propto \Pr \left[s_u(\sigma^2_u) = +1  \mid A_{i}, X_{\partial_{u} i} \right]  \label{eq:g_plus_prob} \\ %\;, ~\text{and} \lable\\
g^{-}_{iu} (X_{\partial_{i} u}; A_{i}) 
&\propto \Pr \left[s_u(\sigma^2_u) = -1   \mid A_{i}, X_{\partial_{u} i} \right] \;.\label{eq:g_minus_prob}
\end{align}
From the above, it is straightforward to check that
\begin{align} \label{eq:recurrence-X}
X_{u} &= h_u(X_{\partial^2 u}) \cr
&:= \frac{\prod_{i \in \partial u} g^{+}_{iu} (X_{\partial_u i}) -\prod_{i \in \partial u} g^{-}_{iu} (X_{\partial_u i})}
{\prod_{i \in \partial u} g^{+}_{iu} (X_{\partial_u i}) +\prod_{i \in \partial u} g^{-}_{iu} (X_{\partial_u i})}
\end{align}
where we let $\partial u$ be the task set of all the children of
worker $u$ and $\partial_u i$ be the worker set of all the children of $i$ in tree $T_u$.
%i.e., $\partial u : = \{i \in M_{u} : (i,u) \in E_{u}\}$ and
%$\partial_u i : = \{v \in N_{i} : (i,u) \in E_u \}$.
Similarly, we also have
\begin{align*} %\label{eq:recurrence-X}
Y_{u} &= h_u(Y_{\partial^2 u}) \;.
%&:= \frac{\prod_{i \in \partial u} g^{+}_{iu} (X_{\partial_u i}) -\prod_{i \in \partial u} g^{-}_{iu} (X_{\partial_u i})}
%{\prod_{i \in \partial u} g^{+}_{iu} (X_{\partial_u i}) +\prod_{i \in \partial u} g^{-}_{iu} (X_{\partial_u i})}
\end{align*}

For simplicity, we now pick an arbitrary worker $u \in W_{\rho}$ which is neither
the root nor a leaf, i.e., $ u \notin \partial W_{\rho}$ and $u \neq \rho$, so that
$\left|\partial^2 u \right| = (\ell - 1)  (r - 1)$. It is enough to show \eqref{eq:recurrence-wts} for only $u$.
%Then, to
%prove \eqref{eq:recurrence-wts}, it is enough to show that
%\begin{align}
%\EXP^+ \left[ {|X_i - Y_i|}  \right]  
%\le \frac{1}{2 (\ell - 1) (r - 1)} \sum_{j \in \partial^2 i}\EXP^+ \left[ {|X_{j} - Y_j|}  \right]
%\label{eq:final-wts}
%\end{align}
%where we let $\EXP^+$ denote the conditional expectation given $s_j =
%+1$ for all $j$.
To do so, we will use the mean value theorem. We
first obtain a bound on the gradient of $h_u(x)$ for $x \in
[-1,1]^{\partial^2 u}$. Define $g^+_u(x): = \prod_{i \in \partial u}
g^+_{iu}(x_{\partial_{u} i})$
 and $g^-_u(x): = \prod_{i \in \partial u}
g^-_{iu}(x_{\partial_{u} i})$.  Using basic calculus, we
obtain that for $v \in \partial_{u} i$,
\begin{align*}
  \frac{\partial h_u}{\partial x_v}
  &= \frac{\partial }{\partial x_v} \frac{g^+_u -  g^-_u}{g^+_u+ g^-_u}\\
 &= \frac{2}{(g^+_u + g^-_u)^2} \left(
g^-_u \frac{\partial g^+_u}{\partial x_v}  - g^+_u  \frac{\partial g^-_u}{\partial x_v}
\right)
 \\
 &= \frac{2 g^+_u  g^-_u}
{(g^+_u + g^-_u)^2}
\left(
\frac{1}{g^+_{iu}} \frac{\partial g^+_{iu}}{\partial x_v} %
- \frac{1}{g^-_{iu}} \frac{\partial g^-_{iu}}{\partial x_v} %
\right).
% & =
% \begin{cases}
% \frac{2 g^+_i g^-_i }{(g^+_i + g^-_i)^2}
% \left(\frac{\delta}{1+ \delta x_j} +\frac{\delta'}{1-\delta' x_j}  \right)  &\text{if~} \sigma_{ij}  = ++ \\
% -\frac{2 g^+_i g^-_i }{(g^+_i + g^-_i)^2}
% \left( \frac{\delta}{1- \delta x_j}+\frac{\delta'}{1+\delta' x_j} \right)  &\text{if~} \sigma_{ij} = +- \\
% -\frac{2 g^+_i g^-_i }{(g^+_i + g^-_i)^2}
% \left(\frac{\delta}{1+ \delta x_j}+\frac{\delta'}{1-\delta' x_j} \right)  &\text{if~} \sigma_{ij} = -+ \\
% \frac{2 g^+_i g^-_i }{(g^+_i + g^-_i)^2}
% \left(\frac{\delta}{1- \delta x_j}+\frac{\delta'}{1+\delta' x_j} \right)  &\text{if~} \sigma_{ij} = -- \\
% \end{cases} \\
%& =
%\begin{cases}
%~~~\frac{ g^+_i g^-_i }{(g^+_i + g^-_i)^2}
%\frac{4\theta}{(1+ \delta x_j)(1-\delta' x_j)}   &\text{if~} \sigma_{ij}  = ++ \\
%-\frac{ g^+_i g^-_i }{(g^+_i + g^-_i)^2}
%\frac{4\theta}{(1- \delta x_j)(1+\delta' x_j)}  &\text{if~} \sigma_{ij} = +- \\
%-\frac{ g^+_i g^-_i }{(g^+_i + g^-_i)^2}
%\frac{4\theta}{(1+ \delta x_j)(1-\delta' x_j)}  &\text{if~} \sigma_{ij} = -+ \\
%~~~\frac{ g^+_i g^-_i }{(g^+_i + g^-_i)^2}
%\frac{4\theta}{(1- \delta x_j)(1+\delta' x_j)}  &\text{if~} \sigma_{ij} = --.
%\end{cases}
\end{align*}
% In the last equality, we avoid division-by-zero by assuming 
% constant $\mu < 1$ which implies that 
% there exist  constants $f_{\max}$ and $f_{\min}$ with respect to $\ell$
% such that for all $s_{N(u)} \in \{-1, +1\}^{N(u)}$ 
% \begin{align} \label{eq:f_bound}
% 0 < f_{\min} \le f_u(s_{N(u)}) \le  f_{\max}< 1
% \end{align}
%  and thus by simple algebra, it follows that
% $\min \{ g^+_{iu}, g^-_{iu} \} \ge f_{\min} > 0.$
% % \quad \text{and} \quad \max \{ g^+_{iu}, g^-_{iu} \} \le f_{\max} \cdot 2^{2(r-1)}.

Using the fact that for $x \in [-1, 1]^{\partial^2 u}$, both $g^+_u$ and $g^-_u$ are
positive, it is not hard to show that
\begin{align} \label{eq:diff-upper}
 \frac{g^+_u g^-_u}{(g^+_u + g^-_u)^2} \le \sqrt{\frac{g^-_u}{g^+_u}}.
\end{align}
We note here that one can replace ${g^-_u}/{g^+_u}$ with ${g^+_u}/{g^-_u}$
 in the upper bound.
However, in our analysis, we use \eqref{eq:diff-upper} since we will take 
the conditional expectation $\EXP_{\tilde{\sigma}^2}$ 
which takes  the randomness of $A$ generated by the condition $\sigma^2 = \tilde{\sigma}^2$.
Hence $X_u$ and $Y_u$ will be closer to $1$ than $-1$
 thus ${g^-_u}/{g^+_u}$ will be a tighter upper
bound than ${g^+_u}/{g^-_u}$.
% Our analysis covers all the general
%cases because the same analysis with ${g^+_u}/{g^-_i}$ will work with
%$s_i = -1$ conversely.

From \eqref{eq:diff-upper},
it follows that for $x \in [-1, 1]^{\partial^2 u}$ and $v \in \partial_u i$,
\begin{align*}
\left| \frac{\partial h_u}{\partial x_v} (x) \right| 
&\le \left|g'_{uv}(x_{\partial_{u}i})  \right|   \prod_{j \in \partial u \,:\, j \neq i} \sqrt{ \frac{g^-_{ju} (x_{\partial_{u}j})}{g^+_{ju} (x_{\partial_{u}j})} } 
\end{align*}
where we define
\begin{align*}
g'_{uv}(x_{\partial_{u}i}) 
~ :=~
2  \sqrt{ \frac{g^-_{iu} (x_{\partial_{u}i})}{g^+_{iu} (x_{\partial_{u}i})} }
\left(
\frac{1}{g^+_{iu} (x_{\partial_{u}i})} \frac{\partial g^+_{iu}(x_{\partial_{u}i})}{\partial x_v} %
- \frac{1}{g^-_{iu}(x_{\partial_{u}i})} \frac{\partial g^-_{iu}(x_{\partial_{u}i})}{\partial x_v} %
\right).
\end{align*}
Further, we
make the bound independent of 
$x_{\partial_{u} i} \in [-1,1]^{\partial_{u}i}$
 by taking
 the maximum of $|g'_{uv}(x_{\partial_{u}i})|$, i.e.,
\begin{align}
\left| \frac{\partial h_u}{\partial x_v} (x) \right| 
&\le \eta_i(A_i)   \prod_{j \in \partial u \,:\, j \neq i} \sqrt{ \frac{g^-_{ju} (x_{\partial_{u}j};A_j)}{g^+_{ju} (x_{\partial_{u}j}; A_j)} }  \label{eq:diff_bdd}
\end{align}
where we define
\begin{align*}
\eta_{i}(A_i)  := \max_{x_{\partial_{u} i} \in [-1,1]^{\partial_{u}i}}
g'_{uv}(x_{\partial_{u}i};A_i) \;.
\end{align*}
Now we apply the mean value theorem with \eqref{eq:diff_bdd} to bound 
$|X_u - Y_u| = |h_u(X_{\partial^2 u}) - h_u(Y_{\partial^2 u})| $ by
$|X_v - Y_v|$ of $v \in \partial^2 u$.
It follows that for given $X_{\partial^2 u}$ and $Y_{\partial^2 u}$, 
there exists $\lambda' \in [0,1]$
such that
\begin{align}
&|X_u - Y_u| = |h_u(X_{\partial^2 u}) - h_u(Y_{\partial^2 u})|
 \nonumber \\
&\le \sum_{i \in \partial {u}}\sum_{v \in \partial_{u} i} 
|X_{v} - Y_{v}| \nonumber   \left|  \frac{\partial h_u}{\partial x_v}  \left(\lambda' X_{\partial^2 u} + (1-\lambda') Y_{\partial^2 u} \right)\right| 
\nonumber  \\
&\le \sum_{i \in \partial {u}}\sum_{v \in \partial_{u} i} 
|X_{v} - Y_{v}|  \eta_i(A_i) 
\prod_{j  \in \partial u: j \neq i} \max_{\lambda \in [0,1]} \left\{ 
\sqrt{
\frac{g^-_{ju} (\lambda X_{\partial_{u}j}+(1-\lambda) Y_{\partial_{u}j}; A_j)}{g^+_{ju} (\lambda X_{\partial_{u}j}+(1-\lambda) Y_{\partial_{u}j}; A_j)}}
\right\}.
 \label{eq:mean-value}
\end{align}
where for the first and last inequalities, we use the mean value theorem and \eqref{eq:diff_bdd}, respectively. 
We note that each term in an element of the summation in the RHS
of \eqref{eq:mean-value} is independent to each other.
Thus, %from the symmetry among $\{X_{\partial_{u} i}\}_{i \in \partial u}$, 
it follows that 
\begin{align}  
&\EXP_{\tilde{\sigma}^2} \left[ {|X_u - Y_u|} \right]\nonumber \\
& \le  
\sum_{i \in \partial {u}}\sum_{v \in \partial_{u} i} 
\EXP_{\tilde{\sigma}^2}  \left[ {|X_{v} - Y_{v}|} \right]
 \EXP_{\tilde{\sigma}^2}  \left[ {\eta_i}(A_i) \right] 
\prod_{j \in \partial u :j \neq i}
\EXP_{\tilde{\sigma}^2}
\left[ 
\max_{\lambda \in [0,1]} 
\Gamma_{ju}(\lambda X_{\partial_{u}j}+(1-\lambda) Y_{\partial_{u}j})
\right] 
\label{eq:exp_bdd}
\end{align}
where we define function $\Gamma_{iu}(x_{\partial_{u} i}; A_i)$ for given $x_{\partial_{u} i} \in [-1,1]^{\partial_{u} i }$ 
as follows:
\begin{align*}
\Gamma_{iu}(x_{\partial_{u} i})  & := 
\sqrt{\frac{g^-_{iu}(x_{\partial_{u} i}; A_i)}{g^+_{iu}(x_{\partial_{u} i} ; A_i)}} \;.
%& = 
%\cdot 
%\sqrt{
%\frac{g^-_{iu} (x_{\partial_{i} u};A_u)}{g^+_{iu} (x_{\partial_{i} u};A_u)}
%}
\end{align*}
%where we let $\Pr{\text{$^+$}}$ denote the conditional probability measure given that $s_j$ for all $j$.
%where we use the fact that for given $A_u \in \{-1, +1\}^{N_u}$, 
%\begin{align*} % \label{eq:A_u_dist}
%\Pr{\text{$^+$}} [A_u ] = \EXP_{\mu} \left[ \prod_{j \in N_u} \frac{1+A_j \mu_u}{2}\right].
%\end{align*}
Note that the assumption on $\sigma^2_{\min}$ and $\sigma^2_{\max}$, i.e.,
 $\sigma^2_{\min} +\varepsilon \le \sigma^2_{\max}  < \frac{5}{2}  \sigma^2_{\min}$. This implies 
 $$\left(-\frac{1}{\sigma^2_{\max}}+ \frac{1}{\sigma^2_{\min}} \right) \frac{3}{2}
 -\frac{1}{\sigma^2_{\max}} ~<~ 0 \;. $$
 Hence, %we have $g'_{uv} \le \exp(- C \sum_{})$
 for constant $\ell$ and $\varepsilon > 0 $,
  it is not hard to check that there is a finite constant 
 $\eta$ with respect to $r$ such that
% depending on only $\varepsilon$ and $\sigma^2_{\min}$, $\sigma^2_{\max}$
% that bounds
%  $\EXP_{\tilde{\sigma}^2} [\eta_i(A_i)]$ regardless of $\tilde{\sigma}^2$ and $r$ but
%  and , i.e., 
\begin{align} \label{eq:constant_bound}
\max_{\tilde{\sigma}^2} \EXP_{\tilde{\sigma}^2} [\eta_i(A_i)]
~ \le ~
\eta ~<~ \infty
\end{align}
where $\eta$ may depend on only $\varepsilon$, $\sigma^2_{\min}$, and $\sigma^2_{\max}$.
%We note that $\eta$ is independent of $r$.

%In addition, we also find a constant $\psi$ bounding 
In addition, we also obtain a bound of the last term of \eqref{eq:exp_bdd},
when $r$ is sufficiently large, in the following lemma whose proof
is presented in Section~\ref{sec:one-lem-pf}.
\begin{lemma} \label{lem:one}
For given $\tilde{\sigma}^2_{M_i} \in \mathcal{S}^{M_i}$ and $u \in M_i$, 
let $\tilde{\sigma}'^2_{M_i} \in \mathcal{S}^{M_i}$ be 
the set of $\tilde{\sigma}'^2_v$ such that 
$\tilde{\sigma}^2_u \neq \tilde{\sigma}'^2_u$ and 
 $\tilde{\sigma}^2_v = \tilde{\sigma}'^2_v$ for all 
$v \in M_i \setminus \{u\}$. Then, there  exists a constant $C'_{\ell, \varepsilon}$
such that for any $r \ge C'_{\ell, \varepsilon}$,
\begin{align*}
\EXP_{\tilde{\sigma}^2}
\left[ 
\max_{\lambda \in [0,1]} 
\Gamma_{iu}(\lambda X_{\partial_{u}i}+(1-\lambda) Y_{\partial_{u}i})
\right] 
~\le~ 
1-\frac{\Delta_{\min}}{2}
~<~ 1,
\end{align*}
where we let $\Delta_{\min}$ be the square of the minimum Hellinger distance between the conditional densities of $A_i$ given two different $\sigma'^2_{M_i}$ and $\sigma''^2_{M_i}$, i.e., 
\begin{align}
\Delta_{\min} := \min_{\sigma^2_{M_i}, \sigma'^2_{M_i} \in \mathcal{S}^{M_i} : \sigma^2_{v} \neq \sigma'^2_{v} ~\exists v \in M_i} 
H^2(f_{A_i \mid \sigma'^2_{M_i} } , f_{A_i \mid \sigma''^2_{M_i} }) 
~>~ 0 \;.\nonumber
\end{align}
\end{lemma}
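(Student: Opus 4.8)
The plan is to read $\Gamma_{iu}=\sqrt{g^-_{iu}/g^+_{iu}}$ as a comparison of two Gaussian mixtures and reduce the claim to a squared-Hellinger estimate. First I would note that the belief weight $\prod_{v\in\partial_u i}\tfrac{1+s_v(\sigma'^2_v)x_v}{2}$ does not involve worker $u$; writing $c$ for a configuration of $\sigma^2_{\partial_u i}$, $w_c(x)$ for that product, and $f^+_c,f^-_c$ for the conditional density $f_{A_i\mid\sigma^2_{M_i}}=\mathcal C_i(A_i,\cdot)$ with $\sigma^2_u$ equal to, respectively different from, $\tilde\sigma^2_u$ (and $\sigma^2_{\partial_u i}=c$), one gets $g^+_{iu}(x)=\sum_c w_c(x)f^+_c$ and $g^-_{iu}(x)=\sum_c w_c(x)f^-_c$, two mixtures with identical weights ($\sum_c w_c=1$) differing only by flipping $u$'s variance. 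Since $g^+_{iu},g^-_{iu}$ are multilinear and strictly positive on $[-1,1]^{\partial_u i}$, the ratio is coordinatewise monotone (a Möbius map with no pole on the cube), so the value of $\max_\lambda\Gamma_{iu}(\lambda X+(1-\lambda)Y)$ along the segment is governed by $\Gamma_{iu}$ on the cube, with the relevant maximiser pulled toward the informative vertex $x=\mathbf 1$ once the beliefs are informative.

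Second I would establish the core bound for a fixed informative $x$. Bounding $g^+_{iu}(x)\ge w_{c^\star}(x)f^+_{c^\star}$ at the true configuration $c^\star=\tilde\sigma^2_{\partial_u i}$ (so that $f^+_{c^\star}=f_{A_i\mid\tilde\sigma^2_{M_i}}$ is exactly the sampling density under $\EXP_{\tilde\sigma^2}$) and using $\sqrt{\sum_c a_c}\le\sum_c\sqrt{a_c}$, a Cauchy--Schwarz step gives $\EXP_{\tilde\sigma^2}[\Gamma_{iu}(x)]\le w_{c^\star}(x)^{-1/2}\sum_c\sqrt{w_c(x)}\,\bigl(1-H^2(f^-_c,f_{A_i\mid\tilde\sigma^2_{M_i}})\bigr)$. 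Every density $f^-_c$ differs from the truth in worker $u$, hence $H^2(f^-_c,f_{A_i\mid\tilde\sigma^2_{M_i}})\ge\Delta_{\min}$; I would verify $\Delta_{\min}>0$ from $\mathcal S=\{\sigma^2_{\min},\sigma^2_{\max}\}$ with $\sigma^2_{\min}+\varepsilon\le\sigma^2_{\max}$, which makes these conditional Gaussian mixtures genuinely distinct, and use $\sigma^2_{\max}\le 2\sigma^2_{\min}$ to keep the relevant likelihood-ratio moments finite. At $x=\mathbf 1$ one has $w_{c^\star}=1$ and $\sum_c\sqrt{w_c}=1$, so this collapses to the clean affinity bound $1-\Delta_{\min}$.

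Finally, the hypothesis $r\ge C'_{\ell,\varepsilon}$ enters to confine the segment to that informative region. Applying Lemma~\ref{lem:concentration} to the subtrees hanging below the children $v\in\partial_u i$, each $X_v$ and $Y_v$ concentrates at $+1$ with error decaying like $\exp(-c\varepsilon^2 r)$, so $\lambda X+(1-\lambda)Y$ stays in a shrinking neighbourhood of $\mathbf 1$; there the prefactor $w_{c^\star}(x)^{-1/2}\sum_c\sqrt{w_c(x)}$ exceeds $1$ by an amount vanishing as $r\to\infty$, and choosing $C'_{\ell,\varepsilon}$ large makes the full bound at most $1-\tfrac12\Delta_{\min}$. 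The hard part will be exactly this uniform control of $\max_\lambda$ over the belief segment: in the weak-signal regime the marginalisation over the other $\ell-1$ workers can mask worker $u$'s variance and inflate $g^-_{iu}/g^+_{iu}$, so the estimate is not pointwise in $x$ and genuinely relies on the large-$r$ concentration rather than on Hellinger separation alone.
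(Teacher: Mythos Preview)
Your proposal is correct and follows essentially the same three-step skeleton as the paper: use Lemma~\ref{lem:concentration} and Markov's inequality to force $X_{\partial_u i},Y_{\partial_u i}$ into a small box around $\mathbf 1$ for large $r$, identify $\EXP_{\tilde\sigma^2}[\Gamma_{iu}(\mathbf 1)]$ with the Bhattacharyya affinity $1-H^2(f_{A_i\mid \tilde\sigma^2_{M_i}},f_{A_i\mid \tilde\sigma'^2_{M_i}})\le 1-\Delta_{\min}$, and absorb the deviation from $\mathbf 1$ plus the bad event via finite likelihood-ratio moments (guaranteed by $\sigma^2_{\max}\le 2\sigma^2_{\min}$). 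The only cosmetic difference is that the paper handles the perturbation away from $\mathbf 1$ by the mean value theorem with a uniform derivative bound $\eta''$, whereas you carry an explicit multiplicative prefactor $w_{c^\star}(x)^{-1/2}\sum_c\sqrt{w_c(x)}$; your remark about coordinatewise monotonicity of the M\"obius ratio is not actually needed for either argument.
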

Using the above lemma, %and noting that the product in \eqref{eq:exp_bdd},
we can find a sufficiently large constant $C_{\ell, \varepsilon} \ge C'_{\ell, \varepsilon} $ such that if $|\partial u |= r \ge
C_{\ell, \varepsilon}$,
\begin{align*}
\prod_{j \in \partial u :j \neq i}
\EXP_{\tilde{\sigma}^2}
\left[ 
\max_{\lambda \in [0,1]} 
\Gamma_{ju}(\lambda X_{\partial_{u}j}+(1-\lambda) Y_{\partial_{u}j})
\right]
&\le
\eta \left(1-\psi_{\min} \right)^{\frac{{C_{\ell, \varepsilon} - 2 }}{2}}\\
& \le \frac{1}{2(\ell-1)(C_{\ell, \varepsilon}-1)}
  ~\le~ \frac{1}{2(\ell - 1)(r-1) }
\end{align*}
which implies \eqref{eq:recurrence-wts} with \eqref{eq:exp_bdd} and completes the proof of
Lemma~\ref{lem:correlation-decay}.

\subsection{Proof of Lemma~\ref{lem:one}}  
  \label{sec:one-lem-pf}

We first obtain a bound on $X_v$ and $Y_v$ for  $v \in \partial_u i$.
Noting that $v$ is a non-leaf node in $G_{\rho, 2k}$ and $|\partial v| = r -1$,
Lemma~\ref{lem:concentration}
 directly provides
  \begin{align*}
\EXP_{\tilde{\sigma}^2 } \left[ \Pr[\sigma^2_v \neq \tilde{\sigma}^2_v \mid A_{v, 2k}] 
\right] ~=~ \EXP_{\tilde{\sigma}^2 } \left[ \frac{1 - X_v}{2} \right] ~\le~
 4  \exp\left({ -\frac{\varepsilon^2}{8(8\varepsilon+ 1) \sigma^2_{\max}}  (r-1)} \right) \;.
     \end{align*}
Using Markov inequality for $\frac{1-X_v}{2} \ge 0$, 
it is easy to check that for any $\delta > 0$,
\begin{align} \label{eq:markov_X}
\Pr_{\tilde{\sigma}^2} \left[ X_v < 1- \delta \right]
 \le  \frac{8}{\delta}  \exp \left({ -\frac{\varepsilon^2}{8(8\varepsilon+ 1) \sigma^2_{\max}}  (r-1)} \right) \;.
\end{align}
Note that
\begin{align*}
 4  \exp\left({ -\frac{\varepsilon^2}{8(8\varepsilon+ 1) \sigma^2_{\max}}  (r-1)} \right)
&~ \ge ~
\EXP_{\tilde{\sigma}^2 } \left[ \Pr[\sigma^2_v \neq \tilde{\sigma}^2_v \mid A_{v}] 
\right] \\
& ~\ge~ \EXP_{\tilde{\sigma}^2 } \left[ \Pr[\sigma^2_v \neq \tilde{\sigma}^2_v \mid A_{v}, A_{-v}] 
\right] ~=~ \EXP_{\tilde{\sigma}^2 } \left[ \frac{1 - Y_v}{2} \right]
 \;.
\end{align*}
Hence, %similarly to \eqref{eq:markov_X}, 
we have the same bound in \eqref{eq:markov_X} for $Y_v$, i.e.,
\begin{align*} %\label{eq:markov_Y}
\Pr_{\tilde{\sigma}^2} \left[ Y_v < 1- \delta \right]
 ~\le~  \frac{8}{\delta}  \exp \left({ -\frac{\varepsilon^2}{8(8\varepsilon+ 1) \sigma^2_{\max}}  (r-1)} \right) \;.
\end{align*}

Using the assumption that $\sigma^2_{\min} +\varepsilon \le \sigma^2_{\max}  < \frac{5}{2}  \sigma^2_{\min}$,
similarly to \eqref{eq:constant_bound}, we can find finite constants
$\eta'$ and $\eta''$ with respect to $r$ such that for all $x \in [0,1]^{\partial_{u} i}$,
\begin{align*}
 \max_{\tilde{\sigma}'^2} \EXP_{\tilde{\sigma}'^2}\left[\left|  \Gamma_{iu}(x) \right| \right]
  ~\le~ \eta' \;, \quad \text{and} \quad
   \max_{\tilde{\sigma}'^2}
\EXP_{\tilde{\sigma}^2} \left[ \left| \frac{\partial \Gamma_{iu}(x)}{\partial x_v} \right| \right] 
~\le~ \eta''.
\end{align*}
%Let $\varepsilon(\ell) := \exp\left(- \frac{(\ell-1)\mu^2}{4} \right) \le \exp\left(- \frac{(|\partial j|-1)\mu^2}{4} \right).$
Then, it follows that for given $\delta > 0$, 
\begin{align}
&\EXP_{\tilde{\sigma}^2} \left[
 \max_{\lambda \in [0, 1]} 
\Gamma_{iu}(\lambda X_{\partial_{u}i}+(1-\lambda) Y_{\partial_{u}i})
\right] \nonumber \\
&\le
\left(1 - 
\Pr_{\tilde{\sigma}^2} 
\left[
X_{v} > 1-\delta \text{~and~} Y_v > 1-\delta \;, ~\forall v \in \partial_{u}i
\right]
 \right) 
 \max_{x \in [-1,1]^{\partial_{u} i}} \EXP_{\tilde{\sigma}^2} \left[\Gamma_{iu}(x)\right] + 
%\Pr_{\tilde{\sigma}^2} 
%\left[
%X_{v} > 1-\delta \text{~and~} Y_v > 1-\delta \;, ~\forall v \in \partial_{u}i
%\right] 
 \max_{x \in [1-\delta,1]^{\partial_{u} i}} \EXP_{\tilde{\sigma}^2}[ \Gamma_{iu}(x)] \nonumber \\
&{\le} \label{eq:gamma_a}
\left( \sum_{v \in \partial_{u}i}
\Pr_{\tilde{\sigma}^2}[X_{v} \le 1-\delta] + 
\Pr_{\tilde{\sigma}^2}[Y_{v} \le 1-\delta]
 \right) 
  \max_{x \in [-1,1]^{\partial_{i} u}}\EXP_{\tilde{\sigma}^2}[ \Gamma_{iu}(x)]
+  \max_{x \in [1-\varepsilon,1]^{\partial_{i} u}}\EXP_{\tilde{\sigma}^2}[ \Gamma_{iu}(x)]  \\
&{\le} \label{eq:gamma_b}
r  \eta'  \frac{8}{\delta}  \exp \left({ -\frac{\varepsilon^2}{8(8\varepsilon+ 1) \sigma^2_{\max}}  (r-1)} \right)
+ \max_{x \in [1-\delta,1]^{\partial_{i} u}} \EXP_{\tilde{\sigma}^2}[\Gamma_{iu}(x) ] \\ 
&{\le} \label{eq:gamma_c}
r  \eta'  \frac{8}{\delta}  \exp \left({ -\frac{\varepsilon^2}{8(8\varepsilon+ 1) \sigma^2_{\max}}  (r-1)} \right)
+ \delta \eta'' +  \EXP_{\tilde{\sigma}^2}[\Gamma_{iu}(1_{\partial_{u} i}) ] 
\end{align}
where  for \eqref{eq:gamma_a}, \eqref{eq:gamma_b}, and \eqref{eq:gamma_c}, we use the union bound, \eqref{eq:markov_X}, and the mean value theorem, respectively.
We will show there exists constant $\Delta$ such that
$\EXP_{\tilde{\sigma}^2} [\Gamma_{iu}(1_{\partial_{u} i})] \le {1-\Delta}$, since
the first term in \eqref{eq:gamma_c} is exponentially 
decreasing with respect to $r$ thus there exists a constant $C'_{\ell, \varepsilon}$ such that
for $r \ge C'_{\ell, \varepsilon}$,
\begin{align*}
\EXP_{\tilde{\sigma}^2} \left[
 \max_{\lambda \in [0, 1]} 
\Gamma_{iu}(\lambda X_{\partial_{u}i}+(1-\lambda) Y_{\partial_{u}i})
\right] 
~\le~ 1 - \frac{\Delta}{2} \;.
\end{align*}
Recalling the property of $g^+_{iu}$ and $g^-_{iu}$ in \eqref{eq:g_plus_prob} and \eqref{eq:g_minus_prob},
it directly follows that 
\begin{align*}
&\EXP_{\tilde{\sigma}^2}  [ \Gamma_{iu}(1_{\partial_{u} i}) ]\\
&= \int_{\mathbb{R}^{d \times M_i}} f_{A_i} [x_i \mid  \sigma^2_{M_i}  = \tilde{\sigma}^2_{M_i} ] 
\sqrt{\frac{g^-_{iu}(1_{\partial_{u}i} ; A_i = x_i)}{g^+_{iu}(1_{\partial_{u}i} ; A_i= x_i)}} ~d x_i
\\
&
= \int_{\mathbb{R}^{d \times M_i}} f_{A_i} [x_i \mid   \sigma^2_{M_i}  = \tilde{\sigma}^2_{M_i} ] 
\sqrt{\frac{f_{A_i} [x_i \mid   \sigma^2_{M_i \setminus \{u\}}  = \tilde{\sigma}^2_{M_i \setminus \{u\}},  \sigma^2_{u}  = \tilde{\sigma}'^2_{u} ] }
{f_{A_i} [x_i \mid   \sigma^2_{M_i}  = \tilde{\sigma}^2_{M_i} ] }} ~d x_i
\\
&
= \int_{\mathbb{R}^{d \times M_i}} \sqrt{f_{A_i} [x_i \mid   \sigma^2_{M_i}  = \tilde{\sigma}^2_{M_i} ] }
\sqrt{f_{A_i} [x_i \mid   \sigma^2_{M_i \setminus \{u\}}  = \tilde{\sigma}^2_{M_i \setminus \{u\}},  \sigma^2_{u}  = \tilde{\sigma}'^2_{u} ] } ~d x_i \;.
\end{align*}
 For notational simplicity, we define 
\begin{align*}
\Delta(\tilde{\sigma}^2_{M_i}, \tilde{\sigma}'^2_{M_i}) := \frac{1}{2} -
\frac{1}{2} 
\int_{\mathbb{R}^{d \times M_i}} \sqrt{f_{A_i} [x_i \mid   \sigma^2_{M_i}  = \tilde{\sigma}^2_{M_i} ] }
\sqrt{f_{A_i} [x_i \mid   \sigma^2_{M_i \setminus \{u\}}  = \tilde{\sigma}^2_{M_i \setminus \{u\}},  \sigma^2_{u}  = \tilde{\sigma}'^2_{u} ] } ~d x_i  \;.
\end{align*}
Then $2  \Delta(\tilde{\sigma}^2_{M_i}, \tilde{\sigma}'^2_{M_i}) $ is equal to 
the square of the Hellinger distance $H$ between the conditional densities of $A_i$ given 
$ \sigma^2_{M_i}  = \tilde{\sigma}^2_{M_i}$ and $ \sigma^2_{M_i}  = \tilde{\sigma}'^2_{M_i}$, i.e.,
$$\Delta(\tilde{\sigma}^2_{M_i}, \tilde{\sigma}'^2_{M_i}) ~=~ {H^2( f_{A_i \mid \tilde{\sigma}^2_{M_i}}, f_{A_i \mid \tilde{\sigma}'^2_{M_i}})}~ >~ 0 \;. $$
This implies $\Delta(\tilde{\sigma}^2_{M_i}, \tilde{\sigma}'^2_{M_i}) > 0$ and taking the minimum $\Delta$, we complete the proof of Lemma~\ref{lem:one}.

%\section{Other }

 \subsection{Proof of inequality \eqref{eq:MSE_BP_bdd}}%{Detailed Steps for \eqref{eq:MSE_BP_bdd}}
 \label{sec:MSE_BP_bdd_pf}
 Noting that $\hat{\mu}^{{\MF \bayIter} (k)}_i (A)$ is the weighted sum of $\bar{\mu}_i(A_i, \sigma'^2_{M_i} )$ as described in 
 \eqref{eq:BP-estimator}, we can rewrite $\| \hat{\mu}^{{\MF \bayIter} (k)}_i (A)  - \mu_i \|^2_2$ as follows:
\begin{align*}
\ \| \hat{\mu}^{{\MF \bayIter} (k)}_i (A)  - \mu_i \|^2_2  =\sum_{\sigma'^2_{M_i} } 
\sum_{\sigma''^2_{M_i} }
 \big( \bar{\mu}_i(A_i, \sigma'^2_{M_i} ) - \mu_i \big)^\top  
 \big( \bar{\mu}_i(A_i, \sigma''^2_{M_i} ) - \mu_i\big) 
% & \qquad \qquad \quad \times 
 b^k_{i} (\sigma'^2_{M_i})  b^k_{i} (\sigma''^2_{M_i}) \; .
% \prod_{u \in M_{i}} b^k_u (\sigma'^2_u) \cdot b^k_u(\sigma''^2_u)  \; .
\end{align*}
Hence, using Cauchy-Schwarz inequality 
for random variables for the summation over 
all $\sigma'^2_{M_i}, \sigma''^2_{M_i} \in \mathcal{S}^{\ell}$ except $\sigma'^2_{M_i} \neq \tilde{\sigma}^2_{M_i}$, it follows that
%for $\tilde{\sigma}^2 \in \mathcal{S}^W$,
\begin{align}
\EXP_{\tilde{\sigma}^2 }  \left[ \| \hat{\mu}^{{\MF \bayIter} (k)}_i (A)  - \mu_i \|^2_2  \right]
 \le & \EXP_{\tilde{\sigma}^2}  \left[\left\| \left( \bar{\mu}_i\left(A_i, \tilde{\sigma}^2_{M_i} \right) - \mu_i \right) \right\|^2_2 
\right] \nonumber  + \sum_{\sigma''^2_{M_i} }
\sum_{\sigma'^2_{M_i} \neq \tilde{\sigma}^2_{M_i}} 
 \sqrt{
\EXP_{\tilde{\sigma}^2}  \left[ \left(b^k_{i} (\sigma'^2_{M_i}) b^k_{i}(\sigma''^2_{M_i})  \right)^2 \right]} 
 \nonumber \\
&   \quad   \times 
\sqrt{   \EXP_{\tilde{\sigma}^2} \left[ 
\left(
\big( \bar{\mu}_i(A_i, \sigma'^2_{M_i} ) - \mu_i \big)^\top  
 \big( \bar{\mu}_i(A_i, \sigma''^2_{M_i} ) - \mu_i\big)
\right)^2
 \right] } \; . \label{eq:BP_minus_mu_i}
 \end{align}
% where for the last inequality, 
%we use the fact that $0 \le b^k_u (\sigma^2_u) \le 1$.
% it is straightforward to check
%that
%\begin{align*}
%&\sum_{\sigma'^2_{M_i} \neq \tilde{\sigma}^2_{M_i}} 
%%\sum_{\sigma''^2_{M_i} }
%\sqrt{
%\EXP_{\tilde{\sigma}^2} \left[  \prod_{u \in M_{i}} \left(b^k_u (\sigma'^2_u)  \right)^2 \right]} \\
%&\le
%\sum_{u \in M_i}  \sum_{\sigma'^2_{u} \neq \tilde{\sigma}^2_{u}} 
%\sqrt{
%\EXP_{\tilde{\sigma}^2} \left[   b^k_u (\sigma'^2_u)    \right]}
%\end{align*}
Noting that the conditional density of $X = ( \bar{\mu}_i(A_i, \tilde{\sigma}^2_{M_i} ) - \mu_i )$ 
given $\sigma^2 = \tilde{\sigma}^2$ is identical to
$\phi (X  \mid 0, \bar{\sigma}^2_i (\tilde{\sigma}^2_{M_i})),$
%the $d$-dimensional spherical Gaussian with zero mean and variance $\bar{\sigma}^2_i (\tilde{\sigma}^2_{M_i})$.
%the first term is obtained by $d\EXP[\bar{\sigma}^2_i (\tilde{\sigma}^2_{M_i})]$.
it follows that
\begin{align}
\EXP_{\tilde{\sigma}^2}  \left[\left\| \left( \bar{\mu}_i\left(A_i, \tilde{\sigma}^2_{M_i} \right) - \mu_i \right) \right\|^2_2 
\right]
 =d  \bar{\sigma}^2_i (\tilde{\sigma}^2_{M_i}) \; . 
%  d\EXP[\bar{\sigma}^2_i (\tilde{\sigma}^2_{M_i})]  \; . 
 \label{eq:correct_sigma}
\end{align}

To complete the proof of \eqref{eq:MSE_BP_bdd},
we hence obtain an upper bound of the last term in the RHS of \eqref{eq:BP_minus_mu_i}.
For any $\sigma'^2_{M_i}  \in \mathcal{S}^{M_i}$,
the conditional density of the random vector 
$\bar{\mu}_i (A_i, \sigma'^2_{M_i} ) -\mu_i $ conditioned on $\sigma^2 = \tilde{\sigma}^2$
is identical to 
$$f_{\bar{\mu}_i (A_i, \sigma'^2_{M_i} ) -\mu_i}[ x \mid \sigma^2 = \tilde{\sigma}^2] 
= \phi \left(x ~\bigg|~ 0, {\left(\bar{\sigma}^2_i( \sigma'^2_{M_i}) \right)^2}
 {\left({\frac{1}{\tau^2} +\sum_{u \in M_i}\frac{\tilde{\sigma}^2_u}{\sigma'^4_u}}\right) } \right)\;.$$
% can be interpreted as the Gaussian with zero mean and covariance matrix ${\left(\bar{\sigma}^2_i( \sigma'^2_{M_i}) \right)^2}/\left({\frac{1}{\tau^2} +\sum_{u \in M_i}\frac{\sigma^2_u}{\sigma'^4_u}}\right) $,
Using this with some linear algebra, 
it is straightforward to check that for all $\sigma'^2_{M_i}  \in \mathcal{S}^{M_i}$,
\begin{align*}
\EXP_{\tilde{\sigma}^2} \left[\|\bar{\mu}_i (A_i, \sigma'^2_{M_i} ) -\mu_i \|^4_2  \right] 
&=d(2+d)  
  \left( {\left(\bar{\sigma}^2_i( \sigma'^2_{M_i}) \right)^2}
  \left({\frac{1}{\tau^2} +\sum_{u \in M_i} \frac{\tilde{\sigma}^2_u}{\sigma'^4_u} } \right) \right)^2  \\
 &=d(2+d)  
  \left( \frac{\frac{1}{\tau^2} + \sum_{u \in M_i} \frac{\tilde{\sigma}^2_u}{\sigma'^4_u} }
  {
  \left(\frac{1}{\tau^2} +\sum_{u \in M_i} \frac{1}{\sigma'^2_u} \right)^2
  }
   \right)^2  \\
& \le  
d(2+d)  
  \left( \frac{\frac{1}{\tau^2} + \ell \frac{{\sigma}^2_{\max}}{\sigma^4_{\min}} }
  {
  \left(\frac{1}{\tau^2} +\ell \frac{1}{\sigma^2_{\min}} \right)^2
  }
   \right)^2 
%\frac{d\sigma^6_{\max}}{\ell^3 \sigma^2_{\min}} \left(2  + \frac{d\sigma^6_{\max}}{\ell^3 \sigma^2_{\min}} \right) \; ,
\end{align*}
where for the last inequality, we use the fact
 that 
 $|M_i| = \ell$ and 
 $\sigma^2_{\min} \le \sigma^2_s \le \sigma^2_{\max}$ for any $1 \le s \le S$.
Using Cauchy-Schwarz inequality with the above bound,
it is not hard to check that
 for any $\sigma'^2_{M_i}, \sigma''^2_{M_i}  \in \mathcal{S}^{M_i}$,
\begin{align}
&\EXP_{\tilde{\sigma}^2} \left[ 
\left(
\big( \bar{\mu}_i(A_i, \sigma'^2_{M_i} ) - \mu_i \big)^\top  
 \big( \bar{\mu}_i(A_i, \sigma''^2_{M_i} ) - \mu_i\big)
\right)^2
 \right] \nonumber \\
&~\le~ \EXP_{\tilde{\sigma}^2} \left[ 
\big\| \bar{\mu}_i(A_i, \sigma'^2_{M_i} ) - \mu_i \big\|^2_2  
\big\| \bar{\mu}_i(A_i, \sigma''^2_{M_i} ) - \mu_i \big\|^2_2
 \right] \nonumber \\
 &~\le~ \sqrt{\EXP_{\tilde{\sigma}^2} \left[ 
\big\| \bar{\mu}_i(A_i, \sigma'^2_{M_i} ) - \mu_i \big\|^4_2
 \right]}
 \sqrt{\EXP_{\tilde{\sigma}^2} \left[ 
\big\| \bar{\mu}_i(A_i, \sigma''^2_{M_i} ) - \mu_i \big\|^4_2 
 \right]}
  \nonumber \\
&~\le ~
%8  \frac{d\sigma^6_2}{\ell^3 \sigma^2_1} \left(2  + \frac{d\sigma^6_2}{\ell^3 \sigma^2_1} \right) 
d(2+d)  
  \left( \frac{\frac{1}{\tau^2} + \ell \frac{{\sigma}^2_{\max}}{\sigma^4_{\min}} }
  {
  \left(\frac{1}{\tau^2} +\ell \frac{1}{\sigma^2_{\min}} \right)^2
  }
   \right)^2 
\; . \label{eq:inner_bdd}
\end{align}
Combining
\eqref{eq:BP_minus_mu_i}, 
\eqref{eq:correct_sigma}
and \eqref{eq:inner_bdd},
we have
 \begin{align}
 \EXP_{\tilde{\sigma}^2 }  \left[ \| \hat{\mu}^{{\MF \bayIter} (k)}_i (A)  - \mu_i \|^2_2  \right] 
& \le 
 d  \bar{\sigma}^2_i (\tilde{\sigma}^2_{M_i})
%  d\EXP[\bar{\sigma}^2_i (\tilde{\sigma}^2_{M_i})] 
 %\nonumber \\&
 + 
\sqrt{ d(2+d) }  
  \left( \frac{\frac{1}{\tau^2} + \ell \frac{{\sigma}^2_{\max}}{\sigma^4_{\min}} }
  {
  \left(\frac{1}{\tau^2} +\ell \frac{1}{\sigma^2_{\min}} \right)^2
  }
   \right) 
 \sum_{\sigma''^2_{M_i}}
  \sum_{\sigma'^2_{M_i} \neq \tilde{\sigma}^2_{M_i}}
\sqrt{
\EXP_{\tilde{\sigma}^2} 
\left[ \left( b^k_i (\sigma'^2_{M_i})   b^k_i (\sigma''^2_{M_i}) \right)^2 \right]} \;.
\label{eq:MSE_BP_bdd_pre}
\end{align}
Using Cauchy-Schwarz inequality and Jensen's inequality sequentially,
it follows that
\begin{align*}
 \sum_{\sigma''^2_{M_i}}
  \sum_{\sigma'^2_{M_i} \neq \tilde{\sigma}^2_{M_i}}
\left({
\EXP_{\tilde{\sigma}^2} 
\left[ \left( b^k_i (\sigma'^2_{M_i})   b^k_i (\sigma''^2_{M_i}) \right)^2 \right]} 
\right)^{1/2}
&\le 
 \sum_{\sigma''^2_{M_i}}
  \sum_{\sigma'^2_{M_i} \neq \tilde{\sigma}^2_{M_i}}
\left(
\EXP_{\tilde{\sigma}^2} 
\left[ \left( b^k_i (\sigma'^2_{M_i})   \right)^4 \right]
\right)^{1/4}
\left(
\EXP_{\tilde{\sigma}^2} 
\left[ \left( b^k_i (\sigma''^2_{M_i}) \right)^4 \right]
\right)^{1/4} \\
&=
\left(
  \sum_{\sigma'^2_{M_i} \neq \tilde{\sigma}^2_{M_i}}
\left(
\EXP_{\tilde{\sigma}^2} 
\left[ \left( b^k_i (\sigma'^2_{M_i})   \right)^4 \right]
\right)^{1/4}
\right)
\left(
 \sum_{\sigma''^2_{M_i}}
\left(
\EXP_{\tilde{\sigma}^2} 
\left[ \left( b^k_i (\sigma''^2_{M_i}) \right)^4 \right]
\right)^{1/4} 
\right) 
\\
&\le
\left(
  \sum_{\sigma'^2_{M_i} \neq \tilde{\sigma}^2_{M_i}}
\EXP_{\tilde{\sigma}^2} 
\left[ \left( b^k_i (\sigma'^2_{M_i})   \right)^4 \right]
\right)^{1/4}
\left(
 \sum_{\sigma''^2_{M_i}}
\EXP_{\tilde{\sigma}^2} 
\left[ \left( b^k_i (\sigma''^2_{M_i}) \right)^4 \right]
\right)^{1/4} \\
&\le
\left(
  \sum_{\sigma'^2_{M_i} \neq \tilde{\sigma}^2_{M_i}}
\EXP_{\tilde{\sigma}^2} 
\left[  b^k_i (\sigma'^2_{M_i})    \right]
\right)^{1/4}
\left(
 \sum_{\sigma''^2_{M_i}}
\EXP_{\tilde{\sigma}^2} 
\left[  b^k_i (\sigma''^2_{M_i}) \right]
\right)^{1/4} \\
& = 
\left(
1-
\EXP_{\tilde{\sigma}^2} 
\left[  b^k_i (\tilde{\sigma}^2_{M_i})    \right]
\right)^{1/4}
\; ,
\end{align*}
where for the last inequality and the last equality, we use 
the fact that $b^k_i$ is normalized, i.e.,
$0 \le b^k_i(\sigma^2_{M_i}) \le 1$ and 
 $\sum_{\sigma^2_{M_i}} b^k_i(\sigma^2_{M_i}) = 1$.
This completes the proof of \eqref{eq:MSE_BP_bdd} with \eqref{eq:MSE_BP_bdd_pre}.

%where for the last inequality, we use \eqref{eq:inner_bdd}.
%with $\sigma^2_{\min} \le \sigma^2_1 \le ... \le \sigma^2_S \le \sigma^2_{\max}$.
%where the last term In addition, using the fact that $0 \le b^k_u (\sigma^2_u) \le 1$, it is straightforward to check
%that
%\begin{align*}
%&\sum_{\sigma'^2_{M_i} \neq \tilde{\sigma}^2_{M_i}} 
%%\sum_{\sigma''^2_{M_i} }
%\sqrt{
%\EXP_{\tilde{\sigma}^2} \left[  \prod_{u \in M_{i}} \left(b^k_u (\sigma'^2_u)  \right)^2 \right]} \\
%&\le
%\sum_{u \in M_i}  \sum_{\sigma'^2_{u} \neq \tilde{\sigma}^2_{u}} 
%\sqrt{
%\EXP_{\tilde{\sigma}^2} \left[   b^k_u (\sigma'^2_u)    \right]}
%\end{align*}

\subsection{Proof of Inequality \eqref{eq:MSE_diff_bound}}  
\label{sec:MSE_diff_bound_pf}

We start with rewriting the difference between MSE's of $\hat{\mu}^{{\MF ora}(k)}_\tau (A)$ and $\hat{\mu}^{{\MF {\bayIter}} (k)}_\tau (A)$ for $\tau \in V$ as follows:
%$\| \hat{\mu}^{*}_i (A)  - \mu_i \|^2_2 - \| \hat{\mu}^{{\MF BP} (k)}_i (A)  - \mu_i \|^2_2$
%as follows:
\begin{align*}
& \| \hat{\mu}^{{\MF ora}(k)}_\tau (A)  - \mu_\tau \|^2_2 - \| \hat{\mu}^{{\MF \bayIter} (k)}_\tau (A)  - \mu_\tau \|^2_2 \\
& = \sum_{\sigma'^2_{M_\tau}, \sigma''^2_{M_\tau}  \in \mathcal{S}^{\ell}}
\left(
 \Pr[\sigma^2_{M_\tau} = \sigma'^2_{M_\tau} \mid A, \sigma^2_{\partial W_{\tau, 2k+1}}]
 \Pr[\sigma^2_{M_\tau} = \sigma''^2_{M_\tau} \mid A, \sigma^2_{\partial W_{\tau, 2k+1}}]
  - b^k_\tau (\sigma'^2_{M_\tau}) b^k_\tau (\sigma''^2_{M_\tau}) \right)
\\ 
&  \quad\times 
\left( \bar{\mu}_\tau \left(A_\tau, \sigma'^2_{M_\tau} \right) - \mu_\tau \right)^\top  
 \left( \bar{\mu}_\tau \left(A_\tau, \sigma''^2_{M_\tau} \right) - \mu_\tau \right)  \;.
% & \le \sum_{\sigma'^2_{M_i}, \sigma''^2_{M_i}  \in \mathcal{S}^{\ell}}
%2
%\left( \Pr[\sigma^2_{M_i} = \sigma'^2_{M_i} \mid A] - b^k_i (\sigma'^2_{M_i}) \right)
%\left( \bar{\mu}_i\left(A_i, \sigma'^2_{M_i} \right) - \mu_i\right)^\top  
% \left( \bar{\mu}_i \left(A_i, \sigma''^2_{M_i} \right) - \mu_i \right) 
\end{align*}
%where for the last inequality, we use the fact that
%$b^k_i$ is normalized, i.e.,
%$0 \le b^k_i(\sigma^2_{M_i}) \le 1$.

Then, using Cauchy-Schwarz inequality 
for random variables $X$ and $Y$, i.e., $|\EXP[XY]| \le \sqrt{\EXP[X^2] \EXP[Y^2]}$,
we have 
\begin{align*}
&\EXP\left[ \big( \text{MSE} (\hat{\mu}^{{\MF ora}(k)}_\tau (A)) - \text{MSE} (\hat{\mu}^{{\MF \bayIter} (k)}_\tau (A))  \big) \right] \nonumber \\
& \le \sum_{\sigma^2_{M_\tau}, \sigma'^2_{M_\tau}  \in \mathcal{S}^{\ell}}
%\underbrace{
\sqrt{
 \EXP \bigg[ \Big(
\Pr[\sigma^2_{M_\tau}= \sigma'^2_{ M_{\tau}} \mid A, \sigma^2_{\partial W_{\tau, 2k+1}}] 
\Pr[\sigma^2_{M_\tau} = \sigma''^2_{ M_{\tau}} \mid A, \sigma^2_{\partial W_{\tau, 2k+1}}] 
- b^k_\tau (\sigma'^2_{ M_{\tau}}) 
b^k_\tau (\sigma''^2_{ M_{\tau}}) 
 \Big)^2 \bigg]} 
\\
&\quad \times \sqrt{  
\EXP \left[ 
\left(
\big( \bar{\mu}_\tau(A_\tau, \sigma^2_{M_\tau} ) - \mu_i \big)^\top  
 \big( \bar{\mu}_\tau(A_\tau, \sigma'^2_{M_\tau} ) - \mu_i\big)
\right)^2
 \right]
  }
%  }_{(b)}
  % \label{eq:MSE_bdd}
\end{align*}
which completes the proof of \eqref{eq:MSE_diff_bound}
with \eqref{eq:inner_bdd}.

\end{document}